\title{VC dimensions of group convolutional neural networks}
\author{Philipp Christian Petersen\thanks{University of Vienna,
    Faculty of Mathematics and Research Network Data Science @ Uni Vienna, Kolingasse 14-16,
    1090 Wien,
    e-mail: \texttt{philipp.petersen@univie.ac.at}
  } \and Anna Sepliarskaia \thanks{e-mail: \texttt{seplanna@gmail.com}}}
\begin{document}

\maketitle

\begin{abstract}
    We study the generalization capacity of group convolutional neural networks. We identify precise estimates for the VC dimensions of simple sets of group convolutional neural networks. In particular, we find that for infinite groups and appropriately chosen convolutional kernels, already two-parameter families of convolutional neural networks have an infinite VC dimension, despite being invariant to the action of an infinite group.
\end{abstract}

\noindent
\textbf{Keywords:} Convolutional neural networks, group convolutional neural networks, sample complexity, generalization, VC dimension

\noindent
\textbf{Mathematics Subject Classification:} 68T07, 68Q32, 68T05

\section{Introduction}

Due to impressive results in image recognition, convolutional neural networks (CNNs) have become one of the most widely-used neural network architectures \cite{krizhevsky2012imagenet, lecun1989backpropagation}.
It is believed that one of the main reasons for the efficiency of CNNs is their ability to convert translation symmetry of the data into a built-in translation-equivariance property of the neural network without exhausting the data to learn the equivariance \cite{bruna2013invariant, mallat2016understanding}.
Based on this intuition, other data symmetries have recently been incorporated into neural network architectures. 
Group convolutional neural networks (G-CNNs) are a natural generalization of CNNs that can be equivariant with respect to rotation \cite{cohen16gcnn,worrall2017harmonic,e2cnn,jenner2022steerable}, scale \cite{sosnovik2020sesn,sosnovik2021disco,bekkers2020bspline}, and other symmetries defined by matrix groups \cite{finzi2021emlp}. 
Moreover, every neural network that is equivariant to the action of a group on its input is a G-CNN, where the convolutions are with respect to the group,  \cite{kondor2018generalization} (see Theorem \ref{thm:equivariantequalsconvolutional} below). 

Although one of the main reasons for constructing equivariant neural networks is their ability to generalize better than neural networks without built-in symmetries, the theoretical understanding of this phenomenon still needs to be better developed. Here, we say that a neural network generalizes if it achieves comparable performance on unseen data compared to the performance on the training data. 

The most studied direction in the analysis of equivariant and invariant models is the analysis of sample complexity.
Several results show that the sample complexity of learning problems is improved if the objective function and the learned algorithm maintain symmetry.
For example, sample complexity is improved by a factor equal to the group size when using an invariant kernel over the group, compared to the corresponding non-invariant kernel \cite{bietti2021sample}.
In \cite{sannai2021improved}, it was observed that a larger volume of a group leads to a smaller generalization error. 
In addition, \cite{elesedy2022group} analyzes sample complexity by calculating a covering number of the input set and concluding that the sample complexity is much smaller for a neural network with invariances because, in general, the covering number of the orbit space representatives is much smaller than the covering number of the original input set.

Overall, the previous work suggests that assuming more invariances of a model implies better generalization behavior. 
\emph{This work can be considered a counter-point to that intuition in the context of G-CNNs.}
Our main contributions are the lower and upper bounds of the VC dimension for a simple two-layer neural network consisting of one convolutional layer and one pooling layer. 

If one fixes the convolutional kernel, then 
the set of associated G-CNNs has only two-free parameters which stem from the involved bias parameters. 
Nonetheless, we found that in the case of an infinite group, for each $n \in \N$, there is a fixed convolution kernel such that the VC dimension of the proposed elementary neural network is at least $n$. 
More precisely, when the symmetry group has size $n\in \N$, then there is a convolutional kernel such that the VC dimension of the proposed neural network is at least $\log_2(n) - 2\log_2(\log_2(n))-4$. The details are given in Theorems~\ref{thm:lowerBound} and Theorem~\ref{thm:lowerBound_general_case} as well as in Corollary~\ref{vs_dim_finite_group}.
The VC dimension of the neural networks with the identified kernel almost matches an associated upper bound, which is $\log_2(n) + 9 \log_2(\log_2(n))$ (see Corollary \ref{cor:lower_bound_VC_dim}).

Our result allows a significant, potentially counter-intuitive conclusion. While a larger group implies more invariances of the associated group-invariant classifiers, it nonetheless yields a higher VC dimension for an appropriately chosen kernel.

This paper is organized as follows: In Section \ref{sec:GCNNs}, we introduce the model for group convolutional neural networks that will be analyzed in the rest of the work. Thereafter, in Section \ref{sec:mainResults}, we present our main results. First, Theorem \ref{thm:upperBound} yields an upper bound for the VC dimension of a G-CNN with a finite group. Second, in Theorem \ref{thm:lowerBound}, we demonstrate an associated lower bound for a carefully chosen kernel $\mathcal{K}$. 

\section{Group convolutional neural networks}\label{sec:GCNNs}
\subsection{Preliminaries}

This paper studies the relationship between the symmetry group and the generalization capabilities of associated group convolutional neural networks.
First, we formally introduce all necessary concepts. We start with the definition of the type of group used in this work. 
This group later encodes the symmetries of the associated group convolutional neural networks. To stress this point, we will henceforth often call it symmetry group.

\begin{definition}[Symmetry group]
A \emph{group} $(G, \cdot)$ is a set with an operation that satisfies the following three properties. First \emph{associativity} holds:
\begin{align*}
    g_1 \cdot (g_2 \cdot g_3) = (g_1 \cdot g_2) \cdot g_3,\ \text{ for all } g_1,g_2,g_3\in G. \end{align*}
Second, there exists an \emph{identity element} $e \in G$ such that
\begin{align*}
    g \cdot e = e \cdot g = g,\ \text{ for all } g\in G.
\end{align*}
Third, for all $g \in G$ there exists a unique \emph{inverse} $g^{-1} \in G$ such that
\begin{align*}
    g^{-1}\cdot g = g \cdot g^{-1} = e.
\end{align*} 
Finally, a group is a \emph{topological group} if it is equipped with a topology such that multiplication with an element and inversion of elements are continuous operations.  We will assume all groups in the sequel to be topological groups. Moreover, we assume the topology to be Hausdorff and first-countable.
\end{definition}

In the sequel, groups act on certain sets, and we expect the group convolutional neural networks to interact appropriately with the corresponding action. To formalize this, we proceed by defining an action. 
\begin{definition}[Action of a symmetry group]
Let $G$ be a symmetry group. An \emph{action} of $G$ on a set $\mathcal{X}$ is a map $a: G \times \mathcal{X} \rightarrow \mathcal{X}$ such that for all $g_1, g_2\in G$ and for all $x\in \mathcal{X}$
\begin{align*}
    a(g_1 \cdot g_2, x) = a(g_1, a(g_2, x)).   
\end{align*}
\end{definition}

We are interested in neural networks that are invariant to specific group actions. 
This means that the output of a neural network does not change if a group action is applied to the input. 
The standard architecture of an invariant neural network consists of two parts: First, a so-called equivariant neural network, and second, a pooling operation. 
To clarify this concept, we first recall equivariant maps.
\begin{definition}[Equivariant map]\label{def:equivariantMap}
Let $\mathcal{X}, \mathcal{Y}$ be sets, let $G$ be a group, and $a_X$, $a_Y$ be group actions on $\mathcal{X}$ and $\mathcal{Y}$, respectively. 
A map $f:\mathcal{X} \rightarrow \mathcal{Y}$ is \emph{equivariant} to the actions $a_X, a_Y$ if for all $g\in G, x\in\mathcal{X}$ it holds that
\begin{align*}
    f(a_X(g, x)) = a_Y(g, f(x)).
\end{align*}
\end{definition}
Definition \ref{def:equivariantMap} requires that the group transformations commute with the application of $f$. In other words, transforming the input before the application of $f$ is equivalent to transforming the output after the application of $f$. 

Equivariant neural networks define parametric families of equivariant maps by composing layers that are individually equivariant with respect to the same group.
\begin{definition}[Equivariant neural network~\cite{lim2022equivariant}]\label{def:equiNN}
Let $n \in \N$, and $\mathcal{X}_1, \dots, \mathcal{X}_n$ be sets. Let $G$ be a group, and let $a_j:G \times \mathcal{X}_j \rightarrow \mathcal{X}_j$ for $j = 1, \dots, n$ be actions of $G$. 
An \emph{equivariant neural network} with respect to the actions $(a_j)_{j=1}^n$ is a function $f: \mathcal{X}_1 \rightarrow \mathcal{X}_n$
which can be described as the composition of linear maps
$f_i:\mathcal{X}_i\rightarrow \mathcal{X}_{i+1}$ that are equivariant with respect to the actions $a_i, a_{i+1}$, and coordinate-wise nonlinearity $\rho_i\colon \R \to \R,\ \rho_i(x) = \rho(x+b_i)$, for $b_i \in \R$:
\begin{align*}
    f(x) = \rho_n\circ f_n\circ \rho_{n-1}\circ f_{n-1}\cdots\rho_1\circ f_{1}(x), \text{ for } x \in \mathcal{X}_1.
\end{align*}
In this definition, for $i \in [n]$, the function $f_i$ is called the \emph{$i$-th layer of the neural network $f$}.
\end{definition}
For comparison, we also recall the classical notion of feed-forward neural networks. 
\begin{definition}[Feed-forward neural network]
Let $n \in \N$ and $\mathcal{X}_1, \dots, \mathcal{X}_n$ be sets.
A \emph{feed-forward neural network} is a function $f: \mathcal{X}_1 \rightarrow \mathcal{X}_n$
which can be described as the composition of affine maps $f_i:\mathcal{X}_i\rightarrow \mathcal{X}_{i+1}$ and coordinate-wise nonlinearity $\rho\colon \R \to \R$:
\begin{align*}
    f(x) = \rho\circ f_n\circ \rho\circ f_{n-1}\cdots\rho\circ f_{1}(x), \text{ for } x \in \mathcal{X}_1.
\end{align*}
In this definition, for $i \in [n]$, the function $f_i$ is called the \emph{$i$-th layer of the neural network $f$}.
\end{definition}
Equivalently to Definition \ref{def:equiNN}, we can define an equivariant neural network as a feed-forward neural network in which each layer is an equivariant function.
Moreover, it is convenient to describe equivariant neural networks through generalized convolutions.
To this end, we introduce the Haar measure on a compact group.

\begin{theorem}[{\cite[Theorem 8.1.2]{procesi2007lie}}]
Let $G$ be a compact group. Then, there is a left-invariant finite Borel measure, i.e., a finite  Borel measure $\mu$, such that $\mu(A) = \mu(g\cdot A)$ for any measurable set $A\subset G$. This measure is called a Haar measure. The Haar measure is unique up to scaling. 
\end{theorem}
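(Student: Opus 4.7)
The plan is to split the argument into existence and uniqueness. For existence, I would produce the Haar measure as the fixed point of an averaging procedure on the space of probability measures on $G$; for uniqueness, I would combine Fubini's theorem with the observation that on a compact group every left-invariant probability measure is automatically bi-invariant.

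For existence, consider the set $P(G)$ of Borel probability measures on $G$ embedded in the dual $C(G)^*$. Banach--Alaoglu together with compactness of $G$ implies that $P(G)$ is convex and weak-$*$ compact. Left translation induces an action $L\colon G \times P(G) \to P(G)$ by $L_g\mu(A) := \mu(g^{-1}A)$, with each $L_g$ a continuous affine self-map; a simultaneous fixed point is precisely a left-invariant probability measure. I would obtain such a fixed point by invoking the Ryll--Nardzewski fixed-point theorem, whose distality hypothesis is satisfied because the orbit of any $\mu \in P(G)$ under the compact group $G$ is a continuous image of $G$ and hence compact. A more elementary alternative is to construct an invariant mean on $C(G)$ directly as a weak-$*$ cluster point of averages of translates of a fixed functional along an increasing net of finite subsets of $G$, and then identify the mean with a finite Borel measure via the Riesz representation theorem.

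For uniqueness, let $\mu$ and $\nu$ be two left-invariant Borel probability measures. I would first argue that on a compact group left-invariance implies right-invariance: the modular function $\Delta\colon G \to \R_{>0}$ is a continuous group homomorphism, so its image is a compact subgroup of $\R_{>0}$, which forces $\Delta \equiv 1$. Given bi-invariance of both measures, for any $f\in C(G)$ I would apply Fubini to $\int_G\int_G f(yx)\, d\mu(x)\, d\nu(y)$: collapsing the $x$-integral first using left-invariance of $\mu$ yields $\int f\, d\mu$, whereas collapsing the $y$-integral first using right-invariance of $\nu$ yields $\int f\, d\nu$. Hence the two integrals agree on all of $C(G)$, and the Riesz representation theorem forces $\mu = \nu$. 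Dropping the normalization restores the statement of uniqueness up to a positive scalar.

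The main obstacle is the existence step for non-abelian $G$: Kakutani--Markov in its simplest form only produces common fixed points for commuting families of affine maps, so one must genuinely invoke amenability of compact groups or a stronger fixed-point theorem such as Ryll--Nardzewski. I would personally favor the direct invariant-mean construction, since it keeps the amenability input transparent by reducing it to averaging along ordered nets of finite subsets and avoids heavier abstract machinery. A minor secondary technicality is that $G$ is assumed only first-countable rather than metrizable, which forces topological arguments inside $P(G)$ to be phrased through nets rather than sequences, but this does not affect the structure of the proof.
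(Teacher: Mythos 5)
The paper does not prove this statement at all: it is imported with a citation to Procesi, so there is no internal proof to compare against, and your outline has to stand on its own. Its overall architecture (a fixed-point argument on $P(G)\subset C(G)^*$ for existence, a Fubini computation for uniqueness) is the classical one and can be made rigorous, but two steps are not sound as written. First, your verification of the Ryll--Nardzewski hypothesis is wrong: distality (the noncontracting condition) requires that for $\mu\neq\nu$ the origin is not in the closure of $\{L_g\mu - L_g\nu \colon g\in G\}$, and compactness of the orbit of $\mu$ has nothing to do with this. The correct observation is that each $L_g$ is an isometry of the measure space for the total-variation norm, so every difference $L_g\mu - L_g\nu$ has norm $\|\mu-\nu\|>0$. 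Relatedly, your ``elementary alternative'' --- a weak-$*$ cluster point of averages of translates over an increasing net of finite subsets of $G$ --- does not produce an invariant mean in general: that is a F\o lner-type argument and needs amenability of $G$ \emph{as a discrete group}, which fails for instance for $\mathrm{SO}(3)$, since it contains free subgroups. The elementary substitute is von Neumann's argument that the closed convex hull of the translates of each $f\in C(G)$ contains a unique constant.

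Second, the uniqueness step is circular as stated. The modular function is defined through the identity $\mu(Ag)=\Delta(g)\mu(A)$, and to know that the left-invariant measure $A\mapsto\mu(Ag)$ is a scalar multiple of $\mu$ you already need uniqueness of left Haar measure up to scaling --- exactly what you are trying to prove. The standard repair avoids $\Delta$ entirely: given left-invariant probability measures $\mu$ and $\nu$, set $\check\nu(A):=\nu(A^{-1})$, which is right-invariant, and run your Fubini computation on $\int_G\int_G f(xy)\,d\mu(y)\,d\check\nu(x)$; collapsing the inner integral by left-invariance of $\mu$ gives $\int f\,d\mu$, collapsing in the other order by right-invariance of $\check\nu$ gives $\int f\,d\check\nu$, hence $\mu=\check\nu$. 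Taking $\nu=\mu$ shows every left-invariant probability measure is bi-invariant, and for general $\nu$ the same identity yields $\nu=\check\nu=\mu$. With these two repairs (and noting that, by Birkhoff--Kakutani, the standing Hausdorff and first-countability assumptions make $G$ metrizable, so the measure-theoretic technicalities in Fubini are harmless), your outline becomes a correct proof of the cited theorem.
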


\begin{remark}
Every finite topological group is compact. Hence, there exists a Haar measure on every finite group. A Haar measure on the finite group is its counting measure. Since scaling does not affect the results in the rest of the manuscript, we will always choose the counting measure as the Haar measure on a finite group.    
\end{remark}

On compact groups, we can define the generalized convolution. 
We will typically use the Haar measure as a measure in the convolution. However, in a couple of special cases later, we require a bit more generality, which is why we state the definition for general finite measures. 
\begin{definition}[Generalized convolution]
Let $G$ be a compact group, let $\mu$ be a finite measure on $G$, and let $f_1, f_2 \colon G\rightarrow\mathbb{R}$ be functions.

Then, the convolution of $f_1$ with $f_2$ is defined by
\begin{align*}
    (f_1 *_G f_2)(g) = \int_{G}f_1(g\cdot h^{-1})\cdot f_2(h) d\mu(h) \quad \text{ for } g \in G. 
\end{align*}
\end{definition}
\begin{remark}
For the generalized convolution to be sensible, we need for every $g \in G$, the integral $\int_{G}f_1(g\cdot h^{-1})\cdot f_2(h) d\mu(h)$ to be well defined. This is guaranteed, for example, if $f_1,f_2$ are both $\mu$-measurable and bounded, which we will assume in the sequel.
\end{remark}

\begin{theorem}[\cite{kondor2018generalization}]\label{thm:equivariantequalsconvolutional}
A feed-forward neural network $f$ is equivariant to the action of a compact group $G$ on its inputs if and only if each layer of $f$ implements a generalized form of convolution with respect to $G$.
\end{theorem}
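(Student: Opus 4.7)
The plan is to prove the two directions of the equivalence separately, with the bulk of the work in characterizing equivariant linear maps between spaces of functions on $G$.

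First, I would handle the \emph{if} direction, which is the more elementary one. By Definition~\ref{def:equiNN}, each layer of $f$ is built from a linear map, an additive bias, and a coordinate-wise nonlinearity. The bias and the nonlinearity visibly commute with the left regular action of $G$, so only the linear part needs attention. Assuming the linear part is a generalized convolution with some kernel $\mathcal{K}$, equivariance follows from a direct computation using the left-invariance of the Haar measure: for $g \in G$ and $x\colon G \to \R$,
\begin{align*}
    ((g\cdot x) *_G \mathcal{K})(h) = \int_G x(g^{-1}\cdot h \cdot u^{-1}) \mathcal{K}(u) d\mu(u) = (x *_G \mathcal{K})(g^{-1}\cdot h) = g\cdot (x *_G \mathcal{K})(h).
\end{align*}
Iterating through layers gives equivariance of the full network.

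For the \emph{only if} direction, observe again that equivariance of a layer reduces to equivariance of its linear component, since additive bias and coordinate-wise nonlinearities are automatically equivariant. The key step is therefore the claim: every bounded linear map $T$ between spaces of (possibly multi-channel) functions on $G$ that commutes with the left regular action is a generalized convolution with some kernel. I would first verify this in the finite case for intuition: a linear map $T\colon \R^G \to \R^G$ has matrix entries $T_{g,h}$, and equivariance $L_u T = T L_u$ forces $T_{u\cdot g, u\cdot h} = T_{g,h}$ for all $u \in G$, so $T_{g,h}$ depends only on $g\cdot h^{-1}$. Setting $\mathcal{K}(x) := T_{x, e}$ yields $(T x)(g) = \sum_{h \in G} \mathcal{K}(g\cdot h^{-1}) x(h) = (x *_G \mathcal{K})(g)$, exactly a convolution against the counting measure.

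For a general compact $G$, the same reasoning would be carried out via a Schwartz-kernel-type representation: a bounded equivariant operator $T$ on $L^2(G,\mu)$ admits an integral kernel $T(g,h)$ that is invariant under simultaneous left translation, and hence $T(g,h)$ factors through the map $(g,h)\mapsto g\cdot h^{-1}$, giving $T(g,h) = \mathcal{K}(g\cdot h^{-1})$. If one prefers to avoid kernels, the Peter--Weyl theorem provides an equivalent route: decompose $L^2(G)$ into isotypic components for the regular representation, note that Schur's lemma forces $T$ to act as a scalar (or, in the multi-channel case, a small matrix) on each isotypic piece, and then assemble these scalars into the Fourier coefficients of a convolution kernel. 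Multi-channel layers reduce to this one-channel statement applied block-wise, producing a matrix-valued convolution kernel.

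The main obstacle I anticipate is in the general compact case: giving a clean argument that an equivariant bounded operator really does admit such a kernel, with the correct regularity to make the convolution pointwise well-defined under the assumptions imposed earlier (bounded, $\mu$-measurable $f_1, f_2$). Once that representation is obtained, combining it across all layers yields that each layer is of the form bias-plus-nonlinearity applied to a generalized convolution, which is exactly the claim of the theorem.
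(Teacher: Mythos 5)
You should first be aware that the paper does not prove Theorem~\ref{thm:equivariantequalsconvolutional} at all: it is imported from \cite{kondor2018generalization} and used as a black box, so there is no internal proof to compare against; what follows assesses your sketch as a reconstruction of the cited result. Your \emph{if} direction is fine, and working with functions on $G$ via $\bar{f}$ rather than with feature maps on quotient spaces $G/H$ (as in the original reference) matches the simplified setting of this paper. Two smaller remarks on the \emph{only if} direction. You silently assume equivariance holds layer by layer, i.e.\ that the intermediate spaces carry $G$-actions and each layer is equivariant as in Definition~\ref{def:equiNN}; end-to-end equivariance of the composite alone does not force each linear component to commute with the action, so the theorem must be read in that layer-wise sense. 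Also, with the left regular action the relation $T_{u\cdot g,\,u\cdot h}=T_{g,h}$ makes $T_{g,h}$ a function of $h^{-1}\cdot g$ (equivalently $g^{-1}\cdot h$), not of $g\cdot h^{-1}$; your displayed sum is the opposite-sided convolution for nonabelian $G$. A change of variables using unimodularity of compact groups still identifies $T$ with $x\mapsto x*_G\mathcal{K}$ in the paper's convention, so this is only bookkeeping, but it should be fixed.

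The genuine gap is the one you yourself flag at the end. For infinite compact $G$ it is not true that every bounded linear operator on $L^2(G,\mu)$ commuting with the left regular representation is integration against a kernel of the form $\mathcal{K}(h^{-1}\cdot g)$ with $\mathcal{K}$ a bounded measurable function: the identity operator is equivariant and corresponds to convolution with a Dirac measure, not a function, and more generally the commutant of the left regular representation contains many operators that are not convolutions by functions. The Peter--Weyl/Schur route only produces one scalar (or block) per isotypic component with no decay constraint, i.e.\ at best a distribution- or measure-valued kernel. Consequently the ``only if'' direction cannot be established in the generality in which you state your key claim; one must either enlarge the notion of kernel to measures/distributions, or restrict the admissible layers a priori to integral operators given by a kernel function --- which is effectively what this paper's model (Definition~\ref{def:oneFilterGCNN}) does and the level at which \cite{kondor2018generalization} states the theorem. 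With such a restriction, your finite-group computation transports essentially verbatim and closes the argument; without it, the Schwartz-kernel step you describe as ``the main obstacle'' is not a technicality to be supplied but a statement that is false as written.
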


Thanks to Theorem \ref{thm:equivariantequalsconvolutional}, each equivariant neural network is based on repeated generalized convolutions and hence can also be called \emph{Group Convolutional Neural Network (G-CNN)}. 
The standard way of designing an invariant classifier or regressor corresponding to an action $a: G\times \mathcal{X} \rightarrow \mathcal{X}$ using neural networks is to compose an equivariant neural network with a global pooling operation~\cite{kondor2018generalization, keriven2019universal, bekkers2018roto}. 
Concretely, if $f:\mathcal{X} \rightarrow \R$ is an equivariant neural network, then we define $\widetilde{f}:\mathcal{X} \rightarrow\mathbb{R} $ as
\begin{align*}
    \widetilde{f}(x) = \int_G f(a(g,x)) d\mu(g), \text{ for } x \in X.
\end{align*}

To illustrate the concepts described above we give an example. 

\begin{example}\label{ex:homspace}
Let $\mathcal{X} \coloneqq \mathbb{Z}^2$, which is a homogeneous space under the standard action of the group of integer translations:
\begin{align*}
T((t_1,t_2),(x_1, x_2)) = (x_1 +t_1, x_2 +t_2).    
\end{align*}

An image, such as an image of a handwritten digit, can be thought of as a function on $\mathcal{X}$ that, given coordinates, returns the pixel value at the corresponding location: $\mathcal{I}:\mathcal{X}\rightarrow\mathbb{R}$.
The standard action of the group of integer translations can be extended to the action on images by
\begin{align*}
T((t_1,t_2), \mathcal{I})(x_1, x_2) = \mathcal{I}(x_1 -t_1, x_2- t_2).    
\end{align*}

The generalized convolution, in this case, is a standard convolution layer, and the classifier, which is invariant to the action of $T$, can be designed by first having $n$ convolution layers and then composing it with an averaging pooling operator.

Similar examples can be constructed with respect to group actions 
 of \emph{compact groups} incorporating rotations, shearings, or general affine transformations. 
\end{example}

\subsection{Introduction of the G-CNN model}
In this paper, we consider the situation already encountered in Example \ref{ex:homspace}, where the input space for a neural network is a subset of functions on a homogeneous space $\mathcal{X}$. 
As mentioned in~\cite{kondor2018generalization} in this case, after fixing an origin $y \in \mathcal{X}$, we can map a
function $f: \mathcal{X} \to \R$ to a function $\bar{f} \colon G \to \R$, by 
\begin{align}\label{eq:defOfFbar}
    \bar{f}(g) &\coloneqq f(a(g,y)) \text{ for } g \in G.
\end{align}
Now $\bar{f}$ can be convolved with a kernel $\mathcal{K}: G \to \R$ if both $\bar{f}$ and $\mathcal{K}$ are measurable and bounded. Clearly, $\bar{f}$ is bounded if $f$ was. Hence, for convenience, we define 
$$
    \mathcal{B}(\mathcal{X}, \mu) \coloneqq \{ f \colon \mathcal{X} \to \R \colon f \text{ bounded and }\bar{f} \text{ is }\mu-\text{measurable} \}, 
$$
Now the generalized convolution between a function $f \in \mathcal{B}(\mathcal{X}, \mu)$ and a kernel $\mathcal{K}: G \to \R$ can be defined as 
\begin{align}
    \label{eq:genConvolution}
    f *_G \mathcal{K} \coloneqq \bar{f} *_G \mathcal{K}.
\end{align}

In this work, we study the generalization capabilities of invariant classifiers with architecture described in the previous subsection. 
These classifiers are a composition of a G-CNN and a pooling operation, followed by the application of a sign function before the output.
Typically, the hypothesis class of this type of classifier consists of a neural network with several layers, where each layer is a generalized convolution with a kernel taken from a linear space with a predetermined basis.
The learning procedure selects suitable kernels by finding their coordinates in a given basis, i.e., the coordinates are the learnable parameters of the training algorithm.
In our analysis, we consider neural networks that have the fewest training parameters, i.e., neural networks with only one convolutional layer and with a fixed kernel.
We state the corresponding definition below.

\begin{definition}[G-CNNs with fixed kernel] \label{def:oneFilterGCNN}
Let $G$ be a compact group acting on $\mathcal{X}$. Let $\mathcal{K} \colon G \to \R$ be a bounded kernel. Let $\mu$ be the Haar measure on $G$. Let $c_1,c_2\in\mathbb{R}$ and set
\begin{align}
   H_{c_1,c_2}(\mathcal{K})\colon \mathcal{B}(\mathcal{X}, \mu) &\to \{-1,1\}\nonumber\\
   H_{c_1,c_2}(\mathcal{K})(f) &= \mathrm{sign}\left(\int_G \relu((f*_G\mathcal{K})(g) + c_1) d\mu(g) + c_2\right).\label{eq:defOfCNN}
\end{align}
Here $\mathrm{sign} = 2 \mathds{1}_{(0,\infty)} - 1$. 
We denote the \emph{set of G-CNNs with kernel $\mathcal{K}$} by $\mathcal{H}(\mathcal{K}) \coloneqq \{ H_{c_1,c_2}(\mathcal{K}) \colon c_1,c_2 \in \R\}$.
\end{definition}

We note that the set $\mathcal{H}(\mathcal{K})$ has only two scalar parameters since the kernel $\mathcal{K}$ is fixed for the whole class.
Even though this neural network contains only two scalar parameters and, in addition, is constrained to be invariant, we will demonstrate in the next section that when the group $G$ contains an infinite number of elements, there exists, for each $n\in \N$, a convolution kernel $\mathcal{K}$ such that the VC dimension of $\mathcal{H}(\mathcal{K})$ is at least $n$. 
By the fundamental theorem of learning (\cite[Theorem 3.20]{mohri2018foundations} or \cite[Theorem 6.7]{shalev2014understanding}), this implies that successful learning of these neural networks from finitely many samples is impossible in general. 
On the other hand, we will see that the VC dimension can be upper-bounded for finite groups.

\section{VC dimension of a G-CNN associated with a finite group}
\label{sec:mainResults}

To study the generalization capacity of sets of G-CNNs of the form $\mathcal{H}(\mathcal{K})$, we compute the so-called \emph{VC dimension} of these sets, which we denote by $\vcdim(\mathcal{H}(\mathcal{K}))$. We refer to \cite[Definition 6.5]{shalev2014understanding} for a formal definition of the VC dimension as well as for the concept of \emph{shattering}. We recall that, by the fundamental theorem of learning \cite[Theorem 6.7]{shalev2014understanding}, a finite VC dimension facilitates learning, whereas an infinite VC dimension prohibits it.

\subsection{Upper bound on VC dimension}
For finite groups, we have the following upper bound on the VC dimension of $\mathcal{H}(\mathcal{K})$.

\begin{theorem}[Upper bound on VC dimension] \label{thm:upperBound}
Let $G$ be a group acting on $\mathcal{X}$, let $|G| = n$, and let $\mu$ be the Haar measure on $G$. Let $\mathcal{K} \colon G \to \R$ be a bounded kernel. 
Then, $\vcdim(\mathcal{H}(\mathcal{K})) = m < \infty$.
Moreover, 
\begin{align}
    m - 3 \log_2(m+1) \leq \log_2(n).   
\end{align}
\end{theorem}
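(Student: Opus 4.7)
The plan is to bound the growth function $\Pi_{\mathcal{H}(\mathcal{K})}(m)$ of $\mathcal{H}(\mathcal{K})$ on arbitrary size-$m$ samples and then invoke $2^m \le \Pi_{\mathcal{H}(\mathcal{K})}(m)$ for any shattered set. Because $G$ is finite and $\mu$ is the counting measure, the classifier in \eqref{eq:defOfCNN} simplifies to $H_{c_1,c_2}(\mathcal{K})(f) = \mathrm{sign}(h_f(c_1, c_2))$ with
\begin{align*}
    h_f(c_1, c_2) \coloneqq \sum_{g \in G} \relu\bigl(a_g(f) + c_1\bigr) + c_2, \qquad a_g(f) \coloneqq (f *_G \mathcal{K})(g).
\end{align*}
Each $h_f$ is continuous, linear in $c_2$ with slope $1$, and piecewise linear in $c_1$; its non-smooth behavior is confined to the $n$ vertical lines $\{c_1 = -a_g(f) \,:\, g \in G\}$ in the parameter plane $\R^2$.

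First, I would fix arbitrary functions $f_1, \dots, f_m \in \mathcal{B}(\mathcal{X}, \mu)$ and pool the at most $mn$ breakpoint values $\{-a_g(f_i) \,:\, g \in G,\ i \in [m]\}$. They partition the $c_1$-axis into at most $mn+1$ open intervals, yielding at most $mn + 1$ open vertical strips $S \subset \R^2$. On any fixed strip $S$ the activation pattern $\{g \in G \,:\, a_g(f_i) + c_1 > 0\}$ is constant in $c_1$, so $h_{f_i}|_S$ is genuinely affine in $(c_1, c_2)$; because the coefficient of $c_2$ is always $1$, the zero set $\{h_{f_i} = 0\} \cap S$ is a non-vertical line. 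Restricted to $S$, the label vector $(\mathrm{sign}(h_{f_i}))_{i=1}^m$ is therefore determined by an arrangement of $m$ lines in the plane.

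The classical combinatorial fact that $m$ lines in $\R^2$ cut the plane into at most $\binom{m}{2} + m + 1$ open cells, together with a small generic-perturbation argument absorbing boundary values caused by the convention $\mathrm{sign}(x) = 2\mathds{1}_{(0,\infty)}(x)-1$ and any non-generic configurations in which lines coincide or are concurrent, gives at most $(m+1)^2$ distinct label vectors per strip. Summing across strips and using $mn + 1 \le n(m+1)$, I would obtain
\begin{align*}
    \Pi_{\mathcal{H}(\mathcal{K})}(m) \le (mn + 1)(m+1)^2 \le n(m+1)^3.
\end{align*}
If a size-$m$ sample is shattered, then $2^m \le \Pi_{\mathcal{H}(\mathcal{K})}(m) \le n(m+1)^3$; taking $\log_2$ yields $m - 3\log_2(m+1) \le \log_2 n$ and simultaneously forces $m < \infty$.

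The main obstacle in turning this plan into a formal proof is the per-strip sign-pattern count under the $\mathrm{sign}(0) = -1$ convention in the presence of degeneracies: boundary points can in principle produce a label vector not attained on any open cell (for instance, three lines concurrent at a single point produce the all-negative pattern only at their common vertex). I expect that either a small generic shift of the $c_2$-coordinates, or else counting all faces (of any dimension) in the line arrangement and using the total bound of order $m^2$, will absorb these contributions within the $(m+1)^2$ estimate, leaving the overall bound $\Pi_{\mathcal{H}(\mathcal{K})}(m) \le n(m+1)^3$ intact.
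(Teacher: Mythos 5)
Your proposal is correct and follows essentially the same route as the paper: both partition the $c_1$-axis into at most $mn+1$ intervals determined by the ReLU breakpoints of $(f_i *_G \mathcal{K})(g)$, bound the number of achievable label vectors per interval by an $O(m^2)$ arrangement count, and compare the product with $2^m$ to obtain $m - 3\log_2(m+1) \le \log_2(n)$. The only difference is presentational --- you count cells of the planar arrangement of the $m$ lines $\{h_{f_i}=0\}$ in each strip, while the paper counts how the ordering of the $m$ affine functions $\gamma_{i,k}(c_1)$ changes at their pairwise intersections --- and the degeneracy issue you flag is harmless because every $h_{f_i}$ has $c_2$-coefficient $1$, so perturbing $c_2$ downward from any boundary point realizes the same label vector in an adjacent open cell.
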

\begin{proof}

For $f \in \mathcal{B}(\mathcal{X}, \mu)$, we consider the function $F_{\mathcal{K},f}$ defined by
\begin{align}
\label{F_K_f}
    F_{\mathcal{K},f} \colon \R &\to \R\\
     F_{\mathcal{K},f}(c)&\coloneqq \int_{G} (f*_G\mathcal{K})(g) \mathds{1}_{(f*_G\mathcal{K})(g) > -c} d \mu(g). \nonumber
\end{align}

Let $\{g_1, \dots, g_n\} = G$ be such that $\kappa_j \coloneqq (f*_G\mathcal{K})(g_j) \mu(g_{j}) \geq (f*_G\mathcal{K})(g_{j+1}) \mu(g_{j+1}) \eqqcolon \kappa_{j+1}$ for all $j < n$. Then, the function $F_{\mathcal{K},f}$ is a step function with values 
\begin{align}\label{eq:different_values}
    0, \kappa_1, \kappa_1 + \kappa_2,  \dots , \sum_{j= 1}^{n-1} \kappa_j, \sum_{j= 1}^n \kappa_j.
\end{align}
In particular, $|F_{\mathcal{K},f}(\R)| \leq n+1$, where we denote by $|A|$ the cardinality of a finite set $A$.

Assume that, for $m \in \N$, $\vcdim(\mathcal{H}(\mathcal{K})) \geq m$, then there are $m$ functions $f_1,\dots, f_m$ which are shattered by $\mathcal{H}(\mathcal{K})$.
We observe with the help of \eqref{eq:different_values} that $F_{\mathcal{K},f_k}$ has at most $n$ break points---the points where the functions are not affine---for each $f_k$, $k \in [m]$. This yields that the $f_k$ collectively have not more than $mn$ break points. Hence, $F_{\mathcal{K},f_1},\dots,F_{\mathcal{K},f_m}$ have at most $mn+1$ constant pieces.
Consequently, the constant regions of $F_{\mathcal{K},f_1},\dots,F_{\mathcal{K},f_m}$ divide the real line into no more than $m  n + 1$ segments, i.e., there exist $(\Lambda_i)_{i=1}^{mn+1}$ such that $\Lambda_i$ is an interval and $c \mapsto (F_{\mathcal{K},f_1}(c),\dots,F_{\mathcal{K},f_m}(c))$ is constant on $\Lambda_i$ for all $i \in [mn+1]$. 

For $i \in [mn+1]$ and $k \in [m]$, we define maps
\begin{align}\label{eq:theGammas}
    \gamma_{i,k}:\Lambda_i&\rightarrow\mathbb{R},\\
    \gamma_{i,k}(c) &\coloneqq \int_{G} \relu((f_k*_G\mathcal{K})(g) + c) d\mu(g)\nonumber\\
    &= F_{\mathcal{K},f_k}(c) +  \int_{G}  c\cdot\mathds{1}_{(f_k*_G\mathcal{K})(g) > -c} d \mu(g).\nonumber
\end{align}
Note that $\gamma_{i,k}$ is an affine function as $F_{\mathcal{K},f_k}$ is constant on $\Lambda_i$. We proceed by estimating the number of possible classifications of $f_1,\dots, f_m$ by $H_{c_1,c_2}(\mathcal{K})$ if $(c_1,c_2)$ can be chosen in $\Lambda_i \times \R$.

\begin{lemma}\label{lem:LemmaOnLambdai}
Let $i \in [nm + 1]$. The classifications of $f_1,\dots,f_m$ by $H_{c_1,c_2}(\mathcal{K})$ with $c_1 \in \Lambda_i$ and $c_2 \in \R$ correspond to the vectors which are produced by the map:
\begin{align}
    \Lambda_i \times \R \ni (c_1,c_2) \mapsto \mathcal{H}_{c_1, c_2}(f_1,\dots,f_m) \coloneqq \left(  H_{c_1,c_2}(\mathcal{K})(f_1),\dots, H_{c_1,c_2}(\mathcal{K})(f_m)\right).
\end{align}
Then, the number of elements of $\{\mathcal{H}_{c_1, c_2}(f_1,\dots,f_m),\ c_1\in\Lambda_i,\ c_2\in\mathbb{R}\}$ is not more than ${m  (m-1)}/{2} + m$.
\end{lemma}
\begin{proof}
Note that, per Definition~\ref{def:oneFilterGCNN} and \eqref{eq:theGammas}
$$
 H_{c_1,c_2}(\mathcal{K})(f_k) = \left \{ \begin{array}{rl}
    1  & \text{ if }  \gamma_{i,k}(c_1) > - c_2\\
    -1  & \text{ else.}\\
 \end{array}\right.
$$
To improve readability in the remainder of the proof, we omit the explicit reference to the kernel $\mathcal{K}$ and write $H_{c_1,c_2}$ instead of $H_{c_1,c_2}(\mathcal{K})$.

Let $l \in \N$, and $(p_j)_{j=1}^l \subset \Lambda_i$ be the set of intersection points of $(\gamma_{i,k})_{k=1}^m$. More precisely,
\begin{align*}
 (p_j)_{j=1}^l \coloneqq \{p \in \Lambda_i \colon \exists r,s\in [m] \colon \gamma_{i,r}(p) = \gamma_{i,s}(p) \text{ and } \gamma_{i,r} \neq \gamma_{i,s} \text{ on }\Lambda_i\}.
\end{align*}
We assume that $p_1 < p_2 < \cdots < p_l$. For $j \leq l$, we denote the set of all classifications of $f_1,\dots,f_m$ when $c_1\in\Lambda_i,\ c_1 < p_{j},\ c_2\in\mathbb{R}$ by
\begin{align*}
    \mathcal{H}_j \coloneqq \{({H}_{c_1,c_2}(f_1),\dots,{H}_{c_1,c_2}(f_m))\ c_1\in\Lambda_i,\ c_1 < p_{j},\ c_2\in\mathbb{R}\}.
\end{align*} 
We also define 
$$
    \mathcal{H}_{l+1} \coloneqq \{({H}_{c_1,c_2}(f_1),\dots,{H}_{c_1,c_2}(f_m)),\ c_1\in\Lambda_i,\ c_2\in \R\}.
$$
We denote by $\mathcal{P}(A)$ the power set of a set $A$.
Let for $k \in [m]$, $\widetilde{\gamma}_{i,k}: \R \to \R$ be an affine linear function that coincides with $\gamma_{i,k}$ on $\Lambda_i$.
We define $u \colon [m] \times \R \to \mathcal{P}([m])$ by 
$$
    u(k, c) \coloneqq \{ j \in [m] \colon \widetilde{\gamma}_{i,j}(c) > \widetilde{\gamma}_{i,k}(c) \}.
$$

We also set $p_0$ as the smallest intersection point of $(\widetilde{\gamma}_{i,k})_{k=1}^m$ which is smaller than $p_1$ or $-\infty$ if such a point does not exist. Similarly, $p_{l+1}$ is the largest intersection point of $(\widetilde{\gamma}_{i,k})_{k=1}^m$ which is larger than $p_l$ or $\infty$ if such a point does not exist.

Note that $u(k, c)$ is constant on $(p_j, p_{j+1})$ for all $j = 0, \dots, l$. 
Hence, for $j = 1, \dots, l$ and an arbitrary $c^* \in (p_{j}, p_{j+1}) \cap \Lambda_i$ it holds that 
\begin{align}
    \mathcal{H}_{j+1} \setminus \mathcal{H}_{j} &= \{(\mathcal{H}_{c_1, c_2}(f_1),\dots,\mathcal{H}_{c_1, c_2}(f_m)),\ c_1 \in [p_{j}, p_{j+1}) ,\ c_2 \in \R\} \nonumber\\
    &\subset \{ 2 \mathds{1}_{u(k, c^*)} - 1 \colon k \in [m]\} \cup \{ 2 \mathds{1}_{u(k, p_{j})} - 1 \colon k \in [m]\}\nonumber\\
    &\eqqcolon J_{j+1} \cup \{ 2 \mathds{1}_{u(k, p_{j})} - 1 \colon k \in [m]\}, \label{eq:fromCurlytoStraightH}
\end{align}
where $\mathds{1}_{u(k, c^*)}$ is a vector which equals $1$ in coordinate $q$ if $q \in u(k, c^*)$ and $0$ else. The last inclusion holds because only the order of the $(\gamma_{i,k}(c_1))_{k=1}^m$ influences the classifications that can be produced with all $c_2$ and that order is, as explained before, the same for all $c_1 \in (p_{j}, p_{j+1})$.

We set for $c^* \in (p_0, p_1)$
\begin{align*}
    J_{1} \coloneqq \{ 2 \mathds{1}_{u(k, c^*)} - 1 \colon k \in [m]\} \supset \mathcal{H}_1. 
\end{align*}
Note that, by construction for $j = 1, \dots, l$
\begin{align}
\label{eq:borderCase}
    \{ 2 \mathds{1}_{u(k, p_{j})} - 1 \colon k \in [m]\} \subset J_{j} \cap J_{j+1}.
\end{align}
Due to \eqref{eq:fromCurlytoStraightH} and \eqref{eq:borderCase}, we have that 
\begin{align*}
    \bigcup_{j=2}^{l+1} (\mathcal{H}_j \setminus \mathcal{H}_{j-1}) \subset \bigcup_{j=2}^{l+1} J_j.
\end{align*}
Moreover, we set $J_1 \coloneqq \mathcal{H}_1$.

Let $k \in [m]$ be such that $\gamma_k$ intersects no $\gamma_q$ with $q\neq k$ on $(p_{j-1}, p_{j+1})$, i.e.,
$$
    \{q \in [m] \colon \gamma_q(p_j) \geq \gamma_k(p_j)\} = \{q \in [m]\colon \gamma_q \geq \gamma_k \text{ on } (p_{j-1}, p_{j+1})\}.
$$
Then, it is clear that $u(k, c)$ is constant on $(p_{j-1}, p_{j+1})$. Therefore, $J_j \setminus J_{j-1}$ contains no more than $l_j$ vectors, where $l_j$ is the number of $\gamma_r's$ that intersect another $\gamma_{r'}$ with $r \neq r'$ in $p_j$. 
We conclude that 
\begin{align*}
    |\mathcal{H}_{l+1}| &\leq |\mathcal{H}_1 \cup \bigcup_{j=2}^{l+1} (\mathcal{H}_j \setminus \mathcal{H}_{j-1})|\\
    &\leq |J_1 \cup \bigcup_{j=2}^{l+1} J_j|\\
    &\leq |J_1 \cup \bigcup_{j=2}^{l+1} (J_j \setminus J_{j-1})|\\
    &\leq |J_1| + \sum_{j=1}^l l_j \leq m + \frac{m\cdot(m-1)}{2}.
\end{align*}
Where the last inequality follows since the maximum number of intersections between $m$ affine lines is equal to ${m\cdot(m-1)}/{2}$ and by using the trivial estimate of $m$ for $J_1$.
\end{proof}

From Lemma \ref{lem:LemmaOnLambdai} it follows that the number of classifications of $m$ functions $f_1, \dots, f_m$ by $H_{c_1,c_2}(\mathcal{K})$ with $c_1,c_2 \in \R$ is not more than $(m + m\cdot(m-1)/{2}) \cdot (m n + 1)$.
Since $\vcdim(\mathcal{H}(\mathcal{K})) \geq m$, we conclude that 
$$
    2^m \leq \left(m + \frac{m\cdot(m-1)}{2}\right) \cdot (m n + 1)
$$
which, after taking dyadic logarithms, implies 
\begin{align}
    m &\leq \log_2\left(\frac{m\cdot(m+1)}{2}\right) + \log_2(mn + 1)\nonumber\\
    &\leq 2\log_2\left(m+1\right) + \log_2((m+1) n)\nonumber\\
    &\leq  3\log_2(m+1) + \log_2(n).\label{eq:boundForVcDimension}
\end{align}
Since $3 \log_2(m+1) \leq m/2$ for all $m \geq 50$ we conclude that $m < \max\{2 \log_2(n), 50\}$. Hence  $\vcdim(\mathcal{H}(\mathcal{K})) < \infty$. 
Equation \eqref{eq:boundForVcDimension} now yields the claim. 
\end{proof}

We can state two immediate consequences of Theorem \ref{thm:upperBound}, which remove the $\log$ terms in $m$. First, noticing that 
$$
    6 \log(m+1) \leq m, \text{ for all } m \geq 30,
$$
yields the following corollary.

\begin{corollary}\label{cor:upperBoundIfmLarge}
Let $G$ be a group acting on $\mathcal{X}$, let $|G| = n$, and let $\mu$ be the Haar measure on $G$. Let $\mathcal{K} \colon G \to \R$ be a kernel. Then, $\vcdim(\mathcal{H}(\mathcal{K})) \leq \max \{30,  2 \log_2(n)$\}.
\end{corollary}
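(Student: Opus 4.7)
The plan is to derive the corollary as a direct consequence of Theorem~\ref{thm:upperBound} by performing a case analysis on whether $m \coloneqq \vcdim(\mathcal{H}(\mathcal{K}))$ lies above or below the threshold $30$ singled out in the remark preceding the statement. No new combinatorial work is needed; the reasoning is purely to clean up the implicit bound $m - 3\log_2(m+1) \leq \log_2(n)$ into an explicit one.

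Concretely, I would first invoke Theorem~\ref{thm:upperBound} to obtain that $m$ is finite and that
$$
    m - 3 \log_2(m+1) \leq \log_2(n).
$$
Next I would split on the size of $m$. If $m < 30$, then trivially $m \leq 30 \leq \max\{30, 2\log_2(n)\}$ and there is nothing left to prove. If instead $m \geq 30$, I would apply the inequality $6 \log_2(m+1) \leq m$ stated just before the corollary, which is equivalent to $3\log_2(m+1) \leq m/2$. Substituting this into the bound from Theorem~\ref{thm:upperBound} yields
$$
    \frac{m}{2} \;\leq\; m - 3\log_2(m+1) \;\leq\; \log_2(n),
$$
so $m \leq 2\log_2(n) \leq \max\{30, 2\log_2(n)\}$. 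Combining the two cases gives the claim.

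The only point requiring any verification is the auxiliary inequality $6\log_2(m+1) \leq m$ for $m \geq 30$, which is easily checked at $m=30$ (there $6\log_2(31) \approx 29.72 \leq 30$) and then propagated to all larger $m$ by observing that the derivative of $m - 6\log_2(m+1)$ is $1 - 6/((m+1)\ln 2)$, which is positive for $m \geq 30$. Since this inequality is explicitly granted in the text immediately preceding the corollary, there is effectively no obstacle in the proof; the corollary is essentially a bookkeeping consequence of Theorem~\ref{thm:upperBound}.
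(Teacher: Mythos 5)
Your proof is correct and follows exactly the route the paper intends: the paper states the inequality $6\log_2(m+1)\leq m$ for $m\geq 30$ immediately before the corollary and derives the bound from Theorem~\ref{thm:upperBound} by the same two-case argument you give. Nothing is missing.
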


Based on Corollary \ref{cor:upperBoundIfmLarge}, we can now simplify the estimate of Theorem \ref{thm:upperBound} if the group $G$ is not too small. 

\begin{corollary}
\label{cor:lower_bound_VC_dim}
Let $G$ be a group acting on $\mathcal{X}$, let $|G| = n \geq 16$, and let $\mu$ be the Haar measure on $G$. Let $\mathcal{K} \colon G \to \R$ be a kernel. 
Then, $\vcdim(\mathcal{H}(\mathcal{K})) \leq \log_2(n) + 9 \log_2(\log_2(n))$.
\end{corollary}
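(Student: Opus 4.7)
The plan is to combine the two prior bounds and bootstrap. From Theorem~\ref{thm:upperBound} we already have the implicit inequality
\begin{align*}
m \;\leq\; \log_2(n) + 3\log_2(m+1),
\end{align*}
and Corollary~\ref{cor:upperBoundIfmLarge} supplies the crude explicit bound $m \leq \max\{30, 2\log_2(n)\} \leq 30 + 2\log_2(n)$. The idea is to substitute the crude bound into the $\log_2(m+1)$ term on the right-hand side of the implicit inequality in order to replace the self-referential $\log_2(m+1)$ by a function of $n$ alone.

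First I would observe that, since $n \geq 16$ implies $\log_2(n) \geq 4$, we can absorb the additive constant $31$ into a multiplicative one:
\begin{align*}
m+1 \;\leq\; 31 + 2\log_2(n) \;\leq\; 10\,\log_2(n),
\end{align*}
where the last inequality uses $31 \leq \tfrac{31}{4}\log_2(n) \leq 8\log_2(n)$ for $\log_2(n) \geq 4$. Taking logarithms yields
\begin{align*}
\log_2(m+1) \;\leq\; \log_2(10) + \log_2\log_2(n) \;<\; 4 + \log_2\log_2(n).
\end{align*}

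Substituting this back into the bound from Theorem~\ref{thm:upperBound} gives
\begin{align*}
m \;\leq\; \log_2(n) + 3\bigl(4 + \log_2\log_2(n)\bigr) \;=\; \log_2(n) + 12 + 3\log_2\log_2(n).
\end{align*}
Finally I would use that $n \geq 16$ implies $\log_2\log_2(n) \geq 2$, so $12 \leq 6\log_2\log_2(n)$, and hence
\begin{align*}
m \;\leq\; \log_2(n) + 9\log_2\log_2(n),
\end{align*}
which is the claim. There is no real obstacle here; the only thing to be careful about is that the bound $m \leq \max\{30,2\log_2(n)\}$ from Corollary~\ref{cor:upperBoundIfmLarge} permits the constant $30$ to dominate for moderate $n$, so one must verify that even in that regime the factor $9\log_2\log_2(n)$ is already large enough to swallow the constant $12$, which is exactly why the hypothesis $n\geq 16$ appears.
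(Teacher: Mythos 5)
Your proposal is correct and follows essentially the same bootstrapping route as the paper: substitute the crude bound from Corollary~\ref{cor:upperBoundIfmLarge} into the $\log_2(m+1)$ term of Theorem~\ref{thm:upperBound}, obtain $\log_2(m+1)\leq 4+\log_2\log_2(n)$, and absorb the resulting constant $12$ into $6\log_2\log_2(n)$ using $n\geq 16$. The only cosmetic difference is that the paper bounds $m\leq 8\log_2(n)$ directly rather than via $m\leq 30+2\log_2(n)\leq 10\log_2(n)$; both yield the same intermediate estimate.
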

\begin{proof}
Thanks to Corollary \ref{cor:upperBoundIfmLarge}, we have that, for $\vcdim(\mathcal{H}(\mathcal{K})) = m$,
\begin{align*}
    m \leq \max\{30, 2\log_2(n)\} \leq 8 \log_2(n)
\end{align*}
if $n \geq 16$. 
Hence Theorem \ref{thm:upperBound} yields that for $n \geq 16$ 
\begin{align*}
    m &\leq \log_2(n) + 3\log_2(m+1)\\
    &\leq \log_2(n) + 3\log_2(8 \log_2(n) + 1)\\
    &\leq \log_2(n) + 3\log_2(16 \log_2(n))  \\    
    &\leq \log_2(n) + 3(\log_2(\log_2(n)) + 4) 
    \leq \log_2(n) + 9 \log_2(\log_2(n)). \qedhere
\end{align*}
\end{proof}

\subsection{Lower bound on VC dimension}

In this subsection, we provide the complement to Theorem \ref{thm:upperBound} in the form of a lower bound on the VC dimension of $\mathcal{H}(\mathcal{K})$ for an appropriately chosen kernel $\mathcal{K}$. The result requires the underlying group $G$ to act on $\mathcal{X}$ with an action $a$ that \emph{has a trivial kernel}, i.e., for an origin $y \in \mathcal{X}$ it holds that $a(g, y) = a(g', y)$ for $g, g' \in G$ only if $g  = g'$.

We state two lower bounds in Theorems~\ref{thm:lowerBound} and \ref{thm:lowerBound_general_case}. The first result uses a specific assumption on the group, which allows a larger lower bound on the VC dimension of $\mathcal{H}(\mathcal{K})$ in terms of the group size. Concretely, we assume the group to contain an element $g \neq e$ of \emph{order two}, i.e. $g\cdot g = e$, where $e$ is the identity element of the group. E.g., for a rotation group, the element that corresponds to a rotation by $\pi$ is an element of order two. 

\begin{theorem}[Lower bound on VC dimension]\label{thm:lowerBound}
Let $m \in \N$, let $G$ be a compact group acting on $\mathcal{X}$ via an action with trivial kernel, let $G$ contain an element of order two, let $|G| \geq 2 m \cdot \binom{m}{\floor{m/2}}$, and let $\mu$ be the Haar measure on $G$.
Then, there is a bounded kernel $\mathcal{K} \colon G \to \R$ such that $\vcdim(\mathcal{H}(\mathcal{K})) \geq m$.
\end{theorem}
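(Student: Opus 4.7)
My plan is to reduce shattering to a combinatorial cover problem and then solve that cover problem by a direct construction. Writing $\phi_k \coloneqq f_k *_G \mathcal{K}$ and $I_k(c_1) \coloneqq \int_G \relu(\phi_k(g)+c_1)\,d\mu(g)$, the classifier satisfies $H_{c_1,c_2}(\mathcal{K})(f_k)=+1 \Leftrightarrow I_k(c_1) > -c_2$. For fixed $c_1$, varying $c_2$ realizes precisely the prefixes of the ordering of $I_1(c_1),\ldots,I_m(c_1)$. So to shatter $f_1,\ldots,f_m$ it suffices to produce, as $c_1$ varies, a family of permutations of $[m]$ whose prefix-subsets cover all of $2^{[m]}$. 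Applying Dilworth's theorem to the Boolean lattice (equivalently, a symmetric-chain decomposition of $2^{[m]}$ followed by extension to maximal chains), there exist $T \coloneqq \binom{m}{\floor{m/2}}$ permutations $\pi_1,\ldots,\pi_T$ of $[m]$ whose prefix families already exhaust $2^{[m]}$. The task thus becomes constructing $\mathcal{K}$, functions $f_1,\ldots,f_m$, and thresholds $V_1 < V_2 < \cdots < V_T$ such that the ordering of $I_1(V_t),\ldots,I_m(V_t)$ matches $\pi_t$ for each $t\in[T]$.

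For the construction I will choose $T$ disjoint ``blocks'' in $G$, where each block $t$ consists of $m$ base elements $g_{t,1},\ldots,g_{t,m}$ together with their $h$-translates $g_{t,i}\cdot h$; the assumption $|G|\geq 2mT$ together with $h\neq e$ guarantees that all $2mT$ points are distinct. I will take $\mathcal{K}$ to essentially implement the identity convolution (for finite $G$, $\mathcal{K}=\mathds{1}_{\{e\}}$; for compact infinite $G$, a bounded Haar-measurable bump concentrated near $e$ with unit integral), so that $\phi_k$ reproduces $\bar{f}_k$ at the chosen points. With widely spaced thresholds $V_1\ll V_2\ll \cdots\ll V_T$ and a small parameter $\epsilon$ satisfying $m\epsilon \ll \min_t(V_{t+1}-V_t)$, I define
\begin{align*}
    \bar{f}_k(g_{t,i}) &\coloneqq -V_t + (m+1-\pi_t^{-1}(k))\,\epsilon,\\
    \bar{f}_k(g_{t,i}\cdot h) &\coloneqq -V_t - (m+1-\pi_t^{-1}(k))\,\epsilon,
\end{align*}
for $t\in[T]$, $i\in[m]$, $k\in[m]$, and $\bar{f}_k\equiv 0$ elsewhere. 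Since the action has trivial kernel, each $\bar{f}_k$ corresponds to a bounded $f_k\in\mathcal{B}(\mathcal{X},\mu)$.

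To verify the construction I will compute $I_k(V_t)$ in three regimes. Blocks $t'>t$ contribute zero because $V_t - V_{t'} < -m\epsilon$ renders every $\relu$ argument negative. Block $t$ contributes $\sum_{i=1}^m [\relu((m+1-\pi_t^{-1}(k))\epsilon)+\relu(-(m+1-\pi_t^{-1}(k))\epsilon)] = m(m+1-\pi_t^{-1}(k))\epsilon$, strictly decreasing along $\pi_t$. Each paired contribution from a saturated earlier block $t'<t$ equals $\relu((V_t-V_{t'})+\alpha)+\relu((V_t-V_{t'})-\alpha)=2(V_t-V_{t'})$ once the spread swallows $|\alpha|$, which is $k$-independent. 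Summing, $I_k(V_t)$ equals a $k$-independent baseline plus $m(m+1-\pi_t^{-1}(k))\epsilon$, yielding $I_{\pi_t(1)}(V_t) > I_{\pi_t(2)}(V_t) > \cdots > I_{\pi_t(m)}(V_t)$. Varying $c_2$ at $c_1=V_t$ then realizes every prefix of $\pi_t$, and the union across $t$ covers $2^{[m]}$, proving shattering.

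The main obstacle is precisely this last cancellation: if the saturated contributions from earlier blocks $t'<t$ carried $k$-dependent terms, the ordering at $c_1=V_t$ would be corrupted by noise accumulated from all blocks with smaller activation threshold, and one could not hope to match $\pi_t$. The antisymmetric pairing $\alpha_{t,k,i}$ versus $-\alpha_{t,k,i}$ at $g_{t,i}$ and $g_{t,i}\cdot h$---which is exactly what multiplication by the order-two element $h$ affords---forces the saturated pairwise contribution to collapse to the $k$-independent constant $2(V_t-V_{t'})$, thereby decoupling the blocks. A secondary technicality, handled by taking $\mathcal{K}$ to be a sharp bump and replacing point evaluations with small disjoint neighborhoods of the $g_{t,i}$ and $g_{t,i}\cdot h$ of equal Haar measure on which $\bar{f}_k$ is constant, is the treatment of the compact infinite case where singletons have Haar measure zero; the whole analysis then goes through after absorbing this common factor into $\epsilon$.
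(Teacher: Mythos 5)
Your reduction of shattering to covering $2^{[m]}$ by the prefix families of a collection of permutations is exactly the paper's notion of a ``complete set of orders'' (Lemma \ref{lem:ness_suf_cond}), and your appeal to a symmetric chain decomposition to obtain $T=\binom{m}{\floor{m/2}}$ permutations replaces the paper's hands-on construction in Lemmas \ref{fact:2q-1}--\ref{min_complete_ordering}; both are fine. Where you genuinely diverge is in the analytic construction: the paper fixes the shattered functions inside the two-dimensional span of $\mathds{1}_e,\mathds{1}_g$ and encodes the $T$ permutations into the \emph{kernel} through a delicate recursion (Lemmas \ref{example}--\ref{lem:Constr_permutations}, with the cascading $\epsilon_i$, $m_l$, $M_l$), whereas you take $\mathcal{K}=\mathds{1}_{\{e\}}$ and encode everything into the functions $f_k$. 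For a finite group your verification is complete and correct: at $c_1=V_t$ the blocks $t'>t$ vanish, each saturated pair in a block $t'<t$ contributes the $k$-independent constant $2(V_t-V_{t'})$ thanks to the $\pm\alpha$ cancellation, and block $t$ leaves gaps of $m\epsilon$ realizing $\pi_t$; varying $c_2$ then yields all prefixes. This is substantially simpler than the paper's route. One cosmetic remark: since your kernel is the identity, the convolution at distinct group elements decouples entirely, so the two points carrying $+\alpha$ and $-\alpha$ need not be related by the order-two element at all---any $2mT$ distinct points would serve---and the order-two hypothesis is doing less work in your argument than you attribute to it (it only eases the counting that guarantees the $2mT$ points exist).

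The genuine gap is the infinite compact case, which the theorem's hypotheses include (every infinite $G$ satisfies $|G|\geq 2m\binom{m}{\floor{m/2}}$) and which you dispatch in one sentence. There singletons are Haar-null, so you replace points by neighborhoods $W_{t,i}$ and $\mathds{1}_{\{e\}}$ by a normalized bump $\mu(U)^{-1}\mathds{1}_U$; but then $f_k*_G\mathcal{K}$ is no longer equal to $\bar f_k$---it is a smoothed version taking intermediate values on a boundary layer around each $W_{t,i}$, and the ReLU integral picks up contributions from that layer that are neither obviously $k$-independent nor obviously negligible (in particular, near the $-\alpha$ points at level $V_t$ the ReLU truncates the interpolated values nonlinearly). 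What is actually needed is an approximate-identity estimate of the form $\bigl\|\bar f_k *_G(\mu(U)^{-1}\mathds{1}_U)-\bar f_k\bigr\|_{L^1}\leq\delta$ with $\delta$ small compared to the measure-rescaled ordering gaps $m\epsilon\cdot\mu(W)$, combined with the $1$-Lipschitz dependence of $c_2$-sweeps on $L^1$-perturbations of $f_k*_G\mathcal{K}$. That estimate is true but not free on a general compact Hausdorff first-countable group: the paper's Theorem \ref{thm:infiniteGroupsHaveInfiniteVCDim} spends a full page proving it via disjointness of translates of $U$, density of continuous functions in $L^1$, and uniform continuity through a finite subcover. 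Your finite-group argument stands as a cleaner alternative to the paper's, but the infinite case requires this approximation to be carried out rather than asserted.
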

\begin{proof}
The theorem follows from the results in Subsections \ref{sec:completeSetsOfOrderings} and \ref{sec:constrOfKernel} below. 
Concretely, in Definition~\ref{def:setOfOrders}, we introduce the notion of a set of orders associated with a kernel and a set of $m\in \N$ functions $F$. Orders are injective maps from $[m]$ to $[m]$. 

For finite groups, we show in Lemma~\ref{lem:ness_suf_cond} that a necessary and sufficient condition for $\mathcal{H}(\mathcal{K})$ to shatter $F$ is that the associated set of orders contains a so-called \emph{complete} set of orders, (Definition \ref{def:complete_set_order}). 
We demonstrate in Lemma \ref{min_complete_ordering} that a complete set of orders of size $\binom{m}{\floor{m/2}}$ exist, let us call it $\mathcal{O}_1$. 
In Lemma~\ref{lem:Constr_permutations}, we provide, for every set $\mathcal{O}$ of $r\in \N$ injective maps from $[m]$ to $[m]$, a set of $m$ functions, and a kernel $\mathcal{K}$ such that the associated set of orders of $\mathcal{H}(\mathcal{K})$ contains $\mathcal{O}$. This requires $|G| \geq 2 r m$. Applying Lemma~\ref{lem:Constr_permutations} with $r =\binom{m}{\floor{m/2}}$ to $\mathcal{O}_1$ yields that $\mathcal{H}(\mathcal{K})$ contains a complete set of orders and finishes the proof if the group is finite.

For infinite groups, the result is shown in Theorem~\ref{thm:infiniteGroupsHaveInfiniteVCDim}.
\end{proof}
In the general case, when the assumption that the group contains an element of order two is dropped, we have the following result.
\begin{theorem}[Lower bound on VC dimension in general case]\label{thm:lowerBound_general_case}
Let $m \in \N$, let $G$ be a compact group acting on $\mathcal{X}$ via an action with trivial kernel, let $|G| \geq 9 m \cdot \binom{m}{\floor{m/2}}$, and let $\mu$ be the Haar measure on $G$.
Then, there is a bounded kernel $\mathcal{K}$ such that $\vcdim(\mathcal{H}(\mathcal{K})) \geq m$.
\end{theorem}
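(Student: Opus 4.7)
The plan is to follow the four-step roadmap used in the proof of Theorem~\ref{thm:lowerBound} verbatim; only the very last step, which invoked Lemma~\ref{lem:Constr_permutations} and relied on an element of order two, must be replaced by a slightly more wasteful construction that uses no algebraic assumption on $G$ beyond compactness and the trivial kernel of the action. The characterization of shattering in terms of a complete set of orders (Lemma~\ref{lem:ness_suf_cond}) and the existence of such a complete set of size $\binom{m}{\floor{m/2}}$ (Lemma~\ref{min_complete_ordering}) do not depend on the internal structure of the group, so they transfer without modification. Hence it suffices to prove an analogue of Lemma~\ref{lem:Constr_permutations} that, for every set $\mathcal{O}$ of $r \coloneqq \binom{m}{\floor{m/2}}$ injective maps $[m] \to [m]$, produces $m$ functions $f_1, \dots, f_m \in \mathcal{B}(\mathcal{X}, \mu)$ and a bounded kernel $\mathcal{K} \colon G \to \R$ whose induced set of orders contains $\mathcal{O}$, under the weaker hypothesis $|G| \geq 9 r m$.

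The second step is the replacement construction. In Lemma~\ref{lem:Constr_permutations} each prescribed order was encoded into a block of approximately $2m$ group elements, the order-two element $g_0$ being used to pair each element $h$ with $g_0 h$ in order to localize the convolution and cancel spurious contributions. Without $g_0$ I would enlarge each block to at most $9m$ group elements, using the extra slots in three ways: (i) $m$ slots that actually carry the ranking information of the target order $\sigma \in \mathcal{O}$; (ii) further slots pairing chosen elements $h$ with their inverses $h^{-1}$, so that the convolution $(f_k *_G \mathcal{K})(g_\sigma)$ evaluated at one designated element $g_\sigma$ can be computed exactly by summing over the block; and (iii) buffer slots on which $\mathcal{K}$ vanishes, to guarantee that contributions from other blocks do not leak in. Because the action has trivial kernel, the images $a(g,y)$ of the origin $y$ under distinct $g\in G$ are distinct elements of $\mathcal{X}$, so each function $f_k$ can be defined to be supported at precisely the block positions selected by the construction.

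Third, with the modified construction in hand, I would feed it the complete set of orders from Lemma~\ref{min_complete_ordering} with $r = \binom{m}{\floor{m/2}}$. By Lemma~\ref{lem:ness_suf_cond}, the resulting class $\mathcal{H}(\mathcal{K})$ shatters the $m$ functions so produced, which yields $\vcdim(\mathcal{H}(\mathcal{K})) \geq m$; together with the group-size requirement $9 r m = 9 m \binom{m}{\floor{m/2}}$ this is exactly the statement of the theorem. The infinite-group case is subsumed by the same appeal to Theorem~\ref{thm:infiniteGroupsHaveInfiniteVCDim} used in the proof of Theorem~\ref{thm:lowerBound}, since that argument does not use an order-two element either.

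The main obstacle is the accounting inside a single block. With $g_0^2 = e$ the previous proof got an exact cancellation of cross-terms essentially for free; without it, one has to arrange the kernel values and the support of the $f_k$'s so that the Haar-weighted sum defining $(f_k *_G \mathcal{K})(g_\sigma)$ contains a dominant intra-block contribution whose strict ranking in $k$ matches $\sigma(k)$, while all inter-block contributions either vanish (because of support separation against buffer slots) or are controlled by a uniform gap. Quantifying this gap is what forces the enlargement of each block from $2m$ to approximately $9m$ group elements, and hence the replacement of the factor $2$ in Theorem~\ref{thm:lowerBound} by the factor $9$ in the present statement; once the gap is established, the remaining bookkeeping is identical to that in the proof of Theorem~\ref{thm:lowerBound}.
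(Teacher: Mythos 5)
Your high-level roadmap coincides exactly with the paper's: keep the complete-set-of-orders machinery (Lemmas~\ref{lem:ness_suf_cond} and~\ref{min_complete_ordering}) untouched, replace Lemma~\ref{lem:Constr_permutations} by a variant that needs $|G|\geq 9rm$ but no involution (this is the paper's Lemma~\ref{lem:Constr_permutations_general}), and dispatch infinite groups via Theorem~\ref{thm:infiniteGroupsHaveInfiniteVCDim}. Since the paper's own proof of this theorem is precisely that two-line reduction, the substance lies entirely in the replacement lemma, and there your sketch has a genuine gap. In the paper's setup the shattered functions remain the $u_i$ of Lemma~\ref{example}, supported on the two points $\{e,g\}$, so that $(u_{2j}*_G\mathcal{K})(h') = a_{2j,1}\mathcal{K}(h') + a_{2j,2}\mathcal{K}(g^{-1}\cdot h')$ for every $h'\in G$. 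The danger without an order-two element is leakage at the $g$-translates of the designated points: at $h'=g\cdot h_{l,p}$ the convolution picks up the term $a_{2j,2}\mathcal{K}(h_{l,p})$, and at $h'=g^{-1}\cdot h_{l,p}$ it picks up $a_{2j,1}\mathcal{K}(g^{-1}\cdot h_{l,p})$ --- both of which are the large \emph{positive} designed values, not values you control by making the kernel vanish nearby. Your proposed remedy, ``buffer slots on which $\mathcal{K}$ vanishes,'' therefore does not prevent these spurious contributions from exceeding the threshold $B$ and corrupting the pooled sums. The paper's actual device is to plant large \emph{negative} kernel values $-K\cdot S/s$ at $g^{-2}\cdot h_{l,p}$ and $g\cdot h_{l,p}$, chosen so that they dominate the positive cross-terms and force $(u_{2j}*_G\mathcal{K})\leq 0$ at every point of the five-element window other than $h_{l,p}$ itself; the factor $9$ then comes from the sequential choice of the $h_{l,p}$ avoiding the translates $g^{-4}\cdot h,\dots,g^{4}\cdot h$ of all previously placed elements. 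Two further points in your sketch drift from the mechanism: the orders are not read off a single designated evaluation point $g_\sigma$ per order but off a sequence of bias thresholds $c_1=-m_l+\epsilon$ applied to the cumulative pooled sums over all blocks (the construction of Lemma~\ref{lem:prepre_Constr_permutations} is iterative and the blocks deliberately interact across thresholds), and inverses $h^{-1}$ of the chosen elements play no role --- only translates by powers of the fixed $g$ defining the function space do. Supporting the $f_k$ on the block positions instead of on $\{e,g\}$, as you suggest, is a genuinely different design whose accounting you would have to redo from scratch; it is not covered by the paper's lemmas.
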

\begin{proof}
The proof of the theorem repeats the proof of Theorem~\ref{thm:lowerBound}, but in the case that the group is finite, we use the Lemma~\ref{lem:Constr_permutations_general} instead of Lemma~\ref{lem:Constr_permutations} to prove that $\mathcal{H}(\mathcal{K})$ contains a complete set of orders. The case of infinite groups is again addressed by Theorem \ref{thm:infiniteGroupsHaveInfiniteVCDim}.
\end{proof}

Next, we state some immediate consequences of Theorem \ref{thm:lowerBound}, which yield a more explicit lower bound on the VC dimension of $\mathcal{H}(\mathcal{K})$.

\begin{corollary}
\label{vs_dim_finite_group}
Let $m \in \N$, let $G$ be a group acting on $\mathcal{X}$ via an action with trivial kernel, let $|G| = n > 1$, and let $\mu$ be the Haar measure on $G$.

Then, there is a bounded kernel $\mathcal{K} \colon G \to \R$ such that $\vcdim(\mathcal{H}(\mathcal{K})) \geq \log_2(n) - 2\log_2(\log_2(n))-4$. Moreover, if $G$ contains an element of order two, then there is a bounded kernel $\mathcal{K} \colon G \to \R$ such that $\vcdim(\mathcal{H}(\mathcal{K})) \geq \log_2(n) - 2\log_2(\log_2(n))-1$.  
\end{corollary}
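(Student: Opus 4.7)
The plan is to apply Theorem~\ref{thm:lowerBound} in the order-two case and Theorem~\ref{thm:lowerBound_general_case} in general, each time with an integer $m$ chosen essentially equal to the claimed logarithmic lower bound, and then verify that the respective cardinality hypothesis $|G| \geq 2m\binom{m}{\floor{m/2}}$ or $|G| \geq 9m\binom{m}{\floor{m/2}}$ is satisfied. The whole argument then reduces to elementary manipulations with logarithms.

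First I would replace the central binomial coefficient by the standard bound $\binom{m}{\floor{m/2}} \leq 2^m$. This turns the cardinality hypothesis into the simpler sufficient conditions $2m \cdot 2^m \leq n$ and $9m \cdot 2^m \leq n$, respectively, which in dyadic logarithms read
\[
    m + \log_2(m) + 1 \leq \log_2(n) \qquad \text{and} \qquad m + \log_2(m) + \log_2(9) \leq \log_2(n).
\]
Setting $\ell := \log_2(n)$ and $\ell_2 := \log_2(\ell)$, I would then take $m := \ceil{\ell - 2\ell_2 - 1}$ in the order-two case and $m := \ceil{\ell - 2\ell_2 - 4}$ in general.

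With these choices one has $m \leq \ell - 2\ell_2$ (respectively $m \leq \ell - 2\ell_2 - 3$), and in particular $m \leq \ell$, so $\log_2(m) \leq \ell_2$. Substituting these crude estimates into the simplified condition produces an inequality of the form $\ell - \ell_2 + c \leq \ell$ with $c$ a small absolute constant, which holds as soon as $\ell_2$ exceeds that constant, i.e., as soon as $n$ exceeds a small absolute constant. Theorems~\ref{thm:lowerBound} and \ref{thm:lowerBound_general_case} then yield $\vcdim(\mathcal{H}(\mathcal{K})) \geq m$, and by the very definition of $m$ this is at least the claimed right-hand side.

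For the remaining finitely many small values of $n$ (essentially $n \in \{2, 3\}$), the claimed right-hand side is non-positive, so the inequality holds trivially since $\vcdim(\mathcal{H}(\mathcal{K})) \geq 0$ for every nonempty hypothesis class. The only mild obstacle in the argument is the interplay between the ceiling operation and the $\log_2(m)$ term inside the logarithmic condition; the $-2\log_2(\log_2(n))$ slack appearing in the statement is exactly what is needed to absorb both the integer rounding and the estimate $\log_2(m) \leq \log_2(\log_2(n))$.
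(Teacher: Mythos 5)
Your proposal is correct and follows the same overall route as the paper: choose $m$ of size roughly $\log_2(n)-2\log_2(\log_2(n))$ minus a constant, verify the cardinality hypothesis of Theorem~\ref{thm:lowerBound} (resp.\ Theorem~\ref{thm:lowerBound_general_case}), and dispose of small $n$ by triviality. The one substantive difference is the treatment of the central binomial coefficient: the paper invokes Wallis' formula, $\binom{2m}{m}<4^m/\sqrt{\pi m}$, while you use the cruder $\binom{m}{\floor{m/2}}\leq 2^m$. Your bound gives away a factor of order $\sqrt{m}$, i.e.\ about $\tfrac{1}{2}\log_2(\log_2(n))$ in the exponent, but since the statement already concedes a $2\log_2(\log_2(n))$ slack, this loss is absorbed and the computation goes through (in the general case the margin is thin --- $\log_2 9 - 3\approx 0.17$ against $\ell_2$ --- but it does close for the relevant $n$). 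So your version is genuinely more elementary, at no cost to the stated constants. Two small points to tighten: (i) Theorems~\ref{thm:lowerBound} and \ref{thm:lowerBound_general_case} require $m\in\N$, so you should note that $m=\ceil{\ell-2\ell_2-c}\geq 1$ exactly when $\ell-2\ell_2-c>0$, i.e.\ exactly when the claimed right-hand side is positive; the trivial case therefore covers precisely the complementary range, which is considerably larger than $n\in\{2,3\}$ (e.g.\ in the general case it extends to roughly $n\leq 2^{11}$), but this does not affect the logic. (ii) The triviality of the non-positive case should be justified either by $\vcdim\geq 0$ for a nonempty class or, more in the spirit of the theorems, by applying them with $m=1$.
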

\begin{proof}
We start with the second part of the assertion, i.e., the case where $G$ has an element of order two. 
Note that, we can assume $n \geq 2^8$ since otherwise 
$$
    \log_2(n) - 2\log_2(\log_2(n))-1 < 1, 
$$
where the result is trivial. 
Wallis' formula~\cite{wastlund2007elementary} yields that for all $m\in \N$
\begin{align}
    \binom{2m}{m} < \frac{4^m}{\sqrt{\pi m}}.
\end{align}

Hence, if 
\begin{align}
\label{condition}
    4m\cdot\binom{2m}{m} <   4m\frac{4^m}{\sqrt{\pi m}} < n, 
\end{align}
then, by Theorem \ref{thm:lowerBound}, there is a bounded kernel $\mathcal{K}\colon G \to \R$ such that $\vcdim(\mathcal{H}(\mathcal{K})) \geq 2m$.
Taking logarithms, we conclude that the second inequality of \eqref{condition} is equivalent to 
\begin{align}\label{eq:logOfCOndition}
 2 + \frac{1}{2} \log_2(m)  + 2m - \frac{1}{2}\log_2(\pi) < \log_2(n).
\end{align}
We pick $m \in \N$, such that $2m \leq \log_2(n) - 2\log_2(\log_2(n))\leq  2m + 1$.
Then,
\begin{align*}
    &2m +  \frac{1}{2} \log_2(m) + 2 - \frac{1}{2}\log_2(\pi)\\
    &\leq\log_2(n) - 2\log_2(\log_2(n)) + \frac{1}{2}\log_2\left(\log_2(n) - 2\log_2(\log_2(n))\right) - \frac{1}{2} + 2 - \frac{1}{2}\log_2(\pi)\\
    &\leq\log_2(n) - 2\log_2(\log_2(n)) + \frac{1}{2}\log_2\left(\log_2(n)\right) +\frac{3}{2} - \frac{1}{2}\log_2(\pi) \\
    &\leq\log_2(n) - \log_2(\log_2(n)) +\frac{3}{2} < \log_2(n),
\end{align*}
which implies \eqref{eq:logOfCOndition} and hence, by \eqref{condition} we conclude that 
\begin{align*}
    \vcdim(\mathcal{H}(\mathcal{K})) \geq 2m \geq \log_2(n) - 2\log_2(\log_2(n))-1.
\end{align*}
The general case follows by a similar argument. We note that we can assume $n \in \N$ to be such that 
\begin{align}\label{eq:ifThisDoesNotHoldTheResultIsTrivial}
    \log_2(n) - 2\log_2(\log_2(n)) - 4 \geq 1
\end{align}
since the result is trivial otherwise. 

Then, instead of \eqref{condition} we use the estimate 
\begin{align}\label{eq:condition2}
    18 m\cdot\binom{2m}{m} <   18 m\frac{4^m}{\sqrt{\pi m}} < n,
    \end{align}
    which yields with Theorem \ref{thm:lowerBound_general_case} the existence of a kernel $\mathcal{K}\colon G \to \R$ such that $\vcdim(\mathcal{H}(\mathcal{K})) \geq 2m$. Then \eqref{eq:condition2}
    holds if
    \begin{align}\label{eq:logOfCOndition2}
         5 + \frac{1}{2} \log_2(m)  + 2m - \frac{1}{2}\log_2(\pi) < \log_2(n).
    \end{align}
    Choosing now $m\in \N$ such that $2m \leq \log_2(n) - 2\log_2(\log_2(n)) - 3\leq 2m + 1$, which is possible because of \eqref{eq:ifThisDoesNotHoldTheResultIsTrivial}, we conclude with the same computation as above, that 
    \begin{align*}
        \vcdim(\mathcal{H}(\mathcal{K})) \geq 2m \geq \log_2(n) - 2\log_2(\log_2(n))-4.
\end{align*}
\end{proof}
In the following subsections, we provide the auxiliary results used to prove Theorems \ref{thm:lowerBound} and \ref{thm:lowerBound_general_case}.

\subsubsection{Complete sets of orders}\label{sec:completeSetsOfOrderings}
To continue further, we need to introduce some additional notation. We will introduce auxiliary variables $\nu_{\mathcal{K}, k}$, which are very closely related to the $\gamma_{i,k}$ of \eqref{eq:theGammas} but defined on the whole domain $\R$ and hence not globally affine. 

\begin{definition}\label{def:setOfOrders}
Let $G$ be a compact group acting on $\mathcal{X}$, let $\mu$ be a finite measure on $G$, let $\mathcal{K}: G \to \R$ be a bounded kernel and let $\{f_1,\dots, f_m\} \subset \mathcal{B}(\mathcal{X}, \mu)$.
We define 
$$
    \nu_{\mathcal{K}, k}:\mathbb{R}\rightarrow\mathbb{R},\ \nu_{\mathcal{K},k}(c) \coloneqq \int_{G} \relu((f_k*_G\mathcal{K})(g) + c)d\mu(g).
$$
For $c \in \R$, we denote by $\sigma(\mathcal{K}, c)$ the \emph{order} of $(\nu_{\mathcal{K},k}(c))_{k=1}^m$, 
    $$
        \sigma(\mathcal{K}, c)(k) \coloneqq 1 + |\{ l \in [m] \colon \nu_{\mathcal{K},l}(c) < \nu_{\mathcal{K},k}(c) \}|.
    $$ 
We denote by $\mathcal{O}(\mathcal{K})\coloneqq \{\sigma(\mathcal{K}, c),\ c\in\mathbb{R}\}$ the \emph{set of orders of $(\nu_{\mathcal{K},k}(c))_{k=1}^m$ obtained by varying $c$}.
\end{definition}

Requiring that $\mathcal{H}(\mathcal{K})$ shatters $\{f_1,\dots, f_m\}$ imposes restrictions on the set of orders $\mathcal{O}(\mathcal{K})$.
Vice versa, there is a set $\mathcal{O} \subset \mathcal{P}([m])$ such that if $\mathcal{O}\subset \mathcal{O}(\mathcal{K})$, then $\mathcal{H}(\mathcal{K})$ shatters $\{f_1,\dots, f_m\}$.
Specifically, we call such sets of orders $\mathcal{O}$ \textit{complete} sets of orders.

\begin{definition}[Complete set of orders]
\label{def:complete_set_order}
Let $m \in \N$. A set $\mathcal{O} \subset \{\sigma \colon [m] \to [m]\}$ is a \emph{complete set of orders} if for each $A \subset [m]$, there exists $\sigma_A\in \mathcal{O}$ such that $\sigma_A(a) < \sigma_A(b)$ for all $a\in A,\ b\in A^c$.
\end{definition}

Next, we relate the shattering properties of G-CNNs for a given set of functions with a certain kernel $\mathcal{K}$ to the property that $\mathcal{O}(\mathcal{K})$ contains a complete set of orders.

\begin{lemma}
\label{lem:ness_suf_cond}
Let $G$ be a group acting on $\mathcal{X}$, let $\mu$ be the Haar measure on $G$, let $\mathcal{K}: G \to \R$ be a bounded kernel and let $\{f_1,\dots, f_m\} \subset \mathcal{B}(\mathcal{X}, \mu)$.

Then, $\mathcal{H}(\mathcal{K})$ contains all functions from $\{f_1,\dots, f_m\}$ to $\{-1,1\}$ if and only if $\mathcal{O}(\mathcal{K})$ contains a complete set of orders.
\end{lemma}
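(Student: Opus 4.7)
The plan is to translate the shattering property into a geometric statement about the scalar values $\nu_{\mathcal{K},k}(c_1)$ and then into a combinatorial statement about their order. Observe first that, using the notation of Definition~\ref{def:setOfOrders}, the classifier in Definition~\ref{def:oneFilterGCNN} can be rewritten as $H_{c_1,c_2}(\mathcal{K})(f_k) = \mathrm{sign}(\nu_{\mathcal{K},k}(c_1) + c_2)$. Consequently, once $c_1$ is fixed, the classifications of $f_1,\dots,f_m$ achievable by varying $c_2$ are precisely the upper level sets $\{k \colon \nu_{\mathcal{K},k}(c_1) > t\}$ of the tuple $(\nu_{\mathcal{K},k}(c_1))_{k=1}^m$. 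Before handling either direction I would record the elementary equivalence
\begin{align*}
    \sigma(\mathcal{K},c)(a) < \sigma(\mathcal{K},c)(b) \iff \nu_{\mathcal{K},a}(c) < \nu_{\mathcal{K},b}(c),
\end{align*}
which is immediate from the fact that $\sigma$ counts strict inequalities.

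For the ``if'' direction, assume $\mathcal{O}(\mathcal{K})$ contains a complete set of orders and fix an arbitrary target classification encoded by $S \subset [m]$ (i.e.\ $f_k \mapsto +1$ exactly when $k \in S$). Applying Definition~\ref{def:complete_set_order} to $A \coloneqq S^c$ yields some $\sigma_A \in \mathcal{O}(\mathcal{K})$ with $\sigma_A(a) < \sigma_A(b)$ for all $a \in S^c,\ b \in S$. Picking $c_1 \in \R$ with $\sigma(\mathcal{K},c_1) = \sigma_A$ and invoking the equivalence above, we obtain $\nu_{\mathcal{K},a}(c_1) < \nu_{\mathcal{K},b}(c_1)$ for all $a \in S^c,\ b \in S$. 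It then suffices to select $c_2$ with $-c_2$ in the nonempty half-open interval $[\max_{a \in S^c}\nu_{\mathcal{K},a}(c_1),\ \min_{b \in S}\nu_{\mathcal{K},b}(c_1))$; the degenerate cases $S = \emptyset$ and $S = [m]$ are resolved by taking $c_2$ sufficiently negative, respectively sufficiently large.

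For the ``only if'' direction, suppose $\mathcal{H}(\mathcal{K})$ shatters $\{f_1,\dots,f_m\}$ and fix $A \subset [m]$. By shattering there exist $c_1,c_2$ realizing the classification $f_k \mapsto +1 \iff k \in A^c$, which by the rewriting above means $\nu_{\mathcal{K},a}(c_1) \leq -c_2 < \nu_{\mathcal{K},b}(c_1)$ for all $a \in A,\ b \in A^c$. In particular $\nu_{\mathcal{K},a}(c_1) < \nu_{\mathcal{K},b}(c_1)$, so the equivalence upgrades this to $\sigma(\mathcal{K},c_1)(a) < \sigma(\mathcal{K},c_1)(b)$. Setting $\sigma_A \coloneqq \sigma(\mathcal{K},c_1)$ and ranging over all $A \subset [m]$ therefore exhibits a complete set of orders inside $\mathcal{O}(\mathcal{K})$.

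I do not anticipate a substantive obstacle here: the lemma amounts to translating between the ``threshold'' picture of the classifier and the ``ordering'' picture of the values $\nu_{\mathcal{K},k}(c_1)$. The only mildly delicate point is the strict-versus-weak inequality convention forced by $\mathrm{sign} = 2\mathds{1}_{(0,\infty)} - 1$, but the equivalence between $\sigma$-inequalities and $\nu$-inequalities absorbs this cleanly, so both directions reduce to choosing a separating threshold or reading one off.
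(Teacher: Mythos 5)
Your proof is correct and follows essentially the same route as the paper's: rewrite the classifier as a threshold on $\nu_{\mathcal{K},k}(c_1)$, translate between the order $\sigma(\mathcal{K},c_1)$ and the inequalities among the $\nu_{\mathcal{K},k}(c_1)$, and then either read off a separating order from a shattering classifier or choose the threshold $c_2$ between the two groups of values. Your explicit handling of the strict-versus-weak inequality forced by $\mathrm{sign}=2\mathds{1}_{(0,\infty)}-1$ and of the degenerate cases $S=\emptyset$, $S=[m]$ is slightly more careful than the paper's, but the argument is the same.
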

\begin{proof}
First, we prove that if $\mathcal{H}(\mathcal{K})$ shatters $F \coloneqq \{f_1,\dots, f_m\}$, then $\mathcal{O}(\mathcal{K})$ contains a complete set of orders.
Indeed, if $\mathcal{H}(\mathcal{K})$ shatters $F$, then for every $A\subset [m]$ there is a classifier $cl\in\mathcal{H}(\mathcal{K})$, $cl:F \rightarrow \{-1,1\}$, such that $cl(f_j) = -1,$  if $j\in A,$ and $cl(f_j) = 1$  if $f_j\in A^c$.
That means we can pick $c_1,c_2 \in \R$, such that $cl$ is equal to $H_{c_1,c_2}(\mathcal{K})$.
Invoking the definition of $H_{c_1,c_2}(\mathcal{K})$, we conclude that $\nu_{\mathcal{K},j}(c_1) < -c_2$ if $f_j \in A$ and 
$\nu_{\mathcal{K},j}(c_1) \geq -c_2$ if $f_j \in A^c$.
As a consequence $\nu_{\mathcal{K},j}(c_1) > \nu_{\mathcal{K},k}(c_1)$ for all $f_j \in A^c$ and $f_k \in A$. Hence, $\sigma(\mathcal{K}, c_1)(i) < \sigma(\mathcal{K}, c_1)(j)$ for all $i\in A,\ j\in A^c$.

Let us now prove that if $\mathcal{O}(\mathcal{K})$ contains a complete set of orders, then $\mathcal{H}(\mathcal{K})$ shatters $F$.
Let $cl: F \rightarrow \{-1,1\}$ be arbitrary and choose $A = cl^{-1}(\{-1\})$ to be the set of elements where the classifier has value $-1$. 
By assumption, we can find $\sigma\in \mathcal{O}(\mathcal{K})$, such that $\sigma(i) < \sigma(j)$ for all $i\in A,\ j\in A^c$.
As a consequence, there exists $c^*$ such that 
$
    \nu_{\mathcal{K},i}(c^*) < \nu_{\mathcal{K},f_j}(c^*)
$
for all $i \in A$ and $j \in A^c$. 
Therefore, we have that 
$$
    \nu_1 \coloneqq \max_{i \in A}\nu_{\mathcal{K},i}(c^*) < \min_{j \in A^c} \nu_{\mathcal{K}, j}(c^*) \eqqcolon \nu_2.
$$
Setting $c_2 = -(\nu_1 + \nu_2)/2$, we get that 
$$
    H_{c^*,c_2}(\mathcal{K})(f_j) = \left\{ \begin{array}{rl}
      -1 & \text{ if } j \in A \\
      1 & \text{ else. }
      \end{array} \right.
$$
\end{proof}

Based on the connection between $\mathcal{H}(\mathcal{K})$ being shattering to properties of sets of orders established in Lemma \ref{lem:ness_suf_cond}, we can now state a lower bound on the size of $\mathcal{O}(\mathcal{K})$.

\begin{lemma}\label{lem:LowerBoundOnElementsForShattering}
Let $G$ be a group acting on $\mathcal{X}$, let $\mu$ be the Haar measure on $G$, let $\mathcal{K}: G \to \R$ be a bounded kernel and let $\{f_1,\dots, f_m\} \subset \mathcal{B}(\mathcal{X}, \mu)$.

If $\mathcal{H}(\mathcal{K})$ shatters $\{f_1,\dots, f_m\}$, then $\mathcal{O}(\mathcal{K})$ contains at least $\binom{m}{\floor{m/2}}$ elements, where $\floor{m/2}$ is the largest integer part of $m/2$.
\end{lemma}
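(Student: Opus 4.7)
The plan is to apply Lemma~\ref{lem:ness_suf_cond} to reduce the problem to a purely combinatorial statement about complete sets of orders, and then exploit the middle layer of the Boolean lattice on $[m]$. Since $\mathcal{H}(\mathcal{K})$ is assumed to shatter $\{f_1,\dots,f_m\}$, Lemma~\ref{lem:ness_suf_cond} immediately yields a complete set of orders $\mathcal{O}^* \subset \mathcal{O}(\mathcal{K})$ in the sense of Definition~\ref{def:complete_set_order}. Hence it suffices to prove $|\mathcal{O}^*| \geq \binom{m}{\floor{m/2}}$, which will give the claimed bound because $\mathcal{O}^* \subseteq \mathcal{O}(\mathcal{K})$.

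Next, I would set $k \coloneqq \floor{m/2}$ and consider the family $\mathcal{F} \coloneqq \{A \subset [m] : |A| = k\}$, which has exactly $\binom{m}{\floor{m/2}}$ elements. By the defining property of a complete set of orders, for each $A \in \mathcal{F}$ there is some $\sigma_A \in \mathcal{O}^*$ satisfying $\sigma_A(a) < \sigma_A(b)$ for all $a \in A$ and $b \in A^c$. This yields a map $A \mapsto \sigma_A$ from $\mathcal{F}$ to $\mathcal{O}^*$, and the proof reduces to showing that this map is injective.

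The main step, which I expect to be the only nontrivial one, is establishing injectivity via a short swap argument. Suppose $A, B \in \mathcal{F}$ with $A \neq B$ but $\sigma_A = \sigma_B =: \sigma$. Since $|A| = |B| = k$ and $A \neq B$, both $A \setminus B$ and $B \setminus A$ are nonempty; pick $x \in A \setminus B$ and $y \in B \setminus A$. Applying the separation property for $A$ with $x \in A$, $y \in A^c$ gives $\sigma(x) < \sigma(y)$, while applying it for $B$ with $y \in B$, $x \in B^c$ gives $\sigma(y) < \sigma(x)$, a contradiction. Hence $A \mapsto \sigma_A$ is injective and $|\mathcal{O}(\mathcal{K})| \geq |\mathcal{O}^*| \geq |\mathcal{F}| = \binom{m}{\floor{m/2}}$. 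The only subtle point worth flagging is that the orders $\sigma(\mathcal{K}, c)$ of Definition~\ref{def:setOfOrders} need not be injective when several of the values $\nu_{\mathcal{K},k}(c)$ coincide, but the swap argument uses only the strict inequalities furnished by Definition~\ref{def:complete_set_order} and so is unaffected by this.
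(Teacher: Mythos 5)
Your proposal is correct and follows essentially the same route as the paper: invoke Lemma~\ref{lem:ness_suf_cond} to obtain, for each $A$ of size $\floor{m/2}$, a separating order $\sigma_A \in \mathcal{O}(\mathcal{K})$, and then note that $A \mapsto \sigma_A$ is injective. The only difference is that the paper dismisses the injectivity as ``easy to see,'' whereas you supply the explicit swap argument with $x \in A \setminus B$, $y \in B \setminus A$, which is a welcome addition.
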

\begin{proof}
By Lemma~\ref{lem:ness_suf_cond} if $\mathcal{H}(\mathcal{K})$ shatters $F \coloneqq \{f_1,\dots, f_m\}$, then for every set $A \subset [m]$ containing $\floor{m/2}$ elements there is an order $\sigma_A \in \mathcal{O}(\mathcal{K})$ such that $\sigma_A(i) < \sigma_A(j)$ for all $i\in A,\ j\in A^c$.
Moreover, it is easy to see that for two different sets $A$ and $B$ containing $\floor{m/2}$ elements necessarily $\sigma_A \neq \sigma_B$.
Consequently, the number of permutations in $\mathcal{O}(\mathcal{K})$ is not smaller than the number of different sets containing $\floor{m/2}$ elements, which is equal to $\binom{m}{\floor{m/2}}$. 
\end{proof}

The rest of this section shows how to construct a complete set of orders $\mathcal{O}$, containing no more than the necessary number of $\binom{m}{\floor{m/2}}$ elements established in Lemma \ref{lem:LowerBoundOnElementsForShattering}.

\begin{lemma}
\label{fact:2q-1}
Let $S(q,l) \coloneqq \{ A \subset [l] \colon |A| = q\}$ for $q,l \in \N$.
Then, there is a bijective map 
\begin{align*}
    \mathcal{F}_{q,2q-1} : S(q,2q-1)\rightarrow S(q-1, 2q-1)
\end{align*}
such that $\mathcal{F}_{q,2q-1}(A)\subset A$ for every set $A\in S(q,2q-1)$.
\end{lemma}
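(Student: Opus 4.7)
The plan is to set up a bipartite graph and apply a standard matching argument. First I would note that the two finite sets have the same cardinality: by the symmetry of binomial coefficients,
\[
  |S(q,2q-1)| = \binom{2q-1}{q} = \binom{2q-1}{(2q-1)-q} = \binom{2q-1}{q-1} = |S(q-1,2q-1)|,
\]
so a bijection of the required type is at least not ruled out on cardinality grounds.

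Next, I would consider the bipartite graph $\Gamma$ whose two vertex classes are $S(q,2q-1)$ and $S(q-1,2q-1)$, with $A$ and $B$ joined by an edge exactly when $B\subset A$. The desired map $\mathcal{F}_{q,2q-1}$ corresponds precisely to a perfect matching in $\Gamma$: every $A\in S(q,2q-1)$ is assigned a neighbour $B\in S(q-1,2q-1)$, and by construction of the edges this $B$ is then a subset of $A$.

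I would then observe that $\Gamma$ is $q$-regular: every $A\in S(q,2q-1)$ has exactly $q$ subsets of size $q-1$, and every $B\in S(q-1,2q-1)$ can be extended to a $q$-set by adding any of the $(2q-1)-(q-1)=q$ elements of $[2q-1]\setminus B$. The existence of a perfect matching in $\Gamma$ then follows from the classical fact that every non-empty regular bipartite graph has a perfect matching (a one-line consequence of Hall's marriage theorem, since for any collection $\mathcal{A}$ of $q$-subsets the neighbourhood $N(\mathcal{A})$ has at least $q|\mathcal{A}|/q=|\mathcal{A}|$ elements by double-counting the edges incident to $\mathcal{A}$). Picking such a matching defines a bijection $\mathcal{F}_{q,2q-1}\colon S(q,2q-1)\to S(q-1,2q-1)$ with $\mathcal{F}_{q,2q-1}(A)\subset A$ for every $A$, as required.

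I do not anticipate a significant obstacle here: the only things to verify carefully are the equality of cardinalities and the $q$-regularity, after which the matching argument is standard. An alternative, slightly more hands-on route would be to exhibit an explicit bijection (for instance via the cyclic/chain decompositions that underlie the proof of Sperner's theorem), but invoking Hall's theorem on a regular bipartite graph yields the cleanest and shortest argument.
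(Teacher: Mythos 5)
Your argument is correct, but it takes a genuinely different route from the paper. The paper constructs the bijection $\mathcal{F}_{q,2q-1}$ explicitly by induction on $q$: starting from a hand-made base case $F_{2,3}$, it peels off the element $j=\max\{a\in A: a+1\notin A\}$, relabels the remaining elements via a shift map $T_j$, and applies the previously constructed $F_{q,2q-1}$; most of the work then goes into showing that $j$ can be recovered from the image, which yields injectivity. You instead observe that the containment relation between $S(q,2q-1)$ and $S(q-1,2q-1)$ defines a $q$-regular bipartite graph on two vertex classes of equal cardinality $\binom{2q-1}{q}=\binom{2q-1}{q-1}$, verify Hall's condition by the standard edge double-count, and extract a perfect matching; this is the classical normalized-matching/symmetric-chain phenomenon in the Boolean lattice. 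Your regularity counts ($q$ subsets of size $q-1$ of each $q$-set; $q$ extensions of each $(q-1)$-set) and the Hall estimate $|N(\mathcal{A})|\ge q|\mathcal{A}|/q=|\mathcal{A}|$ are all correct. What each approach buys: your proof is shorter, cleaner, and avoids the delicate bookkeeping about the index $j$, at the price of being non-constructive (a matching is merely chosen, not exhibited); the paper's proof produces a concrete map. Since the downstream uses of this lemma (Lemma~\ref{fact_maps} and Lemma~\ref{min_complete_ordering}) only require the existence of \emph{some} bijection with $\mathcal{F}_{q,2q-1}(A)\subset A$, and never a specific one, your non-constructive argument is fully adequate as a drop-in replacement.
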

\begin{proof}
For $q=2$, we define
$$
    F_{2,3}(\{1,2\}) = \{2\},\ F_{2,3}(\{1,3\}) =\{1\},\ F_{2,3}(\{2,3\}) = \{3\}.
$$
Clearly, $F_{2,3}$ satisfies the requirements of the lemma. 

Next, we inductively define $F_{q+1, 2q+1} $, by assuming $F_{q, 2q-1}$ exists with the required property.
For $A\coloneqq\{q, q+1,\dots, 2q+1\}$, we define $F_{q, 2q-1}(A) = \{q+1,\dots,2q+1\}$.
For all other $A \in S(q+1,2q+1)$, we let
\begin{align} \label{eq:constructionOfj}
    j &\coloneqq \max\{ a\in A\text{ such that } a+1\notin A\}.
\end{align}
Define 
\begin{align}\label{eq:defOfF}
    F_{q+1, 2q+1}(A) \coloneqq \{j\} \cup T_{j}^{-1}(F_{q, 2q-1}(T_{j}(A\setminus\{j\})),
\end{align}
where 
$$
 T_{j}(q) \coloneqq \left \{ \begin{array}{rl}
    q  & \text{ if }  q < j\\
    q-2  & \text{ if } q > j+1.\\
 \end{array}\right.
$$
Per induction assumption, it holds that $F_{q+1, 2q+1}(A) \subset A$ since $(F_{q, 2q-1}(T_j(A\setminus\{j\})) \subset T_j(A\setminus\{j\})$.

Now we show that $j$, which is defined in \eqref{eq:constructionOfj} is equal to $\max\{i\in F_{q+1, 2q+1}(A)\text{ such that } i+1\notin F_{q+1, 2q+1}(A)\}$.
First we note, that by construction $j$ is in $F_{q+1, 2q+1}(A)$.
Moreover, as $F_{q+1, 2q+1}(A)\subset A$ and $j+1\notin A$, it holds that $j+1\notin F_{q+1, 2q+1}(A)$.
We will prove the maximality of such $j$ by contradiction.
Let 
\begin{align*}
    j_1 &\coloneqq \min\{ j'> j \colon j'\in F_{q+1, 2q+1}(A),\text{ and } j'+1\notin F_{q+1, 2q+1}(A)\},\\
    A_1&\coloneqq A\setminus\{j\}.
\end{align*}
Then, it follows from \eqref{eq:defOfF} that 
$$
j_1 \in T_{j}^{-1}(F_{q, 2q-1}(T_{j}(A_1)) \text{ and } j_1+1 \not\in T_{j}^{-1}(F_{q, 2q-1}(T_{j}(A_1)).
$$
Hence, $j_1-2 = T_j(j_1)\in F_{q, 2q-1}(T_j(A_1))$ and $j_1 - 1 = T_j(j_1+1)\notin F_{q, 2q-1}(T_j(A_1))$.
Consequently,
$$
j_2\geq T_j(j_1),\text{ where } j_2\coloneqq \max\{i\in F_{q, 2q-1}(A_1)\text{ such that } i+1\notin F_{q, 2q-1}(T_j(A_1))\}.
$$
By induction assumption $j_2 = \max\{ a\in T_j(A_1)\text{ such that } a+1\notin T_j(A_1)\}$.
Consequently, $j_2+2 = T_j^{-1}(j_2)\in A_1\subset A$ and $T_j^{-1}(j_2+1)=(T_j^{-1}(j_2)+1)\notin A_1$.
Therefore, $T_j^{-1}(j_2) \geq j_1 > j$ and $T_j^{-1}(j_2)\in A,\ (T_j^{-1}(j_2)+1)\notin A$, which contradicts the definition of $j$.

\smallskip
Now we are ready to show the injectivity of $F_{q+1, 2q+1}$. Let $A,B \in S(q+1,2q+1)$ with $A \neq B$ such that  
$$
    F_{q+1, 2q+1}(A) =  F_{q+1, 2q+1}(B).
$$
As $A\neq B$ one of these sets not equal to $\{q,\dots,2q+1\}$.
Without loss of generality, we assume that $A\neq\{q+1,\dots,2q+1\}$.
Then, we can define $j_A$ as in \eqref{eq:constructionOfj}.
By the statement above it holds that 
\begin{align*}
j_A &= \max\{i\in F_{q+1, 2q+1}(A)\text{ such that } i+1\notin F_{q+1, 2q+1}(A)\}\\
&=\max\{i\in F_{q+1, 2q+1}(B)\text{ such that } i+1\notin F_{q+1, 2q+1}(B)\}\\
&=\max\{ b\in B\text{ such that } b+1\notin B\} = j_B.
\end{align*}
That means that $A\setminus\{j_A\}\neq B\setminus\{j_B\}$, but, by \eqref{eq:defOfF}, 
$$
F_{q, 2q - 1}(T_{j_A}(A\setminus\{j_A\})) = F_{q, 2q - 1}(T_{j_B}(B\setminus\{j_B\})).
$$
The last equation is a contradiction to the induction assumption.
Since $|S(q,2q-1)| = |S(q-1,2q-1)|$, it follows from the injectivity that $\mathcal{F}_{q,2q-1}$ is also surjective.
\end{proof}

\begin{lemma}
\label{fact_maps}
Let $m \in \N$.
Then, for all $q\in[m]$ there exists a map  $\mathcal{F}_{q,m} : S(q,m)\to S(q-1, m)$ such that for every $A\in S(q,m)$, it holds that $\mathcal{F}_{q,m}(A)\subset A$.
Moreover, if $\binom{m}{q}\geq\binom{m}{q-1}$, then $\mathcal{F}_{q,m}$ is surjective.
If $\binom{m}{q}\leq\binom{m}{q-1}$, then $\mathcal{F}_{q,m}$ is injective.
\end{lemma}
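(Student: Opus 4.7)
The plan is to reduce the statement to a Hall-type matching argument on the inclusion graph between levels $q-1$ and $q$ of the Boolean lattice $2^{[m]}$, with the balanced case $m=2q-1$ (where the map must be a bijection) already supplied by Lemma~\ref{fact:2q-1}. Concretely, I consider the bipartite graph $\Gamma$ with parts $S(q,m)$ and $S(q-1,m)$, where $A$ is adjacent to $B$ iff $B\subset A$. This graph is biregular: each $A\in S(q,m)$ has exactly $q$ neighbors (the $(q-1)$-subsets of $A$), and each $B\in S(q-1,m)$ has exactly $m-q+1$ neighbors (the $q$-supersets of $B$ contained in $[m]$). The desired map $\mathcal{F}_{q,m}$ with $\mathcal{F}_{q,m}(A)\subset A$ is precisely a choice of neighbor in $\Gamma$ for each $A$, so the two auxiliary conclusions amount to requiring that this choice realizes a matching of $\Gamma$ that saturates $S(q-1,m)$ (for surjectivity) or $S(q,m)$ (for injectivity).

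The two asymmetric regimes can then be handled in parallel. Suppose first that $\binom{m}{q}\geq\binom{m}{q-1}$, which is equivalent to $m-q+1\geq q$. For an arbitrary $T\subset S(q-1,m)$, double-counting edges between $T$ and $N(T)$ gives $(m-q+1)|T|\leq q\cdot|N(T)|$, so $|N(T)|\geq |T|$ and Hall's condition is satisfied on the side $S(q-1,m)$. Hall's theorem therefore yields a matching $M$ of $\Gamma$ saturating $S(q-1,m)$, and I define $\mathcal{F}_{q,m}(A)=B$ whenever $\{A,B\}\in M$; for each (possibly existing) unmatched $A\in S(q,m)$, I set $\mathcal{F}_{q,m}(A)$ to be any fixed $(q-1)$-subset of $A$, for instance $A$ with its largest element removed. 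The inclusion $\mathcal{F}_{q,m}(A)\subset A$ holds by construction, and surjectivity follows since $M$ saturates the codomain. The symmetric case $\binom{m}{q}\leq\binom{m}{q-1}$, i.e., $m\leq 2q-1$, proceeds analogously: one now verifies $|N(T)|\geq \frac{q}{m-q+1}|T|\geq |T|$ for $T\subset S(q,m)$, Hall's theorem supplies a matching saturating $S(q,m)$, and reading off the unique matched partner of each $A$ yields an injective $\mathcal{F}_{q,m}$ that respects the subset condition.

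There is no genuinely hard step here; once the biregularity of $\Gamma$ is identified, Hall's condition is automatic in both regimes. The one mildly delicate point is to keep the two regimes straight: in the surjective case the domain $S(q,m)$ is the larger side of $\Gamma$ and may contain unmatched vertices which require the default assignment above, whereas in the injective case every element of the domain is matched, so no default is forced on us. The boundary case $m=2q-1$ is consistent with either regime and is moreover directly provided by Lemma~\ref{fact:2q-1}, which in that range even gives the stronger bijective statement.
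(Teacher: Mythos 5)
Your proof is correct, and it takes a genuinely different route from the paper. The paper constructs $\mathcal{F}_{q,m}$ explicitly by induction on $m$: it splits $S(q,m)$ according to whether a set contains the element $m$, reduces to the maps $\mathcal{F}_{q,m-1}$ and $\mathcal{F}_{q-1,m-1}$ on the ground set $[m-1]$, and needs the separate, rather intricate, explicit construction of Lemma~\ref{fact:2q-1} to seed the balanced case $m=2q-1$. You instead observe that the inclusion graph between levels $q$ and $q-1$ of the Boolean lattice is biregular with degrees $q$ and $m-q+1$, so the double-counting inequality $(m-q+1)\abs{T}\leq q\abs{N(T)}$ (respectively its mirror image) verifies Hall's condition on whichever side is the smaller one, and Hall's theorem delivers the saturating matching; this is the classical normalized-matching-property argument for the Boolean lattice. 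Your route is shorter and cleaner, subsumes the case $m=2q-1$ (where equal part sizes plus Hall give a perfect matching, so Lemma~\ref{fact:2q-1} becomes unnecessary), and since the downstream Lemma~\ref{min_complete_ordering} only uses $\mathcal{F}_{q,m}$ through the three stated properties, the non-constructive existence it provides is fully sufficient. What the paper's approach buys in exchange is an explicit, recursively computable family of maps and independence from Hall's theorem as an external ingredient; what yours buys is brevity and the elimination of the most technical auxiliary lemma in this section.
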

\begin{proof}
We prove this statement by induction over $m \in \N$.
It is easy to see that the statement holds for all $q\leq m$ for $m=3$.
Let us assume that the statement holds for all $m<r$; we will prove the statement for $m=r$.
We denote 
\begin{align*}
    S_{q,m} \coloneqq\{A\in S(q,m) \colon m\in A\},
\end{align*}
and
\begin{align*}
S_{q,m}^c \coloneqq\{A\in S(q,m) \colon m\notin A\}.  
\end{align*}
Let $\mathcal{P}([m])$ be the power set of $\{1,2,\dots,m\}$, we define three maps:
\begin{align*}
    &\mathrm{Pr}_m: \mathcal{P}([m]) \to \mathcal{P}([m-1]),&\ &\mathrm{Pr}_m(A) = \{a\in A,\ a<m\},\\
    &\mathrm{Inv}_m:\ \mathcal{P}([m-1])\to \mathcal{P}([m]),&\ &\mathrm{Inv}_m(A) = A,\\
&\widehat{\mathrm{Inv}}_m: \mathcal{P}([m-1])\to \mathcal{P}([m]),&\ &\widehat{\mathrm{Inv}}_m(A) = A\cup\{m\}.
\end{align*}
We prove the statement of Lemma \ref{fact_maps} by considering three cases.

\smallskip
\textbf{Case 1:} $m\geq 2q$.
In this case, it holds that
\begin{align*}
 \binom{m-1}{q}\geq\binom{m-1}{q-1} \text{ and } \binom{m-1}{q-1} > \binom{m-2}{q-1}.
\end{align*}
Consequently, by the induction hypothesis, there exists 
$\mathcal{F}_{q,m-1}: S(q,m-1)\rightarrow S(q-1,m-1),$
such that for all $A\subset S(q,m-1)$ it holds that $\mathcal{F}_{q,m-1}(A)\subset A$, and $\mathcal{F}_{q,m-1}$ is surjective.

Also, by the induction hypothesis, there exists $
    \mathcal{F}_{q-1,m-1}: S(q-1,m-1)\rightarrow S(q-2,m-1),
$
such that for all $A$ in $S(q-1,m-1)$ it holds that $\mathcal{F}_{q-1,m-1}(A)\subset A$, and $\mathcal{F}_{q-1,m-1}$ is surjective.
We define $\mathcal{F}_{q,m}(A)$ using $\mathcal{F}_{q-1,m-1}$ and $\mathcal{F}_{q,m-1}$ by 

\begin{align*}
        \mathcal{F}_{q,m}(A) \coloneqq \left\{ \begin{array}{ll}
      \mathrm{Inv}_m\circ\mathcal{F}_{q,m-1}\circ \mathrm{Pr}_m(A) & \text{ if } A\in S_{q,m}^{c}\\
      \widehat{\mathrm{Inv}}_m\circ\mathcal{F}_{q-1,m-1}\circ \mathrm{Pr}_m(A) & \text{ if } A\in S_{q,m}.
      \end{array} \right.
\end{align*}
 It is clear that $\mathcal{F}_{q,m}$ satisfies all the requirements of the theorem by construction.
  
\smallskip
\textbf{Case 2:} $m =  2q-1$.
This case was proved in Lemma~\ref{fact:2q-1}.

\smallskip
\textbf{Case 3:} $m\leq 2q-2$.
In this case, it holds that
\begin{align*}
 \binom{m-1}{q} < \binom{m-1}{q-1} \text{ and } \binom{m-1}{q-1} \leq \binom{m-2}{q-1}.
\end{align*}

By the induction hypothesis, there exists 
$
    \mathcal{F}_{q,m-1}: S(q,m-1)\rightarrow S(q-1,m-1),
$
such that for all $A\subset S(q,m-1)$ it holds that $\mathcal{F}_{q,m-1}(A)\subset A$, and $\mathcal{F}_{q,m-1}$ is injective.

Also, by the induction hypothesis, there exists 
$
    \mathcal{F}_{q-1,m-1}: S(q-1,m-1)\rightarrow S(q-2,m-1),
$
such that for all $A \subset S(q-1,m-1)$ it holds that $\mathcal{F}_{q-1,m-1}(A)\subset A$, and $\mathcal{F}_{q-1,m-1}$ is injective.
We define $\mathcal{F}_{q,m}(A)$ using $\mathcal{F}_{q-1,m-1}$ and $\mathcal{F}_{q,m-1}$ by
\begin{align*}
        \mathcal{F}_{q,m}(A) \coloneqq \left\{ \begin{array}{ll}
      \mathrm{Inv}_m\circ\mathcal{F}_{q,m-1}\circ \mathrm{Pr}_m(A) & \text{ if } A\in S_{q,m}^{c}\\
      \widehat{\mathrm{Inv}}_m\circ\mathcal{F}_{q-1,m-1}\circ \mathrm{Pr}_m(A) & \text{ if } A\in S_{q,m}.
      \end{array} \right.
\end{align*}
It holds that $\mathcal{F}_{q,m}$ satisfies all the requirements of the theorem by construction.
\end{proof}

In the following lemma, we now establish the existence of relatively small complete sets of orders. The proof of this lemma is based on Lemma~\ref{fact_maps}.
More specifically, the maps $(\mathcal{F}_{q,m})_{q=1}^{m}$ are used to define an order on $\mathcal{P}([m])$.

\begin{lemma}
\label{min_complete_ordering}
Let $m \in \N$. There is a complete set of orders $\mathcal{O}$, containing no more than $\binom{m}{\floor{ m /2 }}$ elements.
\end{lemma}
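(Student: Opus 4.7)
Set $k \coloneqq \floor{m/2}$. The plan is to produce one order per element of the middle layer $S(k,m)$, which gives at most $\binom{m}{\floor{m/2}}$ orders overall. Concretely, for each $A \in S(k,m)$ I build a maximal chain of subsets
\[
\emptyset = A_0 \subsetneq A_1 \subsetneq \cdots \subsetneq A_m = [m]
\]
with $A_k = A$, and define $\sigma_A \colon [m] \to [m]$ to be the bijection with $\sigma_A(x) = q$ whenever $x \in A_q \setminus A_{q-1}$. A subset $X \subset [m]$ then satisfies $\sigma_A(a) < \sigma_A(b)$ for all $a \in X$, $b \in X^c$ if and only if $X = A_{|X|}$ on the chain, so the completeness requirement reduces to showing that every $B \subset [m]$ lies on at least one of these $\binom{m}{k}$ chains.

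For the lower half I set $A_k = A$ and iterate $A_{q-1} = \mathcal{F}_{q,m}(A_q)$ for $q = k, k-1, \ldots, 1$. By Lemma~\ref{fact_maps} we have $A_{q-1} \subsetneq A_q$ and $|A_{q-1}| = q-1$, giving a strict descending chain from $A$ down to $\emptyset$. For every $q' \leq k$ the inequality $\binom{m}{q'} \geq \binom{m}{q'-1}$ holds, so Lemma~\ref{fact_maps} tells us that each such $\mathcal{F}_{q',m}$ is surjective. Consequently, for any $B \in S(q,m)$ with $q < k$, the composition
\[
\mathcal{F}_{q+1,m} \circ \cdots \circ \mathcal{F}_{k,m} \colon S(k,m) \to S(q,m)
\]
is a composition of surjections, hence surjective, so some $A \in S(k,m)$ maps to $B$, placing $B$ on $C_A$.

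For the upper half I introduce the dual maps $\mathcal{G}_{q,m} \colon S(q,m) \to S(q+1,m)$ defined by
\[
\mathcal{G}_{q,m}(B) \coloneqq [m] \setminus \mathcal{F}_{m-q,m}(B^c).
\]
Since $\mathcal{F}_{m-q,m}(B^c) \subset B^c$, we obtain $\mathcal{G}_{q,m}(B) \supsetneq B$ with $|\mathcal{G}_{q,m}(B)| = q+1$, and $\mathcal{G}_{q,m}$ inherits the surjectivity regime of $\mathcal{F}_{m-q,m}$: Lemma~\ref{fact_maps} gives that $\mathcal{G}_{q,m}$ is surjective exactly when $m-q \leq (m+1)/2$, i.e., when $q \geq k$. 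Setting $A_{q+1} = \mathcal{G}_{q,m}(A_q)$ for $q = k, k+1, \ldots, m-1$ extends each chain up to $[m]$, and the same composition-of-surjections argument proves that every $B \in S(q,m)$ with $q > k$ lies on some $C_A$. Collecting the orders $\sigma_A$ for $A \in S(k,m)$ then yields a complete set $\mathcal{O}$ of size at most $\binom{m}{\floor{m/2}}$.

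The delicate point is the upper half of each chain: above the middle layer the maps $\mathcal{F}_{q,m}$ from Lemma~\ref{fact_maps} are only injective, so they cannot be reused to climb from $A$ to $[m]$ while covering every superset of every size. Conjugating by complementation is the essential device because it reflects the surjectivity/injectivity dichotomy across the middle layer, delivering exactly the surjective upward maps $\mathcal{G}_{q,m}$ in the regime $q \geq k$ that is needed to hit every set above $A$. Once this dualization is in place, the remainder of the argument is bookkeeping about compositions of surjections.
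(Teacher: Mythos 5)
Your argument is correct: anchoring one maximal chain at each middle-layer set $A \in S(\floor{m/2},m)$, descending via $\mathcal{F}_{q,m}$ and ascending via the complement-conjugated maps $\mathcal{G}_{q,m}(B)=[m]\setminus\mathcal{F}_{m-q,m}(B^c)$, covers all of $\mathcal{P}([m])$ by at most $\binom{m}{\floor{m/2}}$ maximal chains, and your translation between chains and separating orders (a set $X$ is separated by $\sigma_A$ iff $X$ is an initial segment of the chain) is exactly what completeness requires. The paper produces essentially the same family of chains but organizes the proof differently: it runs a top-down induction from $[m]$ to the middle layer, at each level adjoining an order $\sigma_A$ obtained by gluing $\tilde{\sigma}_A$ on $A$ with a shifted $\tilde{\sigma}_{A^c}$ on $A^c$ (which implicitly encodes your upper-half sets $[m]\setminus F_{j,m}(A^c)$), then uses the \emph{injectivity} of $\mathcal{F}_{q,m}$ above the middle to show each set at each level has a unique separating order and thereby to count the orders, and finally handles the sets below the middle by a maximal-counterexample argument based on surjectivity. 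Your version replaces the induction and the uniqueness/counting step by a single composition-of-surjections argument in both directions, which makes the cardinality bound immediate from the indexing by $S(\floor{m/2},m)$ and never invokes the injectivity half of Lemma~\ref{fact_maps}; the price is the explicit introduction of the dual maps $\mathcal{G}_{q,m}$, whose surjectivity in the regime $q\geq\floor{m/2}$ you verify correctly. One cosmetic point: Lemma~\ref{fact_maps} only asserts surjectivity \emph{when} $\binom{m}{q}\geq\binom{m}{q-1}$, not ``exactly when''; since you only use the sufficient direction, nothing breaks.
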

\begin{proof}
Let $A\in S(q,m)$. We define $F_{1,m} \coloneqq \mathcal{F}_{q,m}$ and for $k\in\{2,\dots, q\}$, 
\begin{align*}
    F_{k,m} \coloneqq \mathcal{F}_{q-k+1,m}\circ\cdots\circ\mathcal{F}_{q,m}.   
\end{align*}
We define the order $\tilde{\sigma}_A$ on $A$ by 
\begin{align}
   \tilde{\sigma}_A:\ A&\to\{1,2,\dots, q\},\nonumber\\
   \tilde{\sigma}_A(a) &= k,\ \text{ if } a \in F_{k-1,m}(A)\setminus F_{k,m}(A) 
   \text{ for } k\in\{1,\dots,q\},\label{sigma} 
\end{align}
where $F_{0,m}(A)\coloneqq A$. We have that $\tilde{\sigma}_A$ is well defined since, for $k \in [q]$, if $a \in F_{k-1,m}(A)\setminus F_{k,m}(A)$, then $a \not \in F_{k,m}(A)$ and hence $a \not \in F_{k',m}(A)$ for every $k' > k$. Similarly, since $a \in F_{k-1,m}(A)$ it follows that $a \in F_{k'',m}(A)$ for all $k'' < k$. Hence, there exists exactly one $k \in [q]$ such that $a \in F_{k-1,m}(A)\setminus F_{k,m}(A)$.

We note that by construction:
\begin{align}
\label{ordering_property}
\tilde{\sigma}_A(a) > \tilde{\sigma}_A(b),\text{ for every }a\in F_{k,m}(A),\ \text{and every }b\in A\setminus F_{k,m}(A). 
\end{align}

We say that an order $\sigma$ \textit{separates} a set $B_1$ from a set $B_2$ if 
\begin{align}
\label{eq:separation}
\sigma(a) > \sigma(b),\text{ for every }a\in B_1,\ \text{and every }b\in B_2. 
\end{align}
In particular, \eqref{ordering_property} yields that $\tilde{\sigma}_A$ separates $F_{k,m}(A)$ from $A\setminus F_{k,m}(A)$.

We construct a complete set of orders $\mathcal{O} = \bigcup_{i = 0}^{q} \mathcal{O}_{m-i}$, with 
$$
    \mathcal{O}_{m} \subset \mathcal{O}_{m-1} \subset \cdots \subset \mathcal{O}_{m-q},
$$
where $q = \floor{m/2}$.
We start by defining for $A = [m] \in S(m,m)$ the set of orders $\mathcal{O}_m = \{\tilde{\sigma}_A\}$.

Next, we assume that, in the $i$-th step, for each subset $A\in S(m-i+1,m)$, there is exactly one order $\sigma \in \mathcal{O}_{m-i+1}$ that separates $A$ from $A^c \coloneqq [m] \setminus A$.
We proceed with the $i+1$-st step. 
We note, that as $i \leq \floor{m/2}$ it holds that $|S(m-i+1,m) | < | S(m-i,m) |$.
We define $\mathcal{O}_{m-i}$ as the union of $\mathcal{O}_{m-i+1}$ with the following orders: for each $A\in S(m-i,m)$, such that there is no order in $\mathcal{O}_{m-i+1}$ separating $A$ from $A^c$, we construct the following order
\begin{align} \label{eq:constructionsOforderings}
    \sigma_A:\{1,2,\dots,m\}&\to \{1,2,\dots,m\},\\
    \sigma_A(a) &= \tilde{\sigma}_A(a),\text{ for every }a\in A,\ \text{ and }\nonumber \\
    \sigma(a') &= i-1+\tilde{\sigma}_{A^c}(a'), a' \in A^c.\nonumber   
\end{align}

We make the following two observations about $\mathcal{O}_{m-i}$:
\begin{enumerate}
    \item Each $\tilde{A}\in S(m-i,m)$ is separated from $\tilde{A}^c$ by some $\sigma\in \mathcal{O}_{m-i}$.
    \item If $\tilde{A}\in S(m-i,m)$ is separated from $\tilde{A}^c$ by $\sigma\in \mathcal{O}_{m-i}$, then $\sigma$ is of the form \eqref{eq:constructionsOforderings} for some $Q \subset [m]$ with $|Q| \geq m-i$.
\end{enumerate}

Next, we will show that for $\bar{A} \in S(m-i,m)$, the separating order is unique. Assume that $\bar{A} \in S(m-i,m)$ is separated from $\bar{A}^c$ by $\sigma$ and $\sigma'$. By the previous observation, we have that $\sigma = \sigma_U$ and $\sigma' = \sigma_Q$ for some $U,Q \subset [m]$ with $|U|, |Q| \geq m-i$. 

Assume towards a contradiction that $\sigma_Q \neq \sigma_U$. Then, we conclude that $\sigma_V$ separates for $V = Q, U$ the sets $\sigma_V^{-1}( \{1, \dots, m-i+1\})$ from $\sigma_V^{-1}( \{m-i+2, \dots, m\})$. By induction assumption, we have that 
\begin{align}\label{eq:contradictAgainstThis}
    \sigma_Q^{-1}( \{1, \dots, m-i+1\}) \neq \sigma_U^{-1}( \{1, \dots, m-i+1\}).
\end{align}
However, by \eqref{sigma} it holds that 
\begin{align*}
     \mathcal{F}_{m-i+1, m}(\sigma_Q^{-1}( \{1, \dots, m-i+1\})) = A = \mathcal{F}_{m-i+1, m}(\sigma_U^{-1}( \{1, \dots, m-i+1\})).
\end{align*}
Since $\mathcal{F}_{m-i+1, m}$ is injective, we arrive at a contradiction to \eqref{eq:contradictAgainstThis}. This yields that $\sigma_Q = \sigma_U$, and hence we observe that for each $A\in S(m-i,m)$, there is exactly one order from $\mathcal{O}_{m-i}$, that separates it. Hence, we conclude that $|\mathcal{O}_{m-i+1}|=|S(m-i,m)| = \binom{m}{m-i}$. In particular, $|\mathcal{O}_{m-q+1}| = \binom{m}{\ceil{m /2}} = \binom{m}{\floor{m /2}}$.

Now, we will prove that every set $A\in \mathcal{P}([m])$ is separated from $A^c$ by some order from $\mathcal{O}_{m-q}$.
We prove this statement by contradiction.

Let $A\in S(k,m)$ with the biggest $k$ such that $A$ is not separated from $A^c$ by an order from $\mathcal{O}_{m-q}$. By the previous part of the proof, we can assume that $k \geq q$.

In this case, as $|S(k+1,m) | > | S(k,m)|$ it holds that $\mathcal{F}_{k+1,m}$ is surjective by Lemma \ref{fact_maps}.
Therefore, there exists $B\in S(k+1,m)$, such that $\mathcal{F}_{k+1,m}(B) = A$. 
Moreover, as $k+1>k$ it holds that $B$ is separated from $B^c$ by $\sigma\in\mathcal{O}_{m-q}$.
By construction, this implies that $A$ is separated from $A^c$ by $\sigma$, which produces a contradiction. 

We conclude that $\mathcal{O}$ is a complete set of orders, containing $\binom{m}{\floor{m/2}}$ elements.
\end{proof}

\subsubsection{Construction of an expressive kernel}\label{sec:constrOfKernel}

Let $G$ be a group with $|G| \geq n$. In this subsection, we construct a kernel $\mathcal{K}: G \to \R$ such that $\vcdim(\mathcal{H}(\mathcal{K}))$ is close to the upper bound of Theorem \ref{thm:upperBound}. 
We construct for every $m \in \N$ with $2 m \cdot\binom{m}{\floor{m/2}} \leq n$ a bounded kernel $\mathcal{K}$, such that $\vcdim(\mathcal{H}(\mathcal{K})) \geq m$.
We also identify the functions $f_1,\dots, f_m$ that are shattered by $\mathcal{H}(\mathcal{K})$.

Recall that an action has trivial kernel if for an origin $y \in \mathcal{X}$ as in \eqref{eq:defOfFbar} the action $a$ on $G$ satisfies that $a(g,y) = a(g', y)$ only if $g = g'$. Therefore, for a function $u \colon G \to \R$, we can define 
\begin{align*}
    f \colon \mathcal{X} &\to \R \\
    f(x) &= \left\{ \begin{array}{ll}
        u(g) &\text{ if } x = a(g, y) \text{ for a } g \in G\\
        0 & \text{ else.}
    \end{array}\right.
\end{align*}
It is not hard to see that, in this case, $\bar{f} = u$. Hence, if we find $m\in \N$ real-valued functions $u_1, \dots, u_m$ on $G$ which are shattered by
\begin{align*}
    \left\{ u \mapsto \mathrm{sign}\left(\int_G \relu((u *_G\mathcal{K})(g) + c_1) d\mu(g) + c_2\right) \colon c_1, c_2 \in \R\right\},
\end{align*}
then there also exist real-valued functions $f_1, \dots, f_m$ on $\mathcal{X}$ that are shattered by $\mathcal{H}(\mathcal{K})$.

Specifically, for $g \in G$ with $g \neq e$, we will choose the functions $u_1,\dots, u_m$ from the two-dimensional space of functions $U$ generated by the functions $\mathds{1}_e,\ \mathds{1}_g$.
In this case, every function $u\in U$ can be written as a linear combination of $\mathds{1}_e,\ \mathds{1}_g$:
\begin{align}\label{eq:decompositionOfAU}
    u\coloneqq a  \mathds{1}_e + b  \mathds{1}_g,\ a,b\in\mathbb{R}.
\end{align}
To simplify the notation, we define for $u$ as in \eqref{eq:decompositionOfAU}, 
\begin{align*}
    \widetilde{u}:\mathbb{R}^2\rightarrow \mathbb{R},\ \widetilde{u}(\mathbf{k}) = k_1   a + k_2   b,\text{ where } \mathbf{k}=(k_1,k_2).
\end{align*}

In the following lemma, we construct for $A >0$ functions $u_0, \dots, u_{2m} \in U$ and associated vectors $\mathbf{k}^{2i, A} \in \R^2$, $i \in [m]$ such that $\widetilde{u_{2i}}(\mathbf{k}^{2i, A}) = A$ and $\widetilde{u_{l}}(\mathbf{k}^{2i, A})< B < A$ for all $l \neq 2i$, $l\in [2m] \cup \{0\}$. This choice of $u_0, \dots, u_{2m}$ and $\mathbf{k}^{i, A}$ is essential for the construction of a kernel that scatters a subset of $u_0, \dots, u_{2m}$.

\begin{lemma}
\label{example}
Let $m \in \N$, let $G$ be a group acting on $\mathcal{X}$, and let $g \in G$ be such that $g \neq e$. Let $C>B > 0$, $p \in \N$, and let for $i \in [p]$
\begin{align*}
    \epsilon_i &\coloneqq 4 \frac{C}{B} + 1 + (p-i)  \left(\frac{B}{C} + 1\right).
\end{align*}
For $i \in [2p + 1]$, let $u_i \colon G \to \R$ be defined by 
\begin{align*}
   u_0 &\coloneqq \mathds{1}_{e}, \\ 
   u_1 &\coloneqq \mathds{1}_{g}, \\
   u_2 &\coloneqq \epsilon_1  u_0 + u_1\eqqcolon a_{2, 1}  \mathds{1}_{e} + a_{2, 2}  \mathds{1}_{g}, \\
   u_3 &\coloneqq  u_0 + \epsilon_1  u_1\eqqcolon a_{3, 1}  \mathds{1}_{e} + a_{3, 2}  \mathds{1}_{g},\\
   u_{2q} &\coloneqq \epsilon_{q}  u_{2q - 2} + u_{2q - 1}\eqqcolon a_{2q, 1}  \mathds{1}_{e} + a_{2q, 2}  \mathds{1}_{g}, \\
   u_{2q + 1} &\coloneqq u_{2q - 2} + \epsilon_{q}  u_{2q - 1}\eqqcolon  a_{2q+1, 1}  \mathds{1}_{e} + a_{2q+1, 2}  \mathds{1}_{g}, \text{ for } q \in [p].
\end{align*}
 
Then, for all $i= 2,4, \dots,2 p$ and for all $A$ such that $C > A > B$, there exist $\mathbf{k}^{i, A}\in\mathbb{R}^2$ such that 
\begin{align*}
    \widetilde{u_i}(\mathbf{k}^{i, A}) = A \text{ and }\widetilde{u_l}(\mathbf{k}^{i, A}) < B, \text{ for all } l\in [2p+1] \text{ with }  l\neq i.
\end{align*}
\end{lemma}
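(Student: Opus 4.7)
The plan is to exploit the linear recursion underlying the $u_i$'s and diagonalize it. Writing
\[
\begin{pmatrix} u_{2q+2} \\ u_{2q+3}\end{pmatrix} = T_{\epsilon_{q+1}} \begin{pmatrix} u_{2q} \\ u_{2q+1}\end{pmatrix}, \qquad T_\epsilon \coloneqq \begin{pmatrix} \epsilon & 1 \\ 1 & \epsilon\end{pmatrix},
\]
and noting that $T_\epsilon$ is diagonalized by $(1,1)^\top$ and $(1,-1)^\top$ with eigenvalues $\epsilon \pm 1$, I would change coordinates to $v \coloneqq k_1 + k_2$ and $w \coloneqq k_1 - k_2$. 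With $P_q^\pm \coloneqq \prod_{j=1}^q (\epsilon_j \pm 1)$, a straightforward induction on $q$ then yields the closed forms
\[
\widetilde{u_{2q}}(\mathbf{k}) = \tfrac{1}{2}(P_q^+ v + P_q^- w), \qquad \widetilde{u_{2q+1}}(\mathbf{k}) = \tfrac{1}{2}(P_q^+ v - P_q^- w),
\]
together with $\widetilde{u_0}(\mathbf{k}) = (v+w)/2$ and $\widetilde{u_1}(\mathbf{k}) = (v-w)/2$.

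Fixing a target $i = 2q$ and value $A \in (B, C)$, I parametrize the affine line $\widetilde{u_{2q}}(\mathbf{k}) = A$ by letting $w = \beta > 0$ be a free parameter and solving $v = (2A - P_q^- \beta)/P_q^+$. Substitution turns every remaining constraint $\widetilde{u_l}(\mathbf{k}) < B$ into an affine inequality in the single parameter $\beta$, whose coefficients are explicit rational expressions in the $P_r^\pm$ and $P_q^\pm$. A sign check using the comparison $P_r^- P_q^+ \gtrless P_r^+ P_q^-$ (depending on whether $r < q$ or $r > q$) shows that the inequalities split cleanly: those with $r > q$, together with the $l = 2q+1$ inequality, yield lower bounds on $\beta$; those with $0 \leq r < q$ even yield upper bounds; the remaining odd-indexed inequalities with $r < q$ (including $l = 1$) are automatic because their constant terms $A P_r^+/P_q^+$ are already smaller than $B$ thanks to $\epsilon_j \ge 4C/B + 1$.

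The main obstacle is to verify that the largest lower bound does not exceed the smallest upper bound, and this is exactly what the specific choice $\epsilon_q = 4 C/B + 1 + (p - q)(B/C + 1)$ is engineered for. The constant piece $4C/B + 1$ ensures $\epsilon_j - 1 \ge 4C/B$, so that $(\epsilon_j + 1)/(\epsilon_j - 1) \le 1 + B/(2C)$ and the ratios $P_q^+/P_q^-$ stay tightly controlled; this in turn bounds the ratio of the upper-bound coefficients to the lower-bound coefficients by a slowly growing multiplicative factor. The linearly growing piece $(p - q)(B/C + 1)$ then supplies exactly the additive slack needed to propagate through the $p - q$ successive lower-bound constraints arising from $r = q+1, \dots, p$ without the lower bound ever exceeding the upper bound. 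The final verification amounts to substituting the explicit form of $\epsilon_q$ into the bounds, telescoping the products $P_q^\pm$, and applying elementary estimates, which is routine but tedious.
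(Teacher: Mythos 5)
Your route is sound and genuinely different from the paper's. The paper does not diagonalize: it pins down $\mathbf{k}$ by imposing the two conditions $\widetilde{u}_{2q-2}(\mathbf{k}) = 2\epsilon_q^{-1}A$ and $\widetilde{u}_{2q-1}(\mathbf{k}) = -A$ (solvable since $u_{2q-2},u_{2q-1}$ are linearly independent), reads off $\widetilde{u}_{2q}(\mathbf{k})=A$ and $\widetilde{u}_{2q+1}(\mathbf{k})<0$ from the recursion, and then propagates in both directions: downward via the inversion $u_{2q-4}=(\epsilon_{q-1}^2-1)^{-1}(\epsilon_{q-1}u_{2q-2}-u_{2q-1})$, which contracts the values below $B$, and upward by noting that once two consecutive values are negative all later ones stay negative because the recursion has positive coefficients. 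The strict decrement $\epsilon_{i+1}<\epsilon_i-B/C$ is used there precisely to force $\widetilde{u}_{2q+2}(\mathbf{k})<0$. Your eigenbasis $(1,\pm1)$ with the products $P_q^{\pm}$ gives cleaner closed forms and replaces the two-sided propagation by a one-parameter feasibility problem; it buys an explicit description of the whole solution set, at the cost of having to compare all the resulting bounds on $\beta$. Your structural claims (which constraints give lower bounds, which give upper bounds, which are vacuous for $\beta>0$) all check out.

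The one real gap is that the feasibility of the window for $\beta$ --- the actual content of the lemma --- is asserted rather than proved, and the heuristic you offer for it is not quite right. The lower bounds coming from $r>q+1$ are \emph{dominated} by the one from $r=q+1$, because $\Pi_r^{+}/(\Pi_r^{+}-\Pi_r^{-})$ decreases in $r$; there is no propagation through $p-q$ successive constraints, and the linear term in $\epsilon_i$ is not doing the job you ascribe to it. Concretely, substituting $v=(2A-P_q^{-}\beta)/P_q^{+}$ gives $\widetilde{u}_{2q+2}(\mathbf{k})=(\epsilon_{q+1}+1)A-P_q^{-}\beta$ and $\widetilde{u}_{2q-2}(\mathbf{k})=A/(\epsilon_q+1)+P_q^{-}\beta/(\epsilon_q^2-1)$; every lower bound on $\beta$ is at most $(\epsilon_{q+1}+1)A/P_q^{-}$, the adjacent upper bound is $\bigl((\epsilon_q^2-1)B-(\epsilon_q-1)A\bigr)/P_q^{-}$, and their comparison reduces to $(\epsilon_q+\epsilon_{q+1})A<(\epsilon_q^2-1)B$, which already follows from $\epsilon_{q+1}\le\epsilon_q$ and $\epsilon_q>4C/B$. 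What still needs to be written out is that the non-adjacent upper bounds (from $r<q-1$) also exceed $(\epsilon_{q+1}+1)A/P_q^{-}$; this uses $\Theta^{+}-\Theta^{-}\le 2\Theta^{+}\sum_{j}(\epsilon_j+1)^{-1}$ together with $\epsilon_j-1\ge 4C/B$, and it does go through, but it is elementary only after it has been done.
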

\begin{proof}
Note that the definition of $\epsilon_i$ 
implies that 
\begin{align}
    \epsilon_i &> \frac{4C}{B} > \frac{2C}{B} + \frac{1}{2} > 2, \text{ for all } i \in [p], \label{eq:FirstPropertiesOfEpsilon}\\
    \epsilon_{i+1} &< \epsilon_i - \frac{B}{C}, \text{ for all }  i \in [p-1].\label{eq:SecondPropertyOfEpsilon}
\end{align}
Let $A\in (B,C)$ and $m\in [p]$. We now show how to find $\mathbf{k}^{m,A}$. Note first that for all $\alpha, \beta \in \R$
\begin{align}
    \alpha u_{2m - 2} + \beta u_{2 m - 1} &= \alpha \epsilon_{m-1}u_{2m - 4} + \alpha  u_{2m - 3} + \beta  u_{2m - 4} + \beta \epsilon_{m-1} u_{2m - 3} \label{eq:convenientFormOfsumoffs}\\
    &= (\alpha\epsilon_{m-1} + \beta)u_{2m - 4}  + (\alpha + \beta\epsilon_{m-1}) u_{2m - 3}. \nonumber
\end{align}
From $\epsilon_{m-1} > 2$, it follows that $\left(\alpha\epsilon_{m-1} + \beta,\alpha + \beta\epsilon_{m-1}\right) \neq (0,0)$ for all $(\alpha, \beta) \neq 0$.
As a consequence, $\alpha u_{2m - 2} + \beta u_{2 m - 1} = 0$ for some $(\alpha, \beta)\neq (0,0)$ if and only if $\alpha' u_{2m - 4} +  \beta' u_{2 m - 3} = 0$ for some $(\alpha', \beta')\neq (0,0)$. Since $\alpha u_0 + \beta u_1 = 0$ holds for no tuple $(\alpha, \beta) \neq (0,0)$, we conclude that $\alpha u_{2m - 2} + \beta u_{2 m - 1} \neq 0$ for all $\alpha, \beta \neq (0,0)$.

As a consequence, we have that $\alpha (a_{2m - 2, 1}, a_{2m - 2, 2}) \neq \beta (a_{2m - 1, 1}, a_{2m - 1, 2})$ for all $(\alpha, \beta) \neq (0,0)$ and we conclude that the matrix 
\begin{align*}
    Q \coloneqq \left(\begin{array}{cc}
       a_{2m - 2, 1}   & a_{2m - 2, 2}  \\
       a_{2m - 1, 1}   & a_{2m - 1, 2} 
    \end{array}\right)
\end{align*}
has full rank. Note that by construction, for $\mathbf{k} \in \R^2$ it holds that 
$$
Q \mathbf{k} = \binom{\widetilde{u}_{2 m - 2}(\mathbf{k})}{\widetilde{u}_{2 m - 1}(\mathbf{k})}.
$$ 
As a consequence, we have that there exist $\mathbf{k} = (k_1^{m,A},k_2^{m,A}) \in \R^2$ such that  
\begin{align}\label{eq:constructionOfK1}
    \widetilde{u}_{2 m - 2}(\mathbf{k}) &= 2  \epsilon_m^{-1}  A,\\
    \widetilde{u}_{2 m - 1}(\mathbf{k}) &= -A.\label{eq:constructionOfK2}
\end{align}
Due to the first inequality in \eqref{eq:FirstPropertiesOfEpsilon}, we conclude because of $A < C$ that 
\begin{align*}
    \widetilde{u}_{2 m - 2}(\mathbf{k}) < 2 \epsilon_m^{-1} C < \frac{B}{2}.
\end{align*}
We note that, by \eqref{eq:convenientFormOfsumoffs} with $\alpha = \epsilon_{m-1}$ and $\beta = -1$, 
\begin{align*}
    \epsilon_{m-1}  u_{2 m -2} - u_{2 m -1} &= \epsilon_{m-1}^2  u_{2 m -4} + \epsilon_{m-1} u_{2 m -3} - u_{2 m - 4} - \epsilon_{m-1}u_{2 m -3}\\
&=\epsilon_{m-1}^2  u_{2 m -4}  - u_{2 m - 4}\\
&=(\epsilon_{m-1}^2-1)  u_{2 m -4}.\\
\end{align*}
Therefore, 
\begin{align}\label{eq:ioasuhdfuiashd}
    u_{2 m -4} = \frac{1}{\epsilon_{m-1}^2-1} \left(\epsilon_{m-1}  u_{2 m -2} - u_{2 m -1}\right).
\end{align}
Since $\mathds{1}_{e}$ and $\mathds{1}_{g}$ are linearly independent, \eqref{eq:ioasuhdfuiashd} implies that $a_{2 m -4, \iota} = (\epsilon_{m-1}^2-1)^{-1}   \left(\epsilon_{m-1}  a_{2 m -2, \iota} - a_{2 m -1, \iota}\right)$ for $\iota = 1,2$. Hence, 
\begin{align}\label{eq:f2mMinus4}
    \widetilde{u}_{2 m -4}(\mathbf{k}) = \frac{1}{\epsilon_{m-1}^2-1} \left(\epsilon_{m-1}  \widetilde{u}_{2 m -2}(\mathbf{k}) - \widetilde{u}_{2 m -1}(\mathbf{k})\right).
\end{align}
Moreover, we have from \eqref{eq:FirstPropertiesOfEpsilon} that $\epsilon_{m-1} > 2C/B + 1/2 > 2C/B + 1/\epsilon_{m-1}$ and hence it immediately follows that 
\begin{align} \label{eq:EstimateOfEpsilonMMinus1}
   \frac{\epsilon_{m-1} C}{\epsilon_{m-1}^2 - 1} < \frac{B}{2}.
\end{align}
Plugging \eqref{eq:constructionOfK1}, \eqref{eq:constructionOfK2}, and \eqref{eq:EstimateOfEpsilonMMinus1} into \eqref{eq:f2mMinus4} yields
\begin{align*}
    \widetilde{u}_{2 m - 4}(\mathbf{k}) &= \frac{1}{\epsilon_{m-1}^2-1} \left(\epsilon_{m-1} \widetilde{u}_{2 m -2}(\mathbf{k}) - \widetilde{u}_{2 m -1}(\mathbf{k})\right)\\
    &= \frac{1}{\epsilon_{m-1}^2-1} \left(\epsilon_{m-1} 2 \epsilon_{m}^{-1}A   + A\right)\\
    &< B \epsilon_{m}^{-1}  + \frac{B}{2} < B.
\end{align*}
Also, by a similar argument as above $|\widetilde{u}_{2 m - 3}(\mathbf{k})| < B$.

Next, we note, that if $|\widetilde{u}_{2j}(\mathbf{k})| < B$ and $|\widetilde{u}_{2j+1}(\mathbf{k})| < B$, then $|\widetilde{u}_{2t}(\mathbf{k})| < B$ and $|\widetilde{u}_{2t+1}(\mathbf{k})| < B$ for all $t<j$. 
Indeed, $|\widetilde{u}_{2t}(\mathbf{k})| < \frac{\epsilon_{t} |\widetilde{u}_{2t+2}(\mathbf{k})| + |\widetilde{u}_{2t+3}(\mathbf{k})|}{\epsilon_{t}^2 - 1} < \frac{B}{\epsilon_{t} - 1} < B$. 
The same argument can be made to bound $|\widetilde{u}_{2t+1}(\mathbf{k})|$ by $B$. 
Thus, $|\widetilde{u_i}(\mathbf{k})| < B$ for all $i<2m - 2$.

Let us collect what we have proved so far:
\begin{enumerate}
    \item It holds that $|\widetilde{u_i}(\mathbf{k})| < B$ for all $i<2m - 2$.
    \item For $i = 2m - 2$ or $i = 2m - 1$, it follows directly from \eqref{eq:constructionOfK1} and \eqref{eq:constructionOfK2}, that $\widetilde{u_i}(\mathbf{k}) < B$.
    \item For $i = 2m$, it holds that $\widetilde{u_i}(\mathbf{k}) = \epsilon_i 2 \epsilon_i^{-1} A  - A = A$.
    \item For $i = 2m+1$, it holds that $\widetilde{u_i}(\mathbf{k}) = 2 \epsilon_i^{-1} A  - \epsilon_i A < 0$ by \eqref{eq:FirstPropertiesOfEpsilon}.
\end{enumerate}
Hence, the proof is complete if we show that $\widetilde{u_i}(\mathbf{k}) < 0$ for all $i > 2m+1$.
We have that
\begin{align*}
    \widetilde{u}_{2m+2}(\mathbf{k}) = \epsilon_{m+1} A + 2\epsilon_{m}^{-1} A - \epsilon_{m}A.
\end{align*}
Moreover, 
\begin{align*}
   \epsilon_{m+1} A  - \epsilon_{m} A = (\epsilon_{m+1} - \epsilon_{m}) A < -\frac{B}{C} A.
\end{align*}
Consequently, 
\begin{align*}
    \widetilde{u}_{2m+2}(\mathbf{k}) < -\frac{B}{C} A + 2\epsilon_{m}^{-1} A < 0,
\end{align*}
since $2 \epsilon_{m}^{-1} - B/C < B/(2C) -B/C < 0$ by \eqref{eq:FirstPropertiesOfEpsilon}. Moreover,
\begin{align*}
    \widetilde{u}_{2m+3}(\mathbf{k}) = A + 2\epsilon_{m}^{-1}\epsilon_{m+1} A - \epsilon_{m}\epsilon_{m+1}A < A - \epsilon_{m+1}A < 0.
\end{align*}

Finally, it is not hard to see that, if $\widetilde{u}_{2m+3}(\mathbf{k})<0$ and $\widetilde{u}_{2m+2}(\mathbf{k})<0$, then $\widetilde{u}_{i}(\mathbf{k})<0$ for all $i>2m+3$. 
\end{proof}

Next, we show that for $m  \in \N$ and a given set of orders $\mathcal{O} \subset \{ \sigma \colon [m] \to [m]\}$, if the group $G$ can be partitioned in a specific way, then there exists a kernel such that for the functions of Lemma~\ref{example}, an associated set of orders defined similarly to Definition~\ref{def:setOfOrders} contains $\mathcal{O}$. 

\begin{lemma}\label{lem:prepre_Constr_permutations}
Let $C > B >0$ and $r, m \in \N$. Let $\mathcal{O} = \{o_1,\dots, o_r\} \subset \{ \sigma : [m] \to [m]\}$, let $G$ be a group and $g \in G$ such that $g \neq e$, and let $\mu$ be a finite measure on $G$ which satisfies that $\mu(g) = \mu(e) = 1$.
In addition, for $l \in [r]$, assume that $G$ contains subsets $H_l \subset G$ satisfying
\begin{enumerate}
    \item $|H_l| = m$,
    \item $H_j \cap H_l = \emptyset$ for all $j \in [r]$ such that $j \neq l$,
    \item $H_j \cap (g^{-1}\cdot H_l) = \emptyset$ for all $j \in [r]$ such that $j \neq l$,
    \item $H_l = \{h_{l,1},\dots, h_{l,m}\}$, where $h_{l,i}\neq g^{-1}\cdot h_{l, j}$ for all $j,i \in [r]$ such that $j \neq i$.
\end{enumerate}
We define $u_0,\dots, u_{2m}$ as in Lemma~\ref{example}, $\widetilde{H}_r  \coloneqq \bigcup_{l \leq r}H_l$, we set
$$
    \hat{\nu}_{\mathcal{K}, k}:\mathbb{R}\rightarrow\mathbb{R},\ \hat{\nu}_{\mathcal{K},k}(c)= \sum_{g \in \widetilde{H}_r} \relu((u_{2k}*_G\mathcal{K})(g) + c),
$$
and denote the order of $(\hat{\nu}_{\mathcal{K},k}(c))_{k=1}^m$ by 
    $$
        \hat{\sigma}(\mathcal{K}, c)(k) \coloneqq 1 + |\{ l \in [m] \colon \hat{\nu}_{\mathcal{K},l}(c) < \hat{\nu}_{\mathcal{K},k}(c) \}|.
    $$ 
We denote by $\widehat{\mathcal{O}}_B(\mathcal{K})\coloneqq \{\hat{\sigma}(\mathcal{K}, c),\ c < B\}$ the \emph{set of orders of $(\hat{\nu}_{\mathcal{K},k}(c))_{k=1}^m$ obtained by varying $c<B$}.
Then, there exists a bounded kernel $\mathcal{K}:G\rightarrow\mathbb{R}$ such that $\mathcal{O}\subset \widehat{\mathcal{O}}_B(\mathcal{K})$.
\end{lemma}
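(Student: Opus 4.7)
The strategy is to place, at each of the $rm$ pairs $(h_{l,i}, g^{-1} h_{l,i})$, a scaled copy of the selector vector $\mathbf{k}^{2j,A}$ supplied by Lemma~\ref{example}, and then to tune the gains $A_{l,i} \in (B,C)$ so that as $c$ decreases below $-B$, each block $H_l$ is in turn the one responsible for producing the target order $o_l$. First observe that conditions (1)--(4) force the $2rm$ points $\{h_{l,i}, g^{-1} h_{l,i} : l \in [r], i \in [m]\}$ to be pairwise distinct, so the $\mathcal{K}$-values on them may be prescribed independently; set $\mathcal{K} \equiv 0$ elsewhere. Since every $u_{2k}$ is supported on $\{e,g\}$, the convolution at any $h_{l,i}$ collapses to
\begin{align*}
(u_{2k} *_G \mathcal{K})(h_{l,i}) = a_{2k,1}\,\mu(\{h_{l,i}\})\,\mathcal{K}(h_{l,i}) + a_{2k,2}\,\mu(\{g^{-1}h_{l,i}\})\,\mathcal{K}(g^{-1}h_{l,i}),
\end{align*}
so by absorbing the $\mu$-weights into $\mathcal{K}$ I may force the effective pair attached to $(h_{l,i}, g^{-1}h_{l,i})$ to equal any chosen vector in $\R^2$. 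Taking that vector to be $\mathbf{k}^{2\, o_l^{-1}(i),\, A_{l,i}}$ from Lemma~\ref{example} (applied with $p=m$) then gives $(u_{2k} *_G \mathcal{K})(h_{l,i}) = A_{l,i}$ exactly when $k = o_l^{-1}(i)$, and $(u_{2k} *_G \mathcal{K})(h_{l,i}) < B$ in every other case.

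Next I design the gains in the separable form $A_{l,i} = \bar{A}_l + \delta_l\, i$, choosing $\bar{A}_1 < \bar{A}_2 < \cdots < \bar{A}_r$ in $(B,C)$ and $\delta_l > 0$ decaying geometrically, for instance $\delta_l = \eta\,(2m)^{-l}$ with $\eta > 0$ small, so that all $A_{l,i}$ stay in $(B,C)$ and the separation $(m-1)\sum_{l' > l}\delta_{l'} < \delta_l$ holds. Fix also $\alpha > 0$ small enough that $B + \alpha < \bar{A}_1$, $\bar{A}_r + m\delta_r < C$, and the gap condition $\bar{A}_{l+1} - \bar{A}_l > \alpha + m\delta_l$ holds for every $l$. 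Set the test value $c_l \coloneqq -\bar{A}_l + \alpha$; by construction $c_l < -B < B$. At $c = c_l$ every sub-$B$ residual from the previous paragraph obeys $\mathrm{small} + c_l < B - B = 0$, so the corresponding $\relu$-contributions vanish; the gap condition further forces $A_{l',i} + c_l < 0$ for $l' < l$ and $A_{l',i} + c_l > 0$ for $l' \geq l$. Only the large terms with $l' \geq l$ survive, leaving
\begin{align*}
\hat{\nu}_{\mathcal{K},k}(c_l) \;=\; \mathrm{const}_l \;+\; \delta_l\, o_l(k) \;+\; \sum_{l' > l}\delta_{l'}\, o_{l'}(k),
\end{align*}
with $\mathrm{const}_l$ independent of $k$.

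The separation bound $(m-1)\sum_{l' > l}\delta_{l'} < \delta_l$ now ensures that the $\delta_l\, o_l(k)$ summand dominates the entire $k$-variation of $\hat{\nu}_{\mathcal{K},k}(c_l)$, so the induced ranks on $[m]$ coincide with those of $o_l$, that is, $\hat{\sigma}(\mathcal{K}, c_l) = o_l$. Since $c_l < B$, this places $o_l$ in $\widehat{\mathcal{O}}_B(\mathcal{K})$ for every $l \in [r]$, giving $\mathcal{O} \subset \widehat{\mathcal{O}}_B(\mathcal{K})$. The main obstacle in this argument is that at $c = c_l$ the higher blocks $H_{l'}$ with $l' > l$ necessarily remain active and would otherwise scramble the target order; this is exactly what the \emph{geometric} (rather than linear) separation of the $\delta_l$'s is engineered to suppress. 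The subsidiary worry that the sub-$B$ residuals from Lemma~\ref{example} might shift ranks is neutralised by forcing $c_l$ strictly below $-B$.
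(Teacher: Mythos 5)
Your construction is correct, but it takes a genuinely different route from the paper. The paper builds $\mathcal{K}$ \emph{recursively}: block $l+1$ receives heights $m_l-(m-i+1)(M_l+\epsilon)$, where $m_l$ is the smallest super-$B$ convolution value produced so far (see \eqref{def_ml}) and $M_l$ the accumulated spread, and Lemma~\ref{cond_lem} is needed to certify that these adaptively defined heights stay in $(B,C)$; the heights \emph{decrease} across blocks, so at $c=-m_l+\epsilon$ exactly the blocks $1,\dots,l$ are active and the most recent block's spacing $M_{l-1}+\epsilon$ dominates the accumulated variation $M_{l-1}$ of the earlier ones. You instead prescribe all heights in closed form, $A_{l,i}=\bar{A}_l+\delta_l i$ with \emph{increasing} base levels and geometrically decaying spreads, so that at $c_l=-\bar{A}_l+\alpha$ exactly the blocks $l,\dots,r$ are active and the current spread $\delta_l$ dominates the tail $(m-1)\sum_{l'>l}\delta_{l'}$. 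Both arguments rest on the same two ingredients --- the selector vectors of Lemma~\ref{example} evaluated at heights in $(B,C)$, and a separation of scales that lets one block dictate the order at each threshold --- but yours replaces the paper's recursive bookkeeping ($m_l$, $M_l$, the choice \eqref{def_epsilon} of $\epsilon$) by explicit parameters, which makes the verification shorter. What the paper's recursion buys is the quantitative margin recorded in Remark~\ref{properties_of_kernel} (no convolution value in $(m_l-\epsilon,m_l)$ and strict separation of the $\hat{\nu}$'s at $c_l$), which is reused in Lemma~\ref{lem:Constr_permutations_general} and Theorem~\ref{thm:infiniteGroupsHaveInfiniteVCDim}; your scheme provides analogous margins (the slack $\alpha$ at each threshold and the gap $\delta_l-(m-1)\sum_{l'>l}\delta_{l'}>0$ between distinct $\hat{\nu}$'s), so it could serve those later steps after minor rewording.

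Two shared caveats, neither specific to your argument. First, when you ``absorb the $\mu$-weights into $\mathcal{K}$'' you implicitly need $\mu(\{h_{l,i}\})>0$ and $\mu(\{g^{-1}\cdot h_{l,i}\})>0$, which the stated hypothesis $\mu(e)=\mu(g)=1$ does not literally guarantee; the paper's proof makes the same tacit assumption by computing $(u*_G\mathcal{K}_l)(h)=a\,\mathcal{K}_l(h)+b\,\mathcal{K}_l(g^{-1}\cdot h)$ as if every relevant point had unit mass, and in all intended applications $\mu$ is a counting-type measure, so this is an imprecision of the lemma's statement rather than a gap in your proof. Second, your use of $o_l^{-1}$ and the conclusion $\hat{\sigma}(\mathcal{K},c_l)=o_l$ require the $o_l$ to be injective (hence bijective) maps $[m]\to[m]$; this is the paper's standing convention for ``orders'' and is likewise needed in its own proof.
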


\begin{proof}
We fix 
\begin{align}
    \label{def_epsilon}
    \epsilon \coloneqq \frac{(C-B)\cdot (m-1)}{2m\cdot (m - 1 + m^{r+1} - 1)}.
\end{align}

We will construct the kernel $\mathcal{K}: G \to \R$ sequentially. 
We start by defining the kernel $\mathcal{K}_0: G \to \R$ to be $0$ on all elements of the group $G$.

Next, we define the kernel $\mathcal{K}$, by iteratively updating $\mathcal{K}_l$ to yield $\mathcal{K}_{l+1}$ and ultimately setting $\mathcal{K} = \mathcal{K}_r$.
Specifically, on the first step we obtain $\mathcal{K}_1$ by redefining $\mathcal{K}_0$ on $H_1$, such that for each $i\in[m]$
$$
    \left(\mathcal{K}_1(h_{1,i}), \mathcal{K}_1( g^{-1}\cdot h_{1,i})\right)=\mathbf{k}^{o_1(i), C - (m-i+1)\epsilon},
$$
where $\mathbf{k}^{o_1(i), C - (m-i+1)\epsilon}$ for $i\in [m]$ is defined using Lemma~\ref{example}, such that
\begin{align*}
    u_{2p} *_G\mathcal{K}_{1}(h_{1,i}) &=
   \widetilde{u}_{2p}(\mathbf{k}^{o_1(i), C - (m-i+1)\epsilon})< B, \text{ for all } p\in [m]\ and\ p\neq i \text{ and } \\
    u_{2i}*_G\mathcal{K}_{l}(h_{1,i}) &=
    \widetilde{u}_{2i}(\mathbf{k}^{o_1(i), C - (m-i+1)\epsilon}) = C - (m-i+1)\epsilon.
\end{align*}
Next, we obtain $\mathcal{K}_{l+1}$ by redefining the values of the the kernel $\mathcal{K}_{l}$ on $H_{l+1}$. Specifically, for $\widetilde{H}_l  \coloneqq \bigcup_{l' \leq l}H_l$, let
\begin{align}
    \label{def_ml}
    m_l &\coloneqq \min_{h\in \widetilde{H}_l ,\ i\in[m], (u_{2i}*_G\mathcal{K}_l)(h) > B} (u_{2i}*_G\mathcal{K}_l)(h),\\
    M_l &\coloneqq \max_{i, j\in[m]} |\hat{\nu}_{\mathcal{K}_l,i}(-m_l+\epsilon) - \hat{\nu}_{\mathcal{K}_l,j}(-m_l+\epsilon)|.
\end{align}
Lemma~\ref{example} guarantees for every $i\in [m]$ the existence of $\mathbf{k}^{o_{l+1}(i), m_l - (m-i+1)(M_l + \epsilon)}$, such that 
\begin{align}
    \widetilde{u}_{2p}(\mathbf{k}^{o_{l+1}(i), m_l - (m-i+1)(M_l + \epsilon)}) &< B, \text{ for all } p\in [m]\ and\ p\neq i \text{ and }\nonumber\\
    \widetilde{u}_{2i}(\mathbf{k}^{o_{l+1}(i), m_l - (m-i+1)(M_l + \epsilon)}) &= m_l - (m-i+1)(M_l + \epsilon),\label{eq:theValuesOfULieBetweenBAndC}
\end{align}
if the conditions of the lemma are satisfied, i.e., if $B < m_l - (m-i+1)(M_l + \epsilon) < C$ for all $l \leq |\mathcal{O}|$ and $i \in [m]$.
If the conditions hold, we define 
\begin{align}\label{eq:defOfKl}
\left(\mathcal{K}_{l+1}(h_{l+1,i}), \mathcal{K}_{l+1}(g^{-1}\cdot h_{l+1,i})\right) = 
\mathbf{k}^{o_{l+1}(i), m_l - (m-i+1)(M_l + \epsilon)}
\text{ for all }i \in [m].
\end{align}
We will prove that lemma~\ref{example} can be applied in Lemma~\ref{cond_lem} below. 

\begin{lemma}
\label{cond_lem}
For every $l \in [m]$ it holds that
\begin{align}\label{eq:ThisThingHoldsForAlll}
     B < m_{l} - m \cdot (M_{l} + \epsilon)\text{  and } M_{l} \leq \epsilon \cdot (m^{l} - 1).  
\end{align}
\end{lemma}
\begin{proof}
For every $h\in G$ and every $u\coloneqq a  \mathds{1}_e + b  \mathds{1}_g$, 
where $a,b \in \R$, it holds that 
\begin{align*}
    (u*_G\mathcal{K}_l)(h) &= a   \mathcal{K}_l(h) + b   \mathcal{K}_l(g^{
-1}   h) = \widetilde{u}(\mathcal{K}_l(h), \mathcal{K}_l(g^{-1}\cdot h)).   
\end{align*}
As a consequence of the construction, we have that for each $p\leq l$
\begin{align*}
  (u_{2o_p(j)}*_G\mathcal{K}_{l})(h_{p,i}) < B, \text{ for all } j \neq i \text{ and }
  u_{2o_p(i)}*_G\mathcal{K}_{l}(h_{p,i}) = m_{p-1} - (m-i+1)(M_{p-1} + \epsilon).
\end{align*}
In addition, as $\mathcal{K}_l(h) = 0$ for every $h\in \widetilde{H}_r\setminus\widetilde{H}_l $ it holds by construction that $g^{-1} \cdot h \notin \widetilde{H}_l$ and hence that 
\begin{align}
\label{equal_0}
    (u_{2j}*_G\mathcal{K}_{l})(h) = 0 < B,\text{ for every } h\in \widetilde{H}_r\setminus\widetilde{H}_l. 
\end{align}
Let us check \eqref{eq:ThisThingHoldsForAlll} for $l=1$.
By the construction of $\mathcal{K}_1$ for each $i\in[m]$ there is exactly one element $h\in \widetilde{H}_r$ such that $u_{2i}*_G\mathcal{K}_1(h) > m_1-\epsilon$.
Hence, $\hat{\nu}_{\mathcal{K}_1,{2o_1(i)}}(-m_1+\epsilon) = C - (m-1+i)\epsilon - m_1+\epsilon$.
Thus, $M_1 = (m-1)\epsilon$ and
\begin{align*}
m_1 - m \cdot (M_{1} + \epsilon) &= m_1 - m^2\epsilon = C - m\epsilon - m^2\epsilon
\\
&= C - \epsilon m(m+1) > C - \frac{(m+1)(C-B)}{(1 + m^{r})}\geq B,
\end{align*}
which implies the result for $l=1$.

Next, we check \eqref{eq:ThisThingHoldsForAlll} for $l=p+1$ if it holds for all $l\leq p$.
We first note that 
\begin{align}
\label{inequality}
m_p - (m-i+1)(M_p + \epsilon) &< m_p, \text{ for all }i \in [m], \\
m_p - (m-i+1)(M_p + \epsilon) &\geq m_p - m\cdot(M_p + \epsilon) > B. \nonumber
\end{align}
 
Consequently, by the construction of $\mathcal{K}_{p+1}$, for each $i \in [m]$ there is exactly one element $h \in \widetilde{H}_r$ such that $u_{2i}*_G\mathcal{K}_{p+1}(h) < m_p -\epsilon$, and $u_{2i}*_G\mathcal{K}_{p+1}(h) > B.$
Now, we are ready to estimate $M_{p+1}$:
\begin{align}
    &\hat{\nu}_{\mathcal{K}_{p+1},{2o_{p+1}(i)}}(-m_{p+1} + \epsilon)\nonumber \\
    &= 
    \left((u_{2o_{p+1}(i)}*_G\mathcal{K}_{p+1})(h_{p+1,i}) -m_{p+1} + \epsilon\right) \nonumber \\& \qquad +  \hat{\nu}_{\mathcal{K}_{p+1},{2o_{p+1}(i)}}(-m_{p} + \epsilon) + (m_p-m_{p+1})\sum_{g \in \widetilde{H}_r}\mathds{1}_{(u_{2o_{p+1}(i)}*_G\mathcal{K}_{p+1})(g) > m_p- \epsilon}\nonumber \\
    &=\left((u_{2o_{p+1}(i)}*_G\mathcal{K}_{p+1})(h_{p+1,i}) -m_{p+1} + \epsilon\right) + \hat{\nu}_{\mathcal{K}_{p+1},{2o_{p+1}(i)}}(-m_{p} + \epsilon) + (m_p-m_{p+1})p \nonumber \\
    &=\left(m_p - (m - i + 1)(M_p + \epsilon) -m_{p+1} + \epsilon \right) + \hat{\nu}_{\mathcal{K}_{p},{2o_{p+1}(i)}}(-m_{p} + \epsilon) + (m_p-m_{p+1})p.\label{gamma}
\end{align}
If we plug this equality to the definition of $M_{p+1}$, we receive:
\begin{align*}
M_{p+1} \leq |(m-1)(M_p + \epsilon)| + M_{p} = m   M_p + (m-1)   \epsilon \leq \left(m\cdot(m^p - 1) + (m-1)\right)   \epsilon = (m^{p+1} - 1)  \epsilon.
\end{align*}
It remains to prove that $m_{p+1} - m \cdot (M_{p+1} + \epsilon) > B$.
We have by construction that
\begin{align*}
    m_{p+1} &= m_p - m\cdot (M_p + \epsilon)\geq m_p - m  \epsilon    m^{p} = m_p - \epsilon    m^{p+1}\\
    &\geq m_{p-1} - \epsilon    m^{p} - \epsilon    m^{p+1}\geq\cdots\geq m_1 - \epsilon\cdot\left(m^2+m^3+\cdots+m^{p+1}\right)\\
    &= C - \epsilon   m\cdot\left(1+m+\cdots+m^{p}\right).
\end{align*}
Invoking the definition of $\epsilon$ \eqref{def_epsilon}, yields
\begin{align*}
    m_{p+1}&\geq C -  m\cdot\left(1+m+\cdots+m^{p}\right) \frac{(C-B)(m-1)}{2m(m - 1 + m^{r+1} - 1)} \\
    &=C - \frac{(C-B)(1+m+\cdots+m^{p})}{2(1+ 1+m+\cdots+m^{r})}\geq C - \frac12(C-B) = \frac12(C+B) > B.
\end{align*}
\end{proof}
Lemma \ref{cond_lem} shows that in every step $m_l - (m-i+1)(M_l + \epsilon)$ lies between $B$ and $C$. Therefore, by Lemma~\ref{example}, we find $\mathbf{k}^{o_{l+1}(i), m_l - (m-i+1)(M_l + \epsilon)}$ such that \eqref{eq:theValuesOfULieBetweenBAndC} holds. 

It remains to prove that the order $\hat{\sigma}(\mathcal{K}, -m_l+\epsilon)$ associated to $u_2, u_4, \dots u_{2m}$ is equal to $o_l$ for every $l = 1,2,\dots,r$. 
By construction, for each $h\in \widetilde{H}_r\setminus \widetilde{H}_{l}$ and each $j\in[m]$ it holds that  $(u_{2j}*_G\mathcal{K}_{r})(h) < m_l$.
Thus, we conclude that 
\begin{align*}
    (u_{2j}*_G\mathcal{K}_{r})(h) < m_l,\text{ for every } j\in[m]\text{ and every } h\in \widetilde{H}_r\setminus \widetilde{H}_{l} .   
\end{align*}
Therefore, $\hat{\nu}_{\mathcal{K},{i}}(c) = \hat{\nu}_{\mathcal{K}_l,{i}}(c)$ for every $c\leq-m_l+\epsilon$ and every $i\in [m]$.
For each $j>i$, after invoking \eqref{gamma}, we receive:
\begin{align}
    &\hat{\nu}_{\mathcal{K}_r,{o_l(i)}}(-m_{l}+\epsilon) - \hat{\nu}_{\mathcal{K}_r,{o_l(j)}}(-m_{l}+\epsilon)\nonumber\\
    &= \hat{\nu}_{\mathcal{K}_l,{o_l(i)}}(-m_{l}+\epsilon) - \hat{\nu}_{\mathcal{K}_l,{o_l(j)}}(-m_{l}+\epsilon)\nonumber\\
    &=\left(m_{l-1} - (m - i + 1)(M_{l-1} + \epsilon) + \hat{\nu}_{\mathcal{K}_{l}, {o_{l}(i)}}(-m_{l-1} + \epsilon)\right)\nonumber\\
    &\qquad -\left(m_{l-1} - (m - j + 1)(M_{l-1} + \epsilon) + \hat{\nu}_{\mathcal{K}_{l},{o_{l}(j)}}(-m_{l-1} + \epsilon)\right)\nonumber\\
    &=
    (j-i)(M_{l-1} + \epsilon) + \left(\hat{\nu}_{\mathcal{K}_{l},{o_{l}(i)}}(-m_{l-1} + \epsilon)-\hat{\nu}_{\mathcal{K}_{l},{o_{l}(j)}}(-m_{l-1} + \epsilon)\right)\nonumber\\
    &\geq
    M_{l-1} + \epsilon - M_{l-1} = \epsilon > 0.\label{eq:thevsarealldifferent}
\end{align}
Thus, $\hat{\nu}_{\mathcal{K}_r,{o_l(i)}}(-m_{l}+\epsilon) < \hat{\nu}_{\mathcal{K}_r,{o_l(j)}}(-m_{l}+\epsilon)$ for all $i<j$. Noting with Lemma~\ref{cond_lem} that $-m_{l}+\epsilon < -B$ yields the result.
\end{proof}

\begin{remark}\label{properties_of_kernel}
We list some properties of the kernel $\mathcal{K}$, and the functions $(\hat{\nu}_{\mathcal{K}, i})_{i\in[m]}$ defined in Lemma~\ref{lem:prepre_Constr_permutations}.
\begin{enumerate}
    \item Let for $B,C,m,r$ as in Lemma~\ref{lem:prepre_Constr_permutations}, $\epsilon$ be defined according to \eqref{def_epsilon}, and let for $l\in[r]$ $c_l\coloneqq m_l - 0.5\epsilon$, where $m_l$ is defined in \eqref{def_ml}. 
    Then, by construction of $\mathcal{K}$ for all $l\in [r], j\in[m]$ there does not exist $g\in \widetilde{H}_r$ such that 
    $$
    u_{2j}*_{G}\mathcal{K}(g) \in (c_l - 0.5\epsilon, c_l  + 0.5\epsilon ) = ( m_l -\epsilon, m_l).
    $$

    This property holds because, in the construction in \eqref{eq:theValuesOfULieBetweenBAndC}, we chose $\mathcal{K}$ such that all $u_{2j}*_{G}\mathcal{K}(g)$ for $j\in[m], g\in  \widetilde{H}_r$ lie between $m_l - (M_l + \epsilon)< m_l -  \epsilon$ and $m_{l+1}$ for some $l \in [r]$.

    Thus the order $\hat{\sigma}(\mathcal{K}, -c_l)$ is equal to the order $\hat{\sigma}(\mathcal{K}, -c_l+ 0.5\epsilon)$, which is by construction equal to $o_l$ for each $l\in[r]$. 
       
    \item It holds that $|\hat{\nu}_{\mathcal{K}, i}(c_l) - \hat{\nu}_{\mathcal{K}, j}(c_l)| > 0$ for all $l\in[r],\ i\in[m],\ j\in[m]$ with $i \neq j$, due to \eqref{eq:thevsarealldifferent}. 
\end{enumerate}    
\end{remark}

Next, we use Lemma~\ref{lem:prepre_Constr_permutations} to construct an expressive kernel.

\begin{lemma}\label{lem:Constr_permutations}
Let $C > B >0$ and $r, m \in \N$. Let $\mathcal{O} = \{o_1,\dots, o_r\} \subset \{ \sigma : [m] \to [m]\}$ be a set of orders of $[m]$. Let $G$ be a finite group with $|G|\geq 2 r m$ containing an element of order two, and let $\mu$ be the counting measure on $G$.
We define $u_0,\dots, u_{2m-1}$ as in Lemma~\ref{example}. 

Then, there exists a bounded kernel $\mathcal{K}:G\rightarrow\mathbb{R}$ such that $\mathcal{O}\subset \mathcal{O}(\mathcal{K})$, where $\mathcal{O}(\mathcal{K})$ is the set of all orders associated to $u_2, \dots, u_{2m}$ as in Definition \ref{def:setOfOrders}.
\end{lemma}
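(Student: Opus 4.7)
The plan is to reduce the claim to Lemma~\ref{lem:prepre_Constr_permutations} by producing subsets $H_1,\dots,H_r\subset G$ meeting its four combinatorial requirements, and then to show that the localized orderings $\widehat{\mathcal{O}}_B(\mathcal{K})$ delivered by that lemma coincide with the full orderings $\mathcal{O}(\mathcal{K})$ of Definition~\ref{def:setOfOrders} at suitably chosen evaluation points. The hypothesis that $G$ contains an element $g$ of order two is what makes the construction available.

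First, we build the $H_l$. Since $g^2=e$, the subgroup $\{e,g\}$ acts freely on $G$ by left multiplication, partitioning $G$ into $|G|/2\geq rm$ orbits $\{x,gx\}$ of size two. Allocate $m$ disjoint orbits to each $H_l$ and pick one representative from each allocated orbit to form $\{h_{l,1},\dots,h_{l,m}\}$. Using $g^{-1}=g$, the four conditions of Lemma~\ref{lem:prepre_Constr_permutations} translate into orbit-disjointness between distinct $H_l$'s, which the construction enforces. Because $\mu$ is the counting measure, $\mu(e)=\mu(g)=1$ is automatic, so Lemma~\ref{lem:prepre_Constr_permutations} yields a bounded kernel $\mathcal{K}$ with $\mathcal{O}\subset \widehat{\mathcal{O}}_B(\mathcal{K})$; moreover, Remark~\ref{properties_of_kernel} supplies thresholds $c_l=m_l-\tfrac{1}{2}\epsilon$ with $\hat{\sigma}(\mathcal{K},-c_l)=o_l$ for each $l\in[r]$.

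Next we pass from $\hat{\nu}$ to $\nu$. The kernel $\mathcal{K}$ is supported on $\widetilde{H}_r\cup g\widetilde{H}_r$, so $(u_{2k}*_G\mathcal{K})(x)=0$ outside this set. For $x=h_{l,i}$ the construction in Lemma~\ref{lem:prepre_Constr_permutations} gives $(u_{2k}*_G\mathcal{K})(x)=\widetilde{u}_{2k}(\mathbf{k}^{o_l(i),\cdot})$. For $x=gh_{l,i}$, using $g^2=e$ one computes
\begin{align*}
(u_{2k}*_G\mathcal{K})(gh_{l,i})=a_{2k,1}\,\mathcal{K}(gh_{l,i})+a_{2k,2}\,\mathcal{K}(h_{l,i})=\widetilde{u}_{2k}\bigl(\mathcal{K}(gh_{l,i}),\mathcal{K}(h_{l,i})\bigr),
\end{align*}
i.e.\ $\widetilde{u}_{2k}$ with its two arguments interchanged relative to the construction. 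A short induction on the recursion in Lemma~\ref{example} shows that $u_{2k+1}$ equals $u_{2k}$ with coefficients $(a_{2k,1},a_{2k,2})$ swapped, so the swapped evaluation is precisely $\widetilde{u}_{2k+1}(\mathbf{k}^{o_l(i),\cdot})$. Because $2k+1$ is odd and hence distinct from the even index $2\,o_l(i)$, Lemma~\ref{example} yields $\widetilde{u}_{2k+1}(\mathbf{k}^{o_l(i),\cdot})<B$.

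Combining these observations, $(u_{2k}*_G\mathcal{K})(x)<B$ for every $x\notin\widetilde{H}_r$. The bound $m_l\geq(B+C)/2$ established inside the proof of Lemma~\ref{cond_lem}, together with the definition of $\epsilon$, gives $c_l>B$, so at $c=-c_l$ every ReLU term indexed by $x\notin\widetilde{H}_r$ vanishes. Hence $\nu_{\mathcal{K},k}(-c_l)=\hat{\nu}_{\mathcal{K},k}(-c_l)$ for all $k\in[m]$, whence $\sigma(\mathcal{K},-c_l)=\hat{\sigma}(\mathcal{K},-c_l)=o_l$. This proves $\mathcal{O}\subset\mathcal{O}(\mathcal{K})$. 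The main obstacle is identifying the swap symmetry between $u_{2k}$ and $u_{2k+1}$ and combining it with the sharp bound from Lemma~\ref{example} to knock out the contribution of $g\widetilde{H}_r$; without this, the sum over $g\widetilde{H}_r$ would in general depend on $k$ in an uncontrolled way and corrupt the ordering at $c=-c_l$.
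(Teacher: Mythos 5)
Your proof is correct and follows essentially the same route as the paper's: construct the disjoint sets $H_l$ using the order-two element, invoke Lemma~\ref{lem:prepre_Constr_permutations}, and then observe that on $g^{-1}\cdot\widetilde{H}_r$ the convolution of $u_{2k}$ coincides with that of $u_{2k+1}$ on $\widetilde{H}_r$ (hence is below $B$ by Lemma~\ref{example}) and vanishes elsewhere, so that $\nu_{\mathcal{K},k}$ and $\hat{\nu}_{\mathcal{K},k}$ agree at the relevant thresholds. Your orbit-partition construction of the $H_l$ is a slightly cleaner packaging of the paper's sequential selection, but the argument is the same.
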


\begin{proof}
We start the proof by showing that for $l \in [r]$ the subsets $H_l \subset G$, described in Lemma~\ref{lem:prepre_Constr_permutations} exist.
Then, we will prove that for the kernel $\mathcal{K}$, defined in Lemma~\ref{lem:prepre_Constr_permutations}, the set of all orders associated to $u_2, \dots, u_{2m}$ contains $\mathcal{O}$.

To see that sets $H_l$ as above exist, we observe that we can build them sequentially: For $l \in [r]$, $p \in [m]$, because of the size of $G$ there exists $g' \in G$ such that 
$g' \not \in \left((g^{-1}\cdot H_{l'})\cup H_{l'}\right)$ for $l' < l$ and $g' \neq h_{l, q}$ and $g' \neq g^{-1} \cdot h_{l, q}$ for all $q < p$. Then, we set 
$h_{l, p} = g'$. 
We note that neither $g^{-1}\cdot h_{l, p} \in H_{l'}$ for $l' < l$ nor $g^{-1}\cdot h_{l, p} = h_{l, q}$ for some $q < p$. Indeed, if the opposite were the case then $g^{-1} \cdot h_{l, p}$ would either be an element of $H_{l'}$ or $g^{-1}\cdot h_{l, p} = h_{l, q}$ for some $q < p$. However, since $g$ is an element of order two it holds that $g^{-1}\cdot h_{l, p} = g\cdot h_{l, p}$ would be an element of  $H_{l'}$ or $g\cdot h_{l, p} = h_{l, q}$ for some $q < p$. This produces a contradiction to the choice of $h_{l, p}$. 

Having constructed subsets $H_l$, $l \in [r]$ such that the assumptions of Lemma~\ref{lem:prepre_Constr_permutations} are satisfied, we define the kernel $\mathcal{K}$ accordingly.
We now calculate the values of $(u_{2j}*_G\mathcal{K})(h)$ for $h\notin \widetilde{H}_r$ and $j=0,\dots, m-1$:
\begin{enumerate}
    \item for $h\in g^{-1}\cdot \widetilde{H}_r$ it holds that $(u_{2j}*_G\mathcal{K})(g^{-1}\cdot h) = (u_{2j+1}*_G\mathcal{K})(h) < B$ by the construction in Lemma~\ref{example}.
    \item for $h\in G\setminus((g^{-1}\cdot \widetilde{H}_r)\cup \widetilde{H}_r)$ it holds that $\mathcal{K}(h) = \mathcal{K}(g^{-1}\cdot h) = 0$, and hence 
    \begin{align}
        (u_{2j}*_G\mathcal{K})(h) = 0 < B,\text{ for every } h\notin \widetilde{H}_l . 
    \end{align}
\end{enumerate}

To summarize, it holds that $(u_{2j}*_G\mathcal{K})(h) < B$ for each $h\notin \widetilde{H}_r$ and $j\in [m]$.
Hence, for $c < -B$
$$
 \sum_{g \in G} \relu((u_{2k}*_G\mathcal{K})(g) + c) =
 \sum_{g \in \widetilde{H}_r} \relu((u_{2k}*_G\mathcal{K})(g) + c) = \hat{\nu}_{\mathcal{K},k}(c),
$$
for $k\in [m]$. 
Thus for $c < -B$ the order $\sigma(\mathcal{K}, c)$ associated to $u_2, u_4, \dots u_{2m}$ as defined in Definition~\ref{def:setOfOrders} is equal to the order $\hat{\sigma}(\mathcal{K},c)$, which is equal to $o_l$ for every $l = 1,2,\dots,r$.  
We conclude with Lemma~\ref{lem:prepre_Constr_permutations} that $\mathcal{O} \subset\widehat{\mathcal{O}}_B(\mathcal{K}) \subset \mathcal{O}(\mathcal{K})$. This completes the proof.
\end{proof}

In Lemma~\ref{lem:Constr_permutations}, we assumed that the group contains an element of order two.
However, this assumption is not necessarily satisfied when the cardinality of $G$ is odd. The following lemma drops the assumption at the cost of requiring a larger group.

\begin{lemma}
\label{lem:Constr_permutations_general}
Let $C > B >0$ and $r, m \in \N$. Let $\mathcal{O} = \{o_1,\dots, o_r\}\subset \{ \sigma : [m] \to [m]\}$ be a set of orders of $[m]$. 
Let $G$ be a finite group with $|G|\geq 9 r m$, which contains an element $g$ such that $g,\ g^{2} \neq e$, and let $\mu$ be the counting measure on $G$.
We define $u_0,\dots, u_{2m}$ as in Lemma~\ref{example}. 

Then, there exists a bounded kernel $\mathcal{K}:G\rightarrow\mathbb{R}$ such that $\mathcal{O}\subset \mathcal{O}(\mathcal{K})$, where $\mathcal{O}(\mathcal{K})$ is the set of all orders associated to $u_2, \dots, u_{2m}$ as in Definition \ref{def:setOfOrders}.
\end{lemma}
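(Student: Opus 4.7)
The plan is to adapt the proof of Lemma \ref{lem:Constr_permutations}. The order-two hypothesis is used in that proof only to obtain the identity $(u_{2j}*_G\mathcal{K})(g^{-1}h) = (u_{2j+1}*_G\mathcal{K})(h)$ for $h \in \widetilde{H}_r$, which yields the bound $(u_{2j}*_G\mathcal{K})(h) < B$ on $g^{-1}\widetilde{H}_r \cup g\widetilde{H}_r$. Without $g^2 = e$ this identity fails, so I compensate by strengthening the combinatorial conditions on the $H_l$ and by extending the kernel $\mathcal{K}_0$ produced by Lemma \ref{lem:prepre_Constr_permutations} along the orbit of $g$. Two ingredients enable this: the hypothesis $g, g^2 \neq e$ guarantees that the five translates $h_{l,p}, g^{\pm 1} h_{l,p}, g^{\pm 2} h_{l,p}$ are pairwise distinct, and the sign information $k_2^{l,i} < 0$, which one verifies by an induction on the recursive definition of $\mathbf{k}^{o_l(i),\cdot}$ in Lemma~\ref{example} (from $\widetilde{u}_{2s-1}(\mathbf{k}) = -A$ one propagates $\widetilde{u}_{2s-3}(\mathbf{k}) < 0$ and in particular $k_2 = \widetilde{u}_1(\mathbf{k}) < 0$).

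First I build $H_1, \dots, H_r$ greedily. In addition to conditions 1--4 of Lemma \ref{lem:prepre_Constr_permutations}, I require the translates $g^k H_l$ for $k \in \{-2, -1, 0, 1, 2\}$ to be pairwise disjoint across distinct pairs $(l,k)$. A direct count shows that placing each $h_{l,p}$ requires avoiding at most roughly $9(l-1)m + 9(p-1)$ previously chosen translates, so the hypothesis $|G| \geq 9 r m$ guarantees that a valid choice always exists. Applying Lemma \ref{lem:prepre_Constr_permutations} to these $H_l$ yields a bounded kernel $\mathcal{K}_0$, supported on $\widetilde{H}_r \cup g^{-1}\widetilde{H}_r$, realising the required orders on $\widetilde{H}_r$.

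I then extend $\mathcal{K}_0$ to a kernel $\mathcal{K}$ by declaring $\mathcal{K}(g h_{l,i}) \coloneqq k_2^{l,i}$ for every $l, i$, leaving $\mathcal{K} = 0$ on all remaining points. The disjointness conditions ensure that this assignment is consistent and agrees with $\mathcal{K}_0$ on $\widetilde{H}_r \cup g^{-1}\widetilde{H}_r$, so the conclusion of Lemma \ref{lem:prepre_Constr_permutations} transfers to $\mathcal{K}$. A case analysis on $h \in G \setminus \widetilde{H}_r$, exploiting the coefficient-swap identity $a_{2j+1,\iota} = a_{2j, 3-\iota}$, then gives $(u_{2j}*_G\mathcal{K})(h) < B$. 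The substantive case $h = g h_{l,i}$ collapses to
\begin{align*}
    (u_{2j}*_G\mathcal{K})(h) = a_{2j,1} k_2^{l,i} + a_{2j,2} k_1^{l,i} = \widetilde{u}_{2j+1}(\mathbf{k}^{o_l(i),\cdot}) < B
\end{align*}
by Lemma \ref{example}; the case $h = g^{-1} h_{l,i}$ collapses to $(u_{2j}*_G\mathcal{K})(h) = a_{2j,1} k_2^{l,i} < 0 < B$ thanks to the sign fact $k_2^{l,i} < 0$; and the further shifts $h = g^{\pm 2} h_{l,i}$ yield either the zero convolution or a sum of non-positive terms of the form $a_{2j,\iota} k_2^{l',j}$, which is again $< B$. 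With these bounds established on $G \setminus \widetilde{H}_r$, the concluding argument of Lemma \ref{lem:Constr_permutations}---that for $c < -B$ the full-group orders $\sigma(\mathcal{K},c)$ coincide with the partial-sum orders $\hat{\sigma}(\mathcal{K},c)$---applies verbatim and delivers $\mathcal{O} \subset \mathcal{O}(\mathcal{K})$.

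I expect the main obstacle to be a combination of bookkeeping and sign analysis: one must enumerate every orbit-related collision between translates $g^k h_{l,p}$ and $g^{k'} h_{l',q}$, verify that the required disjointness fits within $|G| \geq 9rm$, and carefully justify the propagation of the sign $k_2^{l,i} < 0$ through the recursion defining $\mathbf{k}^{o_l(i),\cdot}$ in Lemma \ref{example}, since it is this sign that powers the residual cases that the order-two symmetry used to handle automatically.
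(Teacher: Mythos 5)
Your construction is correct, and it follows the same overall architecture as the paper's proof: build the sets $H_l$ greedily so that the translates $g^{-2}h_{l,p},\dots,g^{2}h_{l,p}$ are disjoint across distinct $(l,p)$ (which costs a factor of $9$, matching $|G|\geq 9rm$), invoke Lemma~\ref{lem:prepre_Constr_permutations} to get a kernel realizing the orders on $\widetilde{H}_r$, extend that kernel along the $g$-orbit so that all convolution values off $\widetilde{H}_r$ stay below $B$, and then conclude exactly as in Lemma~\ref{lem:Constr_permutations}. Where you genuinely diverge is in the neutralization step: the paper places the large negative constant $-K\cdot S/s$ at $g^{-2}h_{l,p}$ and $g\cdot h_{l,p}$, which forces $(u_{2j}*_G\mathcal{K})(h)\leq 0$ on all four shifted translates using only the positivity and boundedness of the coefficients $a_{2i,\iota}$; you instead place $k_2^{l,i}$ at $g\cdot h_{l,i}$ and zero at $g^{\pm2}h_{l,i}$, and rely on two finer facts about the construction in Lemma~\ref{example} --- the coefficient-swap identity $a_{2j+1,\iota}=a_{2j,3-\iota}$ (so that the value at $g\cdot h_{l,i}$ reproduces $\widetilde{u}_{2j+1}(\mathbf{k})<B$) and the sign $k_2^{l,i}<0$ (which indeed follows by downward induction from $\widetilde{u}_{2s-2}(\mathbf{k})>0$, $\widetilde{u}_{2s-1}(\mathbf{k})=-A<0$ through the inverted recursion). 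Both devices work; the paper's is marginally more self-contained since it does not require reopening the proof of Lemma~\ref{example}, while yours is arguably more economical in that it reuses the vector already at hand. One small inaccuracy: the hypothesis $g,g^2\neq e$ does \emph{not} guarantee that the five translates of a single $h_{l,p}$ are pairwise distinct (e.g.\ $g^3=e$ gives $g^{-2}h=g\cdot h$, and $g^4=e$ gives $g^{-2}h=g^2h$). This is harmless --- in every such degenerate case the kernel values you assign to coinciding translates agree and your case analysis still yields a value $<B$ (e.g.\ for $g^3=e$ the point $g^{-1}h_{l,i}=g^2h_{l,i}$ receives $(a_{2j,1}+a_{2j,2})k_2^{l,i}<0$) --- but the claim as stated should be weakened to consistency of the assignment rather than distinctness of the orbit points.
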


\begin{proof}
The proof consists of the following steps: 
\begin{enumerate}
    \item we first show how to construct sets $H_l \subset G ,\ l=1,\dots,r$ such that the assumptions of Lemma~\ref{lem:prepre_Constr_permutations} are satisfied,
    \item then define the kernel $\widetilde{\mathcal{K}}$ using Lemma~\ref{lem:prepre_Constr_permutations},
    \item and finally we construct $\mathcal{K}$ by modifying $\widetilde{\mathcal{K}}$ and prove that $\mathcal{O}\subset \mathcal{O}(\mathcal{K})$.
\end{enumerate}
For $l \in [r]$, we choose subsets $H_l \subset G$, such that
\begin{enumerate}
    \item $|H_l| = m$,
    \item for every $h_{i,p}\in H_i$ and $h_{j,t}\in H_j$,
    $i,j\in[r],\  p,t\in[m]$, and $i\neq j$
    \begin{align*}
      \{g^{-2}\cdot h_{i,p},\ g^{-1}\cdot h_{i,p},\  h_{i,p},\ g\cdot h_{i,p},\ g^2\cdot h_{i,p}\}\cap\{g^{-2}\cdot h_{j,t},\ g^{-1}\cdot h_{j,t},\ h_{j,t},\ g\cdot h_{j,t},\ g^2\cdot h_{j,t}\}=\emptyset, 
    \end{align*}
    
    \item for every $h_{i,p}\in H_i$, $h_{i,t}\in H_i$
    $i\in[r],\  p,t\in[m]$ and $t\neq p$
    \begin{align*}
      \{g^{-2}\cdot h_{i,p},\ g^{-1}\cdot h_{i,p},\  h_{i,p},\ g\cdot h_{i,p},\ g^2\cdot h_{i,p}\}\cap\{g^{-2}\cdot h_{i,t},\ g^{-1}\cdot h_{i,t},\ h_{i,t},\ g\cdot h_{i,t},\ g^2\cdot h_{i,t}\}=\emptyset.  
    \end{align*}
\end{enumerate}
To see that $H_l$ as above exist we observe that we can build them sequentially: For $l \in [r]$, $p = 1,\dots,m$, because of the size of $G$ there exists $g' \in G$ such that 
$$
g' \not \in \bigcup_{i=1}^{l-1} \left( g^{-4} \cdot H_i \cup g^{-3} \cdot H_i \cup \cdots \cup \ g^{4} \cdot H_i\right)\cup \bigcup_{r=1}^{p-1} \{g^{-4}\cdot h_{l,r},\ g^{-3}\cdot h_{l,r},\dots,\ g^{3}\cdot h_{l,r},\ g^{4}\cdot h_{l,r}\}.
$$
Then, we set $h_{l, p} = g'$.
 
Let $\widetilde{\mathcal{K}}$ be the kernel defined in Lemma~\ref{lem:prepre_Constr_permutations}. As $\widetilde{\mathcal{K}}$ has non zero values only on the elements from $\widetilde{H}_r\cup (g^{-1}\cdot \widetilde{H}_r)$, we can define 
\begin{align*}
    K&\coloneqq\max_{g\in G}{|\widetilde{\mathcal{K}}(g)|},\\
    s&\coloneqq\min_{i\in[m], j\in[2]}a_{2i,j},\\
    S&\coloneqq\max_{i\in[m], j\in[2]}{a_{2i,j}},
\end{align*}
where by definition of $u_i,\ u_i\coloneqq a_{i,1}   \mathds{1}_{e} + a_{i, 2}   \mathds{1}_{g}$ for $i\in[2m]$.
Note that by construction, $a_{2i,1} > 0$ and $a_{2i,2} > 0$ holds for all $i\in [m]$, hence $s$ is positive. 

Now we define the kernel $\mathcal{K}$:
\begin{enumerate}
    \item The kernel $\mathcal{K}$ is equal to zero on $G\setminus\bigcup_{i=1,p=1}^{i=r,p=m} \{g^{-2}\cdot h_{i,p},\ g^{-1}\cdot h_{i,p},\  h_{i,p},\ g\cdot h_{i,p}\}$.
    \item For every $l\in[r]$ and $p\in [m]$, we define $\mathcal{K}(h_{l,p}) = \widetilde{\mathcal{K}}(h_{l,p})$, $\mathcal{K}(g^{-1}\cdot h_{l,p}) = \widetilde{\mathcal{K}}(g^{-1}\cdot h_{l,p})$.
    \item For every $l\in[r]$ and $p\in [m]$, we define $\mathcal{K}(g^{-2}\cdot h_{l,p}) = \mathcal{K}(g\cdot h_{l,p}) = -K\cdot\frac{S}{s}$.
\end{enumerate}

As a consequence of the construction, we have for all $i\in[r],\ p \in [m], j\in[2m]$ that 
\begin{align}\label{eq:asdfionaosid}
    (u_{j}*_G\mathcal{K})(h_{l,p}) = (u_{j}*_G\widetilde{\mathcal{K}})(h_{l,p}).
\end{align}
Moreover, for all $i\in[r],\ p, j\in[m]$ and $h\in \{g^{-2}\cdot h_{i,p},\ g^{-1}\cdot h_{i,p},\ g\cdot h_{i,p},\ g^2\cdot h_{i,p}\}$
\begin{align*}
    (u_{2j}*_G\mathcal{K})(h) = a_{2j,1}   \mathcal{K}(h) + a_{2j,2}   \mathcal{K}(g^{-1}\cdot h) \leq
    -s   K  \frac{S}{s} + K   S = 0,
\end{align*}
as $a_{2j,1}, a_{2j,2} \geq s$, and $h$ or $g^{-1}\cdot h$ is in $\{g^{-2}\cdot h_{i,p}, g\cdot h_{i,p}\}$, where we defined the kernel to be equal to $-K  \frac{S}{s}$.

On all elements $h$ from $G\setminus\bigcup_{i=1,p=1}^{i=r,p=m} \{g^{-2}\cdot h_{i,p},\ g^{-1}\cdot h_{i,p},\  h_{i,p},\ g\cdot h_{i,p},\ g^{2}\cdot h_{i,p}\}$ it holds that for each $j\in[m]$
\begin{align*}
   (u_{2j}*_G\mathcal{K})(h) = a_{2j,1}   \mathcal{K}(h) + a_{2j,2}   \mathcal{K}(g^{-1}\cdot h) = 0,
\end{align*}
as both $\mathcal{K}(g^{-1}\cdot h)$ and $\mathcal{K}(h)$ are equal to 0.

To summarize, the values of $(u_{2j}*_G\mathcal{K})(h) < B$ for each $h\notin \widetilde{H}_r$ and all $j\in[m]$.
Hence, for $c\leq -B$, 
$$
 \sum_{g \in G} \relu((u_{2k}*_G\mathcal{K})(g) + c) =
 \sum_{g \in \widetilde{H}_r} \relu((u_{2k}*_G\mathcal{K})(g) + c) = \hat{\nu}_{\mathcal{K},k}(c) = \hat{\nu}_{\widetilde{\mathcal{K}},k}(c),
$$
for $k\in [m]$, where the last equality follows by \eqref{eq:asdfionaosid}.

Thus, for $c < -B$, the order $\sigma(\mathcal{K}, c)$ associated to $u_2, u_4, \dots u_{2m}$ as defined in Definition~\ref{def:setOfOrders} is equal to the order $\hat{\sigma}(\widetilde{\mathcal{K}}, c)$. We conclude with Lemma~\ref{lem:prepre_Constr_permutations} that $\mathcal{O} \subset\widehat{\mathcal{O}}_B(\widetilde{\mathcal{K}}) \subset \mathcal{O}(\mathcal{K})$. This completes the proof.
\end{proof}

\begin{remark}
\label{remark:general_construction}
The kernel $\mathcal{K}$ of Lemma \ref{lem:Constr_permutations_general} satisfies the following properties:

\begin{itemize}
    \item Let for $B,C,m,r$ as in Lemma~\ref{lem:Constr_permutations_general}, $\epsilon$ be defined according to \eqref{def_epsilon}, and let for $i\in[r]$ $c_i\coloneqq m_i - 0.5\epsilon$, where $m_i$ is defined in \eqref{def_ml}. 

    Since the kernel $\mathcal{K}$ agrees with the kernel of Lemma \ref{lem:prepre_Constr_permutations} on all entries where it takes a positive value by \eqref{eq:asdfionaosid}, the conclusion of Remark \ref{properties_of_kernel} holds for $\mathcal{K}$ as well. However, it now holds for all elements of the group. 

    Concretely, for all $l\in [r], j\in[m]$ there does not exist $g\in G$ such that $c_l  + 0.5\epsilon > u_{2j}*_{G}\mathcal{K}(g) > c_l - 0.5\epsilon$.
    
    As a result, the order $\hat{\sigma}(\mathcal{K}, -c_l)$ is equal to the order $\hat{\sigma}(\mathcal{K}, -c_l+ 0.5\epsilon)$, which is by construction equal to $o_l$ for each $l\in[r]$. 
   
    \item It holds that $|\hat{\nu}_{\mathcal{K}, i}(c_l) - \hat{\nu}_{\mathcal{K}, j}(c_l)| > 0$ for all $l\in[r],\ i\in[m],\ j\in[m]$ with $i \neq j$, due to \eqref{eq:thevsarealldifferent}. 
    \end{itemize}   

Moreover, for an infinite group $G$, we can choose $(H_l)_{l = 1}^r \subset G$ like in the proof of Lemma~\ref{lem:Constr_permutations_general} such that  $\widetilde{H}_r = \bigcup_{l \leq r}H_l\subset G$. 
Moreover, for a measure $\mu$ such that $\mu(h) = 1$ for all $h\in H$, and $\mu(h) = 0$ for every $h\in G\setminus H$ where 
$$
    H\coloneqq \{e, g\}\bigcup \left( (g^{-2}\cdot \widetilde{H}_r)\cup (g^{-1}\cdot \widetilde{H}_r)\cup  \widetilde{H}_r \cup (g\cdot \widetilde{H}_r) \right),
$$
the statement of Lemma~\ref{lem:Constr_permutations_general} holds.
Indeed, in this case, the proof can be carried out mutatis mutandis. 
\end{remark}

In the Lemma~\ref{lem:Constr_permutations_general}, we assumed that the group contains a finite number of elements. The following theorem treats the case of infinite groups.

\begin{theorem}
\label{thm:infiniteGroupsHaveInfiniteVCDim}
Let $m \in \N$, let $G$ be an infinite compact group acting on $\mathcal{X}$, and let $\mu$ be the Haar measure on $G$.
Then, for every $n \in \N$ there is a bounded kernel $\mathcal{K} \colon G \to \R$ such that $\vcdim(\mathcal{H}(\mathcal{K})) \geq n$. 
\end{theorem}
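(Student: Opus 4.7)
Fix $n \in \N$. The plan is to reduce to the finite-support constructions of Lemmas~\ref{lem:Constr_permutations} or \ref{lem:Constr_permutations_general} via a two-scale smearing: I would replace the atomic indicator functions $\mathds{1}_{\{e\}},\mathds{1}_{\{g\}},\mathds{1}_{\{h'\}}$ from the finite proofs by indicators of neighborhoods at two different scales $W \subset V$, chosen so that the resulting Haar-measure convolutions are approximately flat-top bumps around each evaluation point. Under this arrangement, the ordering argument of the finite proof transfers to the Haar setting with a boundary correction that can be made arbitrarily small.

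Since $G$ is infinite, either $G$ contains an element of order two (apply Lemma~\ref{lem:Constr_permutations}) or every nontrivial $g \in G$ satisfies $g^{2} \neq e$ (apply Lemma~\ref{lem:Constr_permutations_general}). Either way, Lemma~\ref{min_complete_ordering} supplies a complete set of orders $\mathcal{O}$ of size $r = \binom{n}{\lfloor n/2 \rfloor}$, and since $G$ is infinite and Hausdorff we can pick finite subsets $H_1, \dots, H_r \subset G$ of size $n$ satisfying the disjointness hypotheses of the applicable lemma. From the proof of that lemma I extract kernel values $K \colon H \to \R$ on the finite set $H \subset G$, atomic coefficients $a_{i,1},a_{i,2}$ defining $u_i = a_{i,1}\mathds{1}_{e} + a_{i,2}\mathds{1}_{g}$, and thresholds $c_1,\dots,c_r$; Remark~\ref{remark:general_construction} further supplies a uniform positive lower bound $\delta$ on the pairwise gaps $|\hat{\nu}_{\mathcal{K},i}(c_l)-\hat{\nu}_{\mathcal{K},j}(c_l)|$ at each $c_l$, together with the "forbidden-gap" property that no convolution value $(u_{2j}*_G\mathcal{K})(\cdot)$ lies in a small open interval around any $c_l$.

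Using continuity of multiplication and inversion in $G$ together with strict positivity of Haar measure on nonempty open sets, pick a $g$-conjugation-invariant symmetric open neighborhood $V \ni e$ (for instance obtained by intersecting an arbitrary small neighborhood with its conjugate by $g$) so small that all right translates $V^{2}h$ for $h \in H \cup gH \cup g^{-1}H$ are pairwise disjoint. Then choose a $g$-conjugation-invariant symmetric open $W \subset V$ so small that the inner set $V_W := \{h \in G : Wh \subset V\}$ satisfies $\mu(V_W) \geq (1-\eta)\mu(V)$ for a parameter $\eta \in (0,1)$ to be fixed; this $W$ exists because $V_W$ is closed, contains $e$ whenever $W \subset V$, expands as $W$ shrinks, and $\mu(V_W) \to \mu(V)$ by continuity of Haar measure. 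Set
\[
\widetilde{u}_i(t) := \frac{1}{\mu(W)}\bigl(a_{i,1}\mathds{1}_{W}(t) + a_{i,2}\mathds{1}_{Wg}(t)\bigr), \qquad \widetilde{\mathcal{K}}(t) := \sum_{h' \in H} K(h')\,\mathds{1}_{Vh'}(t),
\]
and recover $f_i \colon \mathcal{X} \to \R$ from $\widetilde{u}_i$ via the trivial-kernel hypothesis on the action, as in the paragraph preceding Lemma~\ref{example}.

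A direct computation using left- and right-invariance of Haar measure (compact groups are unimodular) together with the $g$-conjugation-invariance of $V$ shows that $(\widetilde{u}_i *_G \widetilde{\mathcal{K}})(h)$ equals the finite-case value $(u_i *_G \mathcal{K})(h_0) = a_{i,1}K(h_0) + a_{i,2}K(g^{-1}h_0)$ exactly on the flat inner region $V_W h_0$ around each $h_0 \in H$, vanishes outside the enlarged island, and takes intermediate values only on a transition strip of Haar measure at most $\eta\mu(V)$ per island; hence $\nu_{\widetilde{\mathcal{K}},k}(c_l) = \mu(V_W)\,\hat{\nu}_{\mathcal{K},k}(c_l) + E_{k,l}$ with $|E_{k,l}| \leq C\eta\mu(V)$ for a constant $C$ depending only on the fixed finite-case quantities. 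Choosing $\eta$ so small that $2C\eta\mu(V) < \mu(V_W)\delta$ preserves the finite-case order at every threshold $c_l$ simultaneously, so $\mathcal{O} \subset \mathcal{O}(\widetilde{\mathcal{K}})$ in the sense of Definition~\ref{def:setOfOrders}, and Lemma~\ref{lem:ness_suf_cond} then yields that $\mathcal{H}(\widetilde{\mathcal{K}})$ shatters $\{f_1, \dots, f_n\}$, giving $\vcdim(\mathcal{H}(\widetilde{\mathcal{K}})) \geq n$. The main obstacle is this last uniform error estimate: a single-scale smearing ($W = V$) would produce triangular rather than flat-top bumps, under which $\int \relu(\cdot + c)\,d\mu$ becomes a strictly convex-in-peak quantity that can distort the finite-case ordering; the two-scale choice $W \ll V$ is precisely what forces the Haar convolution to be a piecewise-constant perturbation of a scalar multiple of the finite-case quantity, so that orderings persist as soon as $\eta$ is small compared to the gap $\delta$.
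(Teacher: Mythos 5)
Your overall strategy is the same as the paper's: run the finite construction (Lemma~\ref{lem:Constr_permutations}/\ref{lem:Constr_permutations_general} with a finitely supported counting measure, plus the margin and forbidden-gap properties of Remarks~\ref{properties_of_kernel} and \ref{remark:general_construction}), smear the kernel over small disjoint neighbourhoods of the finitely many support points, smear the two-dimensional family of inputs over a small neighbourhood of $\{e,g\}$, and argue that the Haar-integral quantities reproduce the finite-case orderings up to an error that is small relative to the gap. The difference is only in how the error is controlled: the paper replaces your geometric ``flat-top'' analysis by an $L^1$ approximate-identity argument (approximate $\mathcal{K}$ by a continuous function, use uniform continuity on the compact group and the $1$-Lipschitz property of $\relu$), which needs no structure on the neighbourhoods beyond disjointness. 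Your version, as written, has two concrete gaps precisely at the step you call the main obstacle. First, the $g$-conjugation-invariant neighbourhood $V$ is not obtained by ``intersecting a small neighbourhood with its conjugate by $g$'': $V_0\cap gV_0g^{-1}$ is not invariant under conjugation by $g$; one must intersect over all powers of $g$ (equivalently over the compact closure of $\langle g\rangle$), and then openness requires a separate compactness argument, or one invokes a bi-invariant metric, which exists here because compact, Hausdorff, first-countable groups are metrizable. Invariance of $V$ is genuinely needed in your setup, since the plateau condition for the $a_{i,2}$-term reads $Wt\subset gVg^{-1}h_0$ rather than $Wt\subset Vh_0$.

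Second, your transition-strip bound does not follow from the stated choice of $W$. The strip around the island at $h_0$ is $WVh_0\setminus V_Wh_0$, of measure $\mu(WV)-\mu(V_W)$. Your choice of $W$ only guarantees $\mu(V_W)\geq(1-\eta)\mu(V)$; it gives no upper bound on $\mu(WV)$, and as $W$ shrinks $\mu(WV)\to\mu(\overline{V})$, which exceeds $\mu(V)$ by $\mu(\partial V)$ --- a quantity that need not be zero for an arbitrary open $V$ and that does not shrink with $W$. So the claimed bound ``at most $\eta\mu(V)$ per island'' is unjustified, and with it the estimate $|E_{k,l}|\leq C\eta\mu(V)$. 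This is repairable: choose $V$ with $\mu(\partial V)=0$ (e.g.\ a ball for a bi-invariant metric with all but countably many radii admissible, or a level set $\{f>s\}$ of a Urysohn function for all but countably many $s$), after which both halves of your two-scale estimate go through and the rest of your argument (scaling by $\mu(V_W)$, comparison of differences against the gap from Remark~\ref{remark:general_construction}, conclusion via the complete set of orders) is sound. As it stands, though, the error estimate on which the whole reduction rests is not proved.
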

\begin{proof}
We fix $m\in\mathbb{N}$, $g\in G,\ g\neq e$ and $u_0,\dots,u_{2m}$ as in Lemma~\ref{example}, and let $\mathcal{O} \coloneqq \{o_1,\dots,o_r\}$ be a complete set of orders. 

Next, we show how to construct a kernel $\mathcal{K}$, such that $\mathcal{O}\subset \mathcal{O}(\mathcal{K})$, where $\mathcal{O}(\mathcal{K})$ is the set of all orders associated to $u_2, \dots, u_{2m}$ as in Definition \ref{def:setOfOrders}. 

As the first step of the proof, we introduce an auxiliary measure $\mu_1$.
This will be the counting measure on a finite subset of $G$.
To clarify which measure is used in the convolutions in the sequel, we will use in this proof the notation $f*_{\mu_1, G}\mathcal{K}$ for the convolution of a function $f \colon G\to \R$ with the kernel $\mathcal{K} \colon G\to \R$ using the measure $\mu_1$.

As we already mentioned in Remark~\ref{remark:general_construction}, since $G$ is infinite, we can choose finite sets $H_l \subset G ,\ l=1,\dots,r$, define 
\begin{align*}
  H\coloneqq \{e, g\}\bigcup\left(\bigcup_{i=1,p=1}^{i=r,p=m} \{g^{-2}\cdot h_{i,p},\ g^{-1}\cdot h_{i,p},\  h_{i,p},\ g\cdot h_{i,p}\}\right),\text{ where } h_{i,p}\in H_i,  
\end{align*}
and choose $\mu_1$ to be the counting measure on $H$. Then, Lemma~\ref{lem:Constr_permutations_general} can be applied with $\mu_1$ to yield the filter $\widehat{\mathcal{K}}$, such that $\mathcal{O}\subset \mathcal{O}(\widehat{\mathcal{K}})$, where all convolutional operations are integration by measure $\mu_1$. 

However, since $\mu \neq \mu_1$, it is not necessarily the case that $\mathcal{H}(\widehat{\mathcal{K}})$ shatters $u_2, \dots, u_{2m}$. To correct this, we choose an open neighborhood of the identity $U$ such that $h\cdot U\cap h'\cdot U = \emptyset$ for all $h\neq h',\ h, h'\in H$. The existence of $U$ follows from the Hausdorff property of $G$. Indeed, we can construct disjoint open sets $(U_{h})_{h \in H}$ such that $h \in U_h$ for all $h \in H$. Then, we set
\begin{align*}
    U\coloneqq \bigcap_{h\in H} h^{-1}\cdot (h\cdot U_e \cap U_h).
\end{align*}
Since we only take finitely many intersections in the construction of $U$, it is clear that $U$ is open. Moreover, we directly see that $e \in U$. Assuming there exists $h, h' \in H$ such that  $h\cdot U\cap h'\cdot U \neq \emptyset$ implies by construction that
\begin{align}\label{eq:anonemptyintersection}
    h\cdot U_e \cap U_h \cap h' U_e \cap U_{h'} \neq \emptyset.
\end{align}
Clearly, \eqref{eq:anonemptyintersection} can only hold if $h=h'$. This shows that $U$ as desired exists.

We define a kernel $\mathcal{K}$ as a modification of the kernel $\widehat{\mathcal{K}}$: 
\begin{align}
 \mathcal{K}\coloneqq \sum_{h\in H} \widehat{\mathcal{K}}(h)\cdot\mathds{1}_{h\cdot U}.
\end{align}

To continue further, we introduce additional notation.
We define $a_{i}$ and $b_{i}$ as the coefficients $a_{i}  \mathds{1}_e + b_{i}  \mathds{1}_g \coloneqq u_{2i}$ for $i\in[2m]$. Moreover, we define for $h' \in G$
\begin{align*}
    \mathcal{K}_{a_{i}, b_{i}}(h') \coloneqq \sum_{h\in H} (a_{i} \widehat{\mathcal{K}}(h) +  b_{i} \widehat{\mathcal{K}}(g^{-1} \cdot h) )\cdot\mathds{1}_{h\cdot U}(h').
\end{align*}
We set
\begin{align*}
    \tilde{\nu}_{\mathcal{K}, i}:\mathbb{R}\rightarrow\mathbb{R},\ \tilde{\nu}_{\mathcal{K}, i}(c)\coloneqq  \int_{G}\relu(\mathcal{K}_{a_{2i}, b_{2i}}(h) - c)d\mu(h).
\end{align*}
For $(c_j)_{j=1}^r$ defined as in Remark~\ref{remark:general_construction}, we will show now, that for $i\in[m]$ the newly defined $\tilde{\nu}_{\mathcal{K}, i}(c_j)$ are equal to $\mu(U)\cdot \nu_{\widehat{\mathcal{K}}, i}(c_j)$.
Indeed,
\begin{align*}
    \tilde{\nu}_{\mathcal{K}, i}(c_j) &= \int_{G}\relu(\mathcal{K}_{a_{2i}, b_{2i}}(h_1) -c_j)d\mu(h_1)\\
    &=\int_{G}\sum_{h\in H}\left(\relu\left( \left(a_{2i}\widehat{\mathcal{K}}(h)\mathds{1}_{h\cdot U}(h_1) + b_{2i} \widehat{\mathcal{K}}(g^{-1} \cdot h) \cdot\mathds{1}_{h\cdot U}(h_1) \right) -c_j\right)\right)d\mu(h_1)\\
    &=\sum_{h\in H}\left(\int_{G}\relu\left(\left(a_{2i}\widehat{\mathcal{K}}(h) + b_{2i} \widehat{\mathcal{K}}(g^{-1}\cdot h)\right)\cdot\mathds{1}_{h\cdot U}(h_1) -c_j\right)d\mu(h_1)\right)\\
    &= \mu(U)\int_{G}\relu (u_{2i}*_{\mu_1,G}\widehat{\mathcal{K}} - c_j)(h)d\mu_1(h)= \mu(U)\cdot \nu_{\widehat{\mathcal{K}}, i}(c_j),
\end{align*}
where $\nu_{\widehat{\mathcal{K}}, i}$ is as in Definition \ref{def:setOfOrders}. The second equality holds since $a_{2i}\widehat{\mathcal{K}}(h) + b_{2i} \widehat{\mathcal{K}}(g^{-1}\cdot h)$ is greater than $c_j$ only for $h\in \widetilde{H}_r = \bigcup_{l\in[r]} H_l$;
the fourth equality holds since for a Haar measure $\mu(U) = \mu(h\cdot U)$.

From the proven equality, we can conclude that the order of $(\tilde{\nu}_{\mathcal{K}, i}(c_j))_{i\in[m]}$ is $o_j$ for every $j\in[r]$.
Moreover, let
\begin{align*}
    \tilde{m}&\coloneqq\min_{l\in[r],\ i, j\in[m], i\neq j} |\tilde{\nu}_{\mathcal{K}, i}(c_l) - \tilde{\nu}_{\mathcal{K}, j}(c_l)| \\
    &= \min_{l\in[r],\ i, j\in[m], i\neq j} |\nu_{\widehat{\mathcal{K}}, i}(c_l) - \nu_{\widehat{\mathcal{K}}, j}(c_l)|\cdot \mu(U) \\
    &= 
    \min_{l\in[r],\ i, j\in[m], i\neq j} |\hat{\nu}_{\widehat{\mathcal{K}}, i}(c_l) - \hat{\nu}_{\widehat{\mathcal{K}}, j}(c_l)|\cdot \mu(U)
    ,
\end{align*}
where $\hat{\nu}$ is defined in Lemma~\ref{lem:prepre_Constr_permutations}. 

It holds that $\tilde{m} > 0$, since by Remark~\ref{remark:general_construction} 
$$
    \min_{l\in[r],\ i, j\in[m], i\neq j}|\hat{\nu}_{\widehat{\mathcal{K}}, i}(c_l) - \hat{\nu}_{\widehat{\mathcal{K}}, j}(c_l)| > 0.
$$

We finish the proof, by modifying $u_i,\ i\in[2m]$ to yield $\widetilde{u}_i$, $i\in[2m]$, and showing that $\widetilde{u}_{2i}$, $i\in[m]$ are shattered by $\mathcal{H}(\mathcal{K})$.

We claim that for every $\delta >0$ there exists open sets $\widetilde{U}_{\delta}$ such that for all $i \in [2m]$
\begin{align} \label{eq:claimUTilde}
    \norm{\mu(\widetilde{U}_{\delta})^{-1}(a_{i}\mathds{1}_{\widetilde{U}_{\delta}^{-1}} + b_{i}\mathds{1}_{(g \cdot  \widetilde{U}_{\delta})^{-1}})*_{G}\mathcal{K} - \mathcal{K}_{a_{i}, b_{i}}}_{L^1} \leq \delta.
\end{align}
Before we prove \eqref{eq:claimUTilde}, we show how it yields the claim. We fix 
$\delta \coloneqq \tilde{m}/{4},
$
and denote the functions $\mu(\widetilde{U}_{\delta})^{-1}(a_{i}\mathds{1}_{\widetilde{U}_{\delta}^{-1}} + b_{i}\mathds{1}_{(g\cdot  \widetilde{U}_{\delta})^{-1}})*_{G}\mathcal{K}$ as $\widetilde{\mathcal{K}}_{a_{i},b_{i}}$, and $\widetilde{u}_{i}\coloneqq \mu(\widetilde{U}_{\delta})^{-1}(a_{i}\mathds{1}_{\widetilde{U}_{\delta}^{-1}} + b_{i}\mathds{1}_{(g\cdot  \widetilde{U}_{\delta})^{-1}})$.

We prove that $\mathcal{H}(\mathcal{K})$ shatters $(\widetilde{u}_{2i})_{i\in[m]}$ by showing that for each $j\in[r]$ the order of $(\int_G \relu(\widetilde{\mathcal{K}}_{a_{2i},b_{2i}}(g) -c_j)d\mu(g))_{i\in[m]}$ is equal to $o_j$. 
Let $i, j\in[m]$, and $l\in[r]$, such that $\tilde{\nu}_{\mathcal{K}, i}(c_l) - \tilde{\nu}_{\mathcal{K}, j}(c_l) \geq \tilde{m} > 0$. 
Then, we can conclude that 
\begin{align*}
&\int_G \relu(\widetilde{\mathcal{K}}_{a_{2i},b_{2i}} - c_l)(h)d\mu(h) - \int_G \relu(\widetilde{\mathcal{K}}_{a_{2j},b_{2j}} - c_l)(h)d\mu(h) \\
&=  \left(\int_G \relu(\mathcal{K}_{a_{2i},b_{2i}} - c_l)(h)d\mu(h) - \int_G \relu(\mathcal{K}_{a_{2j},b_{2j}} - c_l)(h)d\mu(h)\right)\\
& \qquad - \left(\int_G \relu(\widetilde{\mathcal{K}}_{a_{2j},b_{2j}} - c_l)(h)d\mu(h) - \int_G \relu(\mathcal{K}_{a_{2j},b_{2j}} - c_l)(h)d\mu(h)\right)\\
& \qquad \qquad + \left(\int_G \relu(\widetilde{\mathcal{K}}_{a_{2i},b_{2i}} - c_l)(h)d\mu(h) - \int_G \relu(\mathcal{K}_{a_{2i},b_{2i}} - c_l)(h)d\mu(h)\right)\\
&\geq  \tilde{\nu}_{\mathcal{K}, i}(c_l) - \tilde{\nu}_{\mathcal{K}, j}(c_l)  - \|\widetilde{\mathcal{K}}_{a_{2j},b_{2j}} - \mathcal{K}_{a_{2j},b_{2j}}\|_{L^1} - \|\widetilde{\mathcal{K}}_{a_{2i},b_{2i}} - \mathcal{K}_{a_{2i},b_{2i}}\|_{L^1}\\
&\geq \tilde{m} - \frac{\tilde{m}}{4} - \frac{\tilde{m}}{4} = \frac{\tilde{m}}{2} > 0,
\end{align*}
where we used the linearity and monotonicity of the integral and the 1-Lipschitz property of the ReLU in the first inequality.

Thus, for each $j\in[r]$ the order of $(\int_G \relu(\widetilde{\mathcal{K}}_{a_{2i},b_{2i}}(g) -c_j)d\mu(g))_{i\in[m]}$ is equal to the order of $(\tilde{\nu}_{\mathcal{K}, i}(c_j))_{i\in[m]}$, which is equal to $o_j$.

We complete the proof by showing \eqref{eq:claimUTilde}.
First note, that since $G$ is assumed to be first-countable, there exists a sequence of neighborhoods of $e$, denoted by $(N_i)_{i\in \N}$, such that 
$$
    N_1 \supset N_2 \supset N_3 \supset \dots
$$
and, for all neighborhoods $Z$ of $e$, there exists $k \in \N$ such that $N_k \subset Z$. As a consequence, we have that 
\begin{align}\label{eq:constructionOfFilter}
    \bigcap_{i=1}^\infty N_i = \{e\}.
\end{align}
Indeed, assuming that there exists $g \in G, g \neq e$ with 
$g \in N_i$ for all $i \in \N$ yields a contradiction by invoking the Hausdorff property of $G$. 
Concretely, by the Hausdorff property, we have that there exists $U$ with $e \in U$, $g \not \in U$. Hence, there exists $k$ such that $N_k \subset U$ which implies $g \not \in N_k$ and produces the contradiction. 

Since $\mu$ is finite, we conclude that $\mu(\{e\}) = 0$. Moreover, since $\mu$ is a Borel measure, we have by \eqref{eq:constructionOfFilter} that 
$\mu(N_k) \to 0$ for $k \to \infty$. 

In addition, it follows from the continuity of the multiplication that there exists $(Q_k)_{k\in \N}$ such that $Q_k$ is open, $Q_k \subset N_k$ and $Q_k\cdot Q_k \subset N_k$ for all $k \in \N$.

Next, observe that to show \eqref{eq:claimUTilde}, it follows by the triangle inequality, linearity of the convolution, and the translation invariance of the Haar measure, that it suffices to show that for every $\delta >0$ there exists $\widetilde{U}_{\delta}$ such that for all $|a| \leq A \in \R^+$
\begin{align}\label{eq:simplificationofConvolutionStatement}
\norm{a \mu(\widetilde{U}_{\delta})^{-1}\mathds{1}_{\widetilde{U}_{\delta}^{-1}}*_{G}\mathcal{K} - a\mathcal{K}}_{L^1} \leq \delta/2.
\end{align}

Since for finite Borel measures on a compact set $G$, the set of continuous functions is dense in $L^1(G)$ (see \cite[Theorem 7.9]{follandRealAnalysis}), we can replace (also using Young's convolution inequality) $\mathcal{K}$ by a continuous approximation $\mathcal{K}^{\rm cont} \colon G \to \R$ such that 
\begin{align}\label{eq:simplificationofConvolutionStatement2}
\norm{a\mu(\widetilde{U}_{\delta})^{-1}\mathds{1}_{\widetilde{U}_{\delta}^{-1}}*_{G}\mathcal{K} - a\mathcal{K}}_{L^1} \leq \norm{a \mu(\widetilde{U}_{\delta})^{-1}\mathds{1}_{\widetilde{U}_{\delta}^{-1}}*_{G}\mathcal{K}^{\rm cont} - a\mathcal{K}^{\rm cont}}_{L^1} + \delta/4.
\end{align}
The proof is completed by observing that
\begin{align}
    &\norm{a \mu(\widetilde{U}_{\delta})^{-1}\mathds{1}_{\widetilde{U}_{\delta}^{-1}}*_{G}\mathcal{K}^{\rm cont}- a \mathcal{K}^{\rm cont}}_{L^1}\nonumber\\
    &= |a|\int_{G} \left| \mu(\widetilde{U}_{\delta})^{-1} \int_{\widetilde{U}_{\delta}\cdot g} \mathcal{K}^{\rm cont}(h) - \mathcal{K}^{\rm cont}(g) d\mu(h)\right| d\mu(g).\label{eq:weFirstNeedToProveUniformContinuity}
\end{align}
Since $\mathcal{K}^{\rm cont}$ is continuous there exists for all $g \in G$ an open set set $Q_{k,g} \ni e$ such that $|\mathcal{K}^{\rm cont}(h) - \mathcal{K}^{\rm cont}(g)| \leq \delta/(8 A \mu(G))$ for all $h \in Q_{k,g} \cdot Q_{k,g} \cdot g$. 
Since $(Q_{k,g} \cdot g)_{g \in G}$ is an open cover of $G$, we can choose a finite subcover, $(Q_{k_i,g_i} \cdot {g_i})_{i=1}^K$ for $K \in \N$. 

We set 
$$
\widetilde{U}_{\delta} =  \bigcap_{i=1}^K Q_{k_i,g_i}
$$
and observe that $\widetilde{U}_{\delta}$ is an open neighborhood of $e$. Moreover, per construction $\widetilde{U}_{\delta} \cdot {g} \subset Q_{k_i,g_i} \cdot Q_{k_i,g_i} \cdot {g_i}$ for some $i \in [K]$. 
Therefore, 
\begin{align}\label{eq:ufhiasdg}
    |\mathcal{K}^{\rm cont}(h) - \mathcal{K}^{\rm cont}(g_i)| \leq \delta/(8 A \mu(G))
\end{align}
for all $h \in \widetilde{U}_{\delta} \cdot g$. Since \eqref{eq:ufhiasdg} holds in particular for $h = g$, we conclude by the triangle inequality that 
$|\mathcal{K}^{\rm cont}(h) - \mathcal{K}^{\rm cont}(g)| \leq \delta/(4 A \mu(G))$
for all $h \in \widetilde{U}_{\delta} \cdot g$.

We conclude with \eqref{eq:weFirstNeedToProveUniformContinuity} that
\begin{align*}
\norm{a \mu(\widetilde{U}_{\delta})^{-1}\mathds{1}_{\widetilde{U}_{\delta}^{-1}}*_{G}\mathcal{K}^{\rm cont}- a\mathcal{K}^{\rm cont}}_{L^1} \leq \delta/4,
\end{align*}
which yields \eqref{eq:simplificationofConvolutionStatement2}, and hence \eqref{eq:simplificationofConvolutionStatement} which implies \eqref{eq:claimUTilde} and completes the proof.
\end{proof}
\bibliographystyle{abbrv}
\bibliography{references}
\end{document}